
\documentclass{article}

\usepackage{microtype}
\usepackage{graphicx}
\usepackage{booktabs} 

\usepackage{multirow}
\usepackage[table]{xcolor}
\usepackage{colortbl}

\usepackage{hyperref}

\usepackage{subcaption} 



\usepackage[accepted]{icml2026}

\usepackage{amsmath}
\usepackage{amssymb}
\usepackage{mathtools}
\usepackage{amsthm}

\usepackage[capitalize,noabbrev]{cleveref}

\theoremstyle{plain}
\newtheorem{theorem}{Theorem}[section]

\newtheorem{lemma}[theorem]{Lemma}

\theoremstyle{definition}

\newtheorem{assumption}[theorem]{Assumption}
\theoremstyle{remark}
\newtheorem{remark}[theorem]{Remark}

\usepackage[textsize=tiny]{todonotes}

\icmltitlerunning{Test Prediction Variance}

\begin{document}

\twocolumn[
\icmltitle{TPV: Parameter Perturbations Through the Lens of Test Prediction Variance}



\icmlsetsymbol{equal}{*}

\begin{icmlauthorlist}
\icmlauthor{Devansh Arpit}{yyy}
\end{icmlauthorlist}

\icmlaffiliation{yyy}{Modelable AI}

\icmlcorrespondingauthor{Devansh Arpit}{devansharpit@gmail.com, devansh@modelable.ai}

\icmlkeywords{TPV, test prediction variance, bias variance, generalization, robustness, sgd, label noise}

\vskip 0.3in
]

\printAffiliationsAndNotice{}




\begin{abstract}
We introduce \emph{test prediction variance} (TPV)—the first-order sensitivity 
of a trained model's outputs to parameter perturbations—as a unifying framework 
for analyzing post-training robustness. TPV's trace form $\mathrm{Tr}(H_{\mathrm{eff}}C)$ separates the geometry of the trained model $H_{\mathrm{eff}}$ from the perturbation covariance $C$, placing SGD noise, label noise, quantization, and pruning under 
a single lens. The resulting expressions recover the wide-minima hypothesis for SGD and quantization noise, and yield a distinct Jacobian-spectral characterization for label noise connecting label-noise TPV with benign overfitting in nonlinear networks.

Theoretically, we prove that training-set TPV converges to its test-set counterpart in the overparameterized limit, irrespective of generalization performance, providing the first result that prediction variance under local parameter perturbations can be inferred from training inputs alone. Empirically, this stability holds far more broadly, including at very low widths. Further, TPV correlates well with test loss, enabling practical applications: JBR, a label-free pruning criterion derived from TPV geometry matching state-of-the-art baselines; and training-set based model selection signal for in-distribution and transfer learning scenarios. \href{https://github.com/devansharpit/TPV/tree/main}{Code Available Here}
\end{abstract}

\section{Introduction}

A central challenge in deep learning is understanding the robustness of a 
\emph{specific trained model} to the perturbations it faces in practice: 
stochastic gradient noise near convergence, finite-precision arithmetic, label 
noise during fine-tuning, or post-training modifications such as pruning. 
Existing theoretical perspectives—wide minima~\citep{hochreiter1997flat,keskar2017large}, 
implicit optimization bias~\citep{mandt2017stochastic,smith2018bayesian,chaudhari2018stochastic,soudry2018implicit,zhang2021understanding}, 
benign overfitting~\citep{belkin2019reconciling,bartlett2020benign,belkin2020two}, 
and NTK theory~\citep{jacot2018neural,adlam2020neural}—each illuminate part of the puzzle, 
but they operate through different analytical lenses and are rarely tied to a 
single quantity that directly governs the test-set behavior of a fixed, trained 
model under realistic post-training noise.

\begin{figure}[t]
    \centering
    \includegraphics[width=0.4\textwidth]{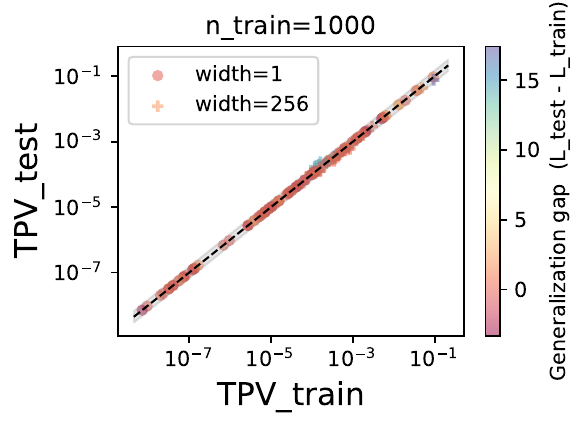}
    \caption{\textbf{TPV stability on synthetic data:}
    Each point corresponds to one synthetic configuration (dataset type, input dimension, network width, depth) and one perturbation source (label noise; SGD noise). Axes show empirical TPV on the training and test sets; $y=x$ is the ideal reference line along with a gray colored $50\%$ error band; colormap indicates generalization gap $L_{\text{test}} - L_{\text{train}}$. We ran 324 configurations from label-noise and SGD-noise, spanning more than five orders of magnitude in TPV and different levels of generalization gaps. Despite this heterogeneity, all points concentrate tightly around the diagonal $\mathrm{TPV}_{\text{train}} = \mathrm{TPV}_{\text{test}}$; most surprisingly, even for width$=1$. This demonstrates that: i) TPV stability holds true even at extremely low widths; ii) TPV stability is decoupled from generalization.}
    \label{fig:tpv_synth_trace_stability}
    \vspace{-10pt}
\end{figure}

These perspectives share a common limitation: they ask \emph{which} $w^\star$ the 
optimizer finds or prefers — through implicit bias, risk minimization, flat-minima 
selection, or infinite-width training dynamics — rather than characterizing the local robustness of a \emph{given} $w^\star$ to the specific perturbations it faces \textit{after} training. In practice, all of the perturbation sources above act 
\emph{locally} around a fixed trained solution, not by retraining from scratch. 
This motivates a shift from global notions of prediction variability to a 
\emph{local} quantity that measures the test sensitivity of $f(x; w^\star)$.

We formalize this sensitivity through the \emph{test prediction variance} (TPV): the local variance of a trained model's predictions under parameter perturbations. Crucially, note that this is different from global prediction variance (see Section \ref{sec:framework}). Under a first-order approximation, TPV reduces to a compact trace form
\begin{equation}
\mathrm{TPV}(w) \approx \mathrm{Tr}\!\big(H_{\mathrm{eff}}\, C \big),
\end{equation}
where $H_{\mathrm{eff}}$ is second moment of the output-parameter Jacobian and $C=\mathbb{E}[\delta w \delta w^\top]$ is the perturbation covariance.
This decomposition separates a \emph{label-free geometric factor} $H_{\mathrm{eff}}$ from the
\emph{noise mechanism} encoded in $C$ and reveals that diverse perturbations—including SGD
noise, label noise, quantization noise, and pruning masks—all influence test predictions through
the same curvature–covariance interaction.  
Variants of $\mathrm{Tr}(H_{\mathrm{eff}}C)$ appear in analyses of SGD
dynamics~\citep{zhu2018anisotropic, thomas2020interplay, bar2024expected}. However, prior work does not
interpret this object as a \emph{predictive variance functional} nor use it as a unifying lens for heterogeneous real-world perturbations.

The key modeling insight is that \emph{all} of these noise mechanisms can be 
well-approximated, near a minimum, as inducing small zero-mean parameter 
perturbations. Once this step is accepted, the first-order TPV expression 
$\mathrm{Tr}(H_{\mathrm{eff}}C)$ applies uniformly: every perturbation type 
acts through the same geometric factor $H_{\mathrm{eff}}$ and is distinguished 
only by its covariance $C$. Our contributions are as follows:

\begin{enumerate}
    \item \textbf{Test Prediction Variance (TPV) as a unified perturbation lens:}  
    We formalize TPV as a local prediction-variance functional and show that SGD noise, label
    noise, quantization, and pruning all influence test robustness through the same trace form
    $\mathrm{Tr}(H_{\mathrm{eff}}C)$.

    \item \textbf{TPV Trace Stability:} We prove that, in overparameterized networks, training-set TPV converges to test-set TPV, providing the first theoretical result showing that logit prediction variance under parameter perturbations, when evaluated on training-set inputs, is a reliable test-time estimator, irrespective of the model's generalization performance. Empirically, we find that TPV stability holds true even at very low widths and only breaks if either the number of samples are low or the induced perturbations are too large.

    \item \textbf{Correlation with Test Loss:} We find that empirical TPV estimates correlate well with test loss. Together with TPV stability result, TPV unlocks training set based model-selection capabilities based on robustness and generalization behavior under targeted post-training noise source (e.g. label noise during fine-tuning, quantization, etc.) without access to test labels.

    \item \textbf{Noise Sources:} We analytically derive TPV under label noise, SGD noise at convergence, and quantization. For label noise, Theorem~\ref{thm:tpv-label} recovers benign overfitting in linear models~\citep{bartlett2020benign} as a special case and extends it to nonlinear networks via Jacobian geometry, explaining why overparameterization reduces label-noise sensitivity. For SGD and quantization noise, TPV recovers the wide-minima hypothesis. 

    \item \textbf{Applications:}  
    We introduce JBR—a practical pruning criterion derived from TPV geometry—that matches or exceeds state-of-the-art baselines on CIFAR-10/100 and ImageNet. We further show four frequently used deployment scenarios where training-set based empirical TPV estimate can be used as a model-selection criteria without a test-set.
\end{enumerate}

\section{Relation to Prior Work}
\label{sec:related}

Work on double descent and benign overfitting explains how heavily overparameterized models can interpolate noisy data yet still generalize, typically by analyzing the test risk of minimum-norm or ridgeless interpolators as a function of model capacity or feature spectrum \citep{belkin2019reconciling,bartlett2020benign}. This line of work focuses on global risk curves and asymptotic regimes; in particular, the benign overfitting result for linear models is recovered as a special case of the TPV framework (Remark~\ref{remark_tpv_noise_linear}), which Theorem~\ref{thm:tpv-label} extends to nonlinear networks by replacing the data matrix with the output-parameter Jacobian and characterizing label-noise sensitivity through its spectral geometry.

Work on implicit optimization bias studies how SGD dynamics select particular solutions among many interpolators — either by modeling SGD as sampling from a local Gibbs distribution \citep{mandt2017stochastic,smith2018bayesian,chaudhari2018stochastic}, or by showing convergence to structured solutions such as max-margin classifiers \citep{soudry2018implicit}. Similarly, classical work on wide/flat minima \citep{hochreiter1997flat,keskar2017large} and sharpness-aware training methods such as SAM \citep{foret2020sharpness} are concerned with how loss-landscape geometry correlates with generalization. These works characterize \textit{which} $w^\star$ the optimizer finds. TPV takes a complementary perspective: given a fixed $w^\star$, it asks how test predictions vary under realistic parameter perturbations around it, and is agnostic to how $w^\star$ was reached.

Classical bias–variance analyses decompose prediction error into contributions from the mean predictor and its variability under retraining \citep{geman1992neural,friedman1997biasvariance}. \citet{neal2018modern} and \citet{yang2020rethinking} show that in wide networks both global bias and variance can decrease together with width, and \citet{adlam2020understanding} provide a fine-grained decomposition identifying regimes where both are small. Most relevantly, \citet{bordelon2023dynamics} analyze prediction fluctuations in finite-width networks via dynamical mean-field theory, characterizing variance arising from kernel fluctuations \textit{during training}. All of these are global notions, averaging over the distribution of solutions produced by the learning algorithm. In contrast, TPV is a \emph{local}, post-training quantity measuring how predictions at a \emph{fixed} $w^\star$ vary under realistic parameter perturbations; it exhibits a systematic positive correlation with test error in the low training loss regime and typically an inverse correlation in the high training loss regime. A potentially similar two-phase structure has been observed in information bottleneck analyses of training dynamics \citep{tishby2015learning,shwartz2017opening}, where post-fit training acts as a compression phase.

Our TPV trace-stability theorem (Theorem~\ref{thm:tpv-trace-stability}) uses NTK stability \citep{jacot2018neural,allen2019convergence} to show that $\mathrm{Tr}(H_{\mathrm{eff}}\,C)$ estimated on the training set converges to its test-set counterpart in the overparameterized limit — providing theoretical grounding for training-set-based TPV estimation in finite networks.

\section{Test Prediction Variance}
\label{sec:framework}
Classical bias--variance analyses \citep{geman1992neural, 
friedman1997biasvariance} study the \emph{global} prediction 
variance $\operatorname{Var}_w[f_{w}(x)]$ obtained by retraining the model under 
different sources of randomness (e.g., parameter initialization, stochastic 
gradient noise, stochastic regularization such as dropout). For instance, 
\citet{neal2018modern} revisit this global notion for deep networks, analyzing the 
variability induced by initialization, data sampling, and optimization. 
Accordingly, global variance provides insight into properties of the \emph{learning 
algorithm}---such as algorithmic stability, the double descent phenomenon, and 
the benefits of ensembling.

In contrast, test prediction variance (TPV) is a \emph{localized} quantity 
that measures the effect of infinitesimal weight-space perturbations $\delta w$ around a 
\emph{fixed} solution $w^\star$:
\begin{equation}
    \operatorname{TPV} := \mathbb{E}_{x,\,\delta w}\!\left[ 
        \big\| f_{w^\star + \delta w}(x) - f_{w^\star}(x) \big\|^2 
    \right].
    \label{eq_tpv_orig_defn}
\end{equation}
This local perspective is more directly appropriate for understanding robustness 
to realistic noise sources near or after convergence---label noise during fine-tuning, 
SGD stationary noise, quantization noise, dropout during inference, 
post-training pruning masks---all of which act as \emph{perturbations around 
a trained model} rather than as full retraining procedures. In contrast, global variance averages over the entire distribution of solutions obtainable by the learning algorithm. The two quantities therefore capture fundamentally 
different notions of variability.

We now formalize test prediction variance and derive a compact spectral expression that will be reused in all settings.

\subsection{Our starting point: expected test error}

We posit that the relevant quantity is the \emph{expected test error} of the
trained model $w^\star$, decomposed into a model-bias term and a 
\emph{test prediction variance} term that captures the effect of small 
parameter perturbations around $w^\star$.

Let $\delta w$ denote a zero-mean parameter perturbation drawn from a 
distribution $R$ that models the sources of noise acting near or after 
convergence (label noise at convergence, SGD stationary noise, finite 
precision, pruning masks, etc.), denote its covariance matrix by $C :=\mathbb{E}_R[\delta w \,\delta w^\top] \in \mathbb{R}^{p \times p}$. The perturbed predictor is 
$f_{w^\star+\delta w}$.
For a test pair $(x,y)$ with $y = f^\star(x)$ (noiseless labels), the expected test error is
\[
\mathcal{E}_{\mathrm{test}}
= \mathbb{E}_{x,y,R}\big[(f_{w^\star+\delta w}(x) - y)^2\big].
\]
Expanding this quantity and using $\mathbb{E}_R[\delta w]=0$ gives the 
decomposition
\begin{align}
\mathcal{E}_{\mathrm{test}}
&=
    \operatorname{TPV}
    +
   \mathbb{E}_x[ \underbrace{\big(f_{w^\star}(x)-f^\star(x)\big)^2}_{
        \text{bias}^2
    }].
\label{eq:test-error-decomp}
\end{align}

where for a fixed test point $x$, we assume $f_w(x)$ is smooth in $w$ and linearize around ${w^\star}$:
\begin{equation}
    f_{w^* + \delta w}(x)
    \approx f_{w^*}(x) + J(x) \delta w,
    \label{eq:local-lin}
\end{equation}
where
\begin{equation}
    J(x) := \nabla_w f_{w^*}(x) \in \mathbb{R}^{1 \times p}
\end{equation}
is the Jacobian of the model output with respect to parameters at $w^* \in \mathbb{R}^p$. Under this approximation,
\begin{equation}
\begin{split}
    \operatorname{TPV}
    &\approx \mathbb{E}_{x, R}[(J(x)\delta w)^2]\\
    &= \mathbb{E}_{x, R}[\mathrm{Tr}(J(x)^\top J(x)\delta w \delta w^\top)]\\
    &= \mathrm{Tr}(\mathbb{E}_{x}[J(x)^\top J(x)]\mathbb{E}_{R}[\delta w \delta w^\top])\\
    &= \mathrm{Tr}(H_{\mathrm{eff}} C),
    \label{eq:trace-HC}
\end{split}
\end{equation}
where
\begin{equation}
    H_{\mathrm{eff}} := \mathbb{E}_x[J(x)^\top J(x)] \in \mathbb{R}^{p \times p}
    \label{eq_heff}
\end{equation}
is the second moment of the Jacobian. Note that $H_{\mathrm{eff}}$ is not the Hessian in general, though it does become equivalent to the Hessian under special circumstances. Further, note that $H_{\mathrm{eff}}$ is taking expectation on the test distribution, while the second moment of the parameter perturbations $C$ either depends on the training dataset (e.g. label noise, SGD stationary noise) or is data-agnostic (e.g. quantization).

Eq.~\eqref{eq:trace-HC} is the central object in this paper: TPV, and in turn the expected test error (due to Eq. (\ref{eq:test-error-decomp})), is controlled by the interaction between the spectrum of $H_{\mathrm{eff}}$ and the perturbation covariance $C$. See Appendix \ref{app_scalar_vector_tpv} for TPV equation under scalar vs vector output models.

\subsection{TPV Trace Stability}
\label{sec_assumption}
In this section, we prove that for trained overparameterized networks of width $m \rightarrow \infty$, TPV estimated entirely using the training dataset acts as a good estimator of the test set TPV object, irrespective of a model's generalization performance. The analysis relies on simplifying assumptions (e.g., isotropic parameter perturbations) that enable theoretical tractability and serve to motivate the empirical TPV stability observed in more general settings.

Denote by $X_{\mathrm{tr}} = \{x_i\}_{i=1}^{n}$ and $X_{\mathrm{te}} = \{x_i\}_{i=1}^{n_{\mathrm{te}}}$ training and test datasets sampled i.i.d. from the same underlying data distribution $\mathcal{D}$.
Let $w^\star$ be the parameter vector obtained after training a deep network of width $m$ on $X_{\mathrm{tr}}$. We denote the estimator of $H_{\mathrm{eff}}$ on the dataset $X \in \{ X_{\mathrm{tr}}, X_{\mathrm{te}} \}$ as:
\[
H_{\mathrm{eff}}(w^\star;X)
\;:=\;
\frac{1}{|X|}
\sum_{x_i \in X}
J(x_i;w^\star)^\top J(x_i;w^\star).
\]
\begin{theorem}[TPV Trace Stability]
\label{thm:tpv-trace-stability}
The following upper-bound holds for over-parameterized networks:
\begin{equation}
\bigl|\,
  \mathrm{TPV}(w^\star;X_{\mathrm{tr}})
  - \mathrm{TPV}(w^\star;X_{\mathrm{te}})
\,\bigr|
\le
c_1 \,\mathrm{Tr}(C).
\end{equation}
where $c_1:=\frac{(n_{\mathrm{tr}}+n_{\mathrm{te}})}{p}\,\varepsilon_{\mathrm{NTK}} 
~+~ o_{n_{\mathrm{tr}},n_{\mathrm{te}}}(1)$ and $C$ is the parameter perturbation covariance matrix (assumed to be isotropic), such that $\varepsilon_{\mathrm{NTK}} \rightarrow 0$ as $m \rightarrow \infty$ and $o_{n, n_{\mathrm{te}}}(1) \rightarrow 0$ as $n,n_{\mathrm{te}}\to\infty$.
\end{theorem}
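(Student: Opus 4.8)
The plan is to expand the left-hand side using the trace form $\mathrm{TPV}(w^\star;X) = \mathrm{Tr}(H_{\mathrm{eff}}(w^\star;X)\,C)$ from Eq.~\eqref{eq:trace-HC}, so that under the isotropy assumption $C = \sigma^2 I_p$ with $\mathrm{Tr}(C) = p\sigma^2$, the difference collapses to
\[
\bigl|\mathrm{TPV}(w^\star;X_{\mathrm{tr}}) - \mathrm{TPV}(w^\star;X_{\mathrm{te}})\bigr|
= \sigma^2 \bigl|\mathrm{Tr}\!\bigl(H_{\mathrm{eff}}(w^\star;X_{\mathrm{tr}}) - H_{\mathrm{eff}}(w^\star;X_{\mathrm{te}})\bigr)\bigr|.
\]
Since $\mathrm{Tr}(J(x)^\top J(x)) = \|J(x)\|^2 = J(x)J(x)^\top$, which is exactly the empirical NTK diagonal entry $\Theta(x,x;w^\star)$, the trace of the $H_{\mathrm{eff}}$ estimator on a dataset $X$ equals $\frac{1}{|X|}\sum_{x_i\in X}\Theta(x_i,x_i;w^\star)$. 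So the quantity to control is the gap between the average NTK self-kernel over the training inputs and over the test inputs.

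The next step invokes the overparameterized (large-width) regime: as $m\to\infty$, the empirical NTK concentrates around a deterministic kernel $\Theta^\infty(x,x')$ that depends only on the inputs and the architecture, not on $w^\star$; write $\Theta(x,x;w^\star) = \Theta^\infty(x,x) + \eta_m(x)$ with $\sup_x|\eta_m(x)| \le \varepsilon_{\mathrm{NTK}} \to 0$. Substituting and using the triangle inequality bounds the NTK-gap by $\bigl|\frac1{n_{\mathrm{tr}}}\sum_i \Theta^\infty(x_i,x_i) - \frac1{n_{\mathrm{te}}}\sum_j \Theta^\infty(x_j,x_j)\bigr| + 2\varepsilon_{\mathrm{NTK}}$. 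Because $X_{\mathrm{tr}}$ and $X_{\mathrm{te}}$ are i.i.d.\ from $\mathcal{D}$, each empirical average is a sample mean of the bounded random variable $\Theta^\infty(x,x)$, so by the law of large numbers (or a concentration bound, yielding the explicit $o_{n,n_{\mathrm{te}}}(1)$ rate) both converge to $\mathbb{E}_{x\sim\mathcal{D}}[\Theta^\infty(x,x)]$, and their difference is $o_{n_{\mathrm{tr}},n_{\mathrm{te}}}(1)$. Collecting terms gives the NTK-gap $\le 2\varepsilon_{\mathrm{NTK}} + o_{n_{\mathrm{tr}},n_{\mathrm{te}}}(1)$; multiplying by $\sigma^2 = \mathrm{Tr}(C)/p$ and folding the factor of two and the sample-size constant into $c_1 = \frac{n_{\mathrm{tr}}+n_{\mathrm{te}}}{p}\varepsilon_{\mathrm{NTK}} + o_{n_{\mathrm{tr}},n_{\mathrm{te}}}(1)$ recovers the stated bound. (The precise matching of the $(n_{\mathrm{tr}}+n_{\mathrm{te}})/p$ prefactor presumably comes from tracking the per-sample NTK fluctuation scale rather than a uniform sup-bound, and the combinatorial bookkeeping there is routine once the structure above is in place.)

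The main obstacle is making the NTK concentration step rigorous and uniform: standard NTK results control $\Theta(x,x';w^\star)$ at initialization, whereas here $w^\star$ is a trained parameter, so one needs that the weights stay within the lazy-training ball (small movement $\|w^\star - w_0\| = O(1)$ while the Jacobian perturbation is $O(1/\sqrt{m})$), which is where the $\varepsilon_{\mathrm{NTK}}\to 0$ statement and the ``over-parameterized networks'' hypothesis are really doing the work; this is also the step most sensitive to the simplifying assumptions flagged before the theorem. A secondary subtlety is that $\Theta^\infty(x,x)$ must be shown bounded (or at least have finite mean/variance under $\mathcal{D}$) for the law-of-large-numbers step — typically true for bounded inputs and standard nonlinearities, but worth stating as a regularity condition. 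Everything downstream of the NTK concentration is elementary linear algebra and a two-term triangle inequality.
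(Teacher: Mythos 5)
Your proof is essentially sound and would deliver a bound of the claimed form, but it takes a genuinely different route from the paper, and the difference is worth spelling out. The paper's proof anchors everything at the initialization $w_0$ via a three-term triangle inequality: $|\mathrm{TPV}(w^\star;X_{\mathrm{tr}})-\mathrm{TPV}(w_0;X_{\mathrm{tr}})| + |\mathrm{TPV}(w_0;X_{\mathrm{tr}})-\mathrm{TPV}(w_0;X_{\mathrm{te}})| + |\mathrm{TPV}(w_0;X_{\mathrm{te}})-\mathrm{TPV}(w^\star;X_{\mathrm{te}})|$. The two outer terms are controlled by an \emph{operator-norm} NTK stability assumption on the empirical Gram matrices, $\|G_X(w^\star)-G_X(w_0)\|_{\mathrm{op}}\le\varepsilon_{\mathrm{NTK}}$, converted to a trace bound via $|\mathrm{Tr}(A)|\le n_X\|A\|_{\mathrm{op}}$ --- this conversion is the sole origin of the $(n_{\mathrm{tr}}+n_{\mathrm{te}})/p$ prefactor, not the ``per-sample fluctuation scale'' you speculate about. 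The middle term is handled by the law of large numbers applied to $H_{\mathrm{eff}}(w_0;X)$ with $w_0$ held fixed; the paper never invokes the infinite-width deterministic kernel $\Theta^\infty$ at all. Your decomposition instead compares the trained NTK diagonal directly to $\Theta^\infty(x,x)$ via a uniform sup-norm bound $\sup_x|\Theta(x,x;w^\star)-\Theta^\infty(x,x)|\le\varepsilon_{\mathrm{NTK}}$, which buys you a constant of $2$ in place of $n_{\mathrm{tr}}+n_{\mathrm{te}}$ --- a strictly tighter bound --- but at the cost of a stronger and harder-to-source hypothesis: you need concentration of the finite-width kernel around its deterministic limit \emph{uniformly over inputs} (including test points never seen in training) \emph{and} stability under training, whereas the paper only assumes Gram-matrix drift during training on two fixed finite sets plus LLN over the data. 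Since the theorem statement leaves $\varepsilon_{\mathrm{NTK}}$ only loosely specified, your version is a legitimate proof of a statement of the same form, but it does not reproduce the paper's constant, and you have correctly identified (but not closed) the place where the real work lives, namely justifying the NTK concentration at the trained parameter $w^\star$. One further small point: your remark that $\mathrm{Tr}(J(x)^\top J(x)) = J(x)J(x)^\top$ is only an identity because the output is scalar ($J(x)\in\mathbb{R}^{1\times p}$); in the vector-output case you would need $\mathrm{Tr}(J(x)J(x)^\top)$, which still equals the NTK diagonal trace, so the argument survives but the wording should be adjusted.
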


See Appendix \ref{appendix:jac-conc-assump} for proof.
This result is important because it shows that, in sufficiently 
overparameterized networks, the TPV estimate  
$\mathrm{Tr}(H_{\mathrm{eff}}(w^\star;X_{\mathrm{tr}})\,C)$ and equivalently Eq. \ref{eq_tpv_orig_defn} (under the first order approximation) estimated using the training set already carries enough information to approximate the \emph{test-set} TPV that governs test prediction variance.

The proof uses two observations--1. \citet{jacot2018neural,allen2019convergence} show that the NTK kernel remains stable during training; 2. due to 
random initialization and i.i.d.\ sampling of datasets, $H_{\mathrm{eff}}(w_0;X)$ concentrates around its population counterpart in operator norm for both $X_{\mathrm{tr}}$ and $X_{\mathrm{te}}$ due to the Law of Large Numbers. Combining these two observations allows us to upper-bound the distance between the training set and test set TPV objects at $w^\star$. 

\section{Parameter Perturbation Sources}
In this section, we analytically derive the specific form of the TPV formula (Eq. \ref{eq:trace-HC}) under different parameter perturbation sources. Specifically, we revisit label noise, SGD mini-batch noise, and quantization noise, and show that the robustness benefits of over-parameterization and wide minima arise from the common mechanism of suppressing TPV.

\subsection{Label Noise}
\label{sec_label_noise}

In this section, we do not assume TPV stability and study the TPV object in its general form.

\paragraph{Linear Case}

\begin{remark}[TPV in Linear Case]
\label{remark_tpv_noise_linear}
Let the training labels be generated as 
\(
y_i = {\theta^\star}^\top x_i + \varepsilon_i
\),
where 
\(\varepsilon_i \sim \mathcal{N}(0,\sigma_\varepsilon^2)\) 
are i.i.d.\ label-noise variables, and denote by $X \in \mathbb{R}^{n \times d}$ the training dataset matrix containing $n$ samples of dimension $d$.
Assume:
(i) $d \ge n$ and $X$ has full row rank,
(ii) the data distribution is whitened so that $\mathbb{E}_x[xx^\top] = I_d$,
and (iii) $\theta^\star$ lies in the row span of $X$. Let 
\(w^\star\) be a minimizer of the empirical linear regression loss using SGD.
Then the TPV under linear regression is given by,
\begin{equation}
\label{eq:tpv-label-linear}
\boxed{
\operatorname{TPV}_{\mathrm{label}} = \sigma_\varepsilon^2 \,\mathrm{Tr}\big((XX^\top)^{-1}\big).}
\end{equation}
\end{remark}

The proof is shown in Appendix \ref{appendix:tpv-linear-label-noise-proof} for completeness. Note that in the linear regression setting above, the ``global'' prediction variance
$\mathbb{E}_x \operatorname{Var}_{\varepsilon}(f_{w^\star}(x))$ obtained by
retraining on different noisy label realizations coincides exactly with the
\emph{local} TPV quantity $\mathrm{Tr}(H_{\mathrm{eff}}C)$, and is a well known result \citep{bartlett2020benign} in literature characterizing benign overfitting in linear regression models. This is because the conditioning of the matrix $XX^\top$ improves as $d$ grows beyond $n$, resulting in lower $\operatorname{TPV}_{\mathrm{label}}$, making the model less sensitive to label noise. 
Even more specifically, if training and test distributions are identical, under isotropic/whitened assumptions on the rows of $X$, random matrix theory (Wishart) gives $\mathbb{E}[(XX^\top)^{-1}] \approx \frac{1}{d}I_n$ (when $n \ll d$). Therefore, for large enough $n$, $\operatorname{TPV}_{\mathrm{label}} \approx \sigma_\varepsilon^2 n/d$.

\paragraph{Non-Linear Case}

\begin{theorem}[TPV in Non-Linear Case]
\label{thm:tpv-label}
Let the training labels be generated as
\(
y_i = f_{\theta^\star}(x_i) + \varepsilon_i,
\)
where $f_{\theta^\star}$ is any fixed target function and
$\varepsilon_i$ are i.i.d.\ zero-mean noise variables with variance
$\sigma_\varepsilon^2$.
Let $w^\star$ be a parameter vector of the network satisfying
$f_{w^\star}(x_i) = f_{\theta^\star}(x_i)$ for all training inputs. Under the first-order approximation of the network around $w^\star$, let
$w^\star + \delta w$ be a \emph{stationary point} of the MSE loss w.r.t. $\delta w$ when training on the noisy labels
$y_i = f_{\theta^\star}(x_i)+\varepsilon_i$
in this linearized model. If $\delta w$ is chosen to be the \textbf{minimum-norm stationary point}, then the expected test prediction variance due to label noise is,
\begin{equation}
\label{eq:tpv-eq38}
\boxed{
\operatorname{TPV}_{\mathrm{label}} 
\;\approx\;
\sigma_\varepsilon^2 \sum_{i=1}^r \frac{B_{ii}}{s_i^2},}
\end{equation}
where:
\begin{itemize}
\item $s_i$'s ($i \in [r]$) are the nonzero singular values of the output-parameter Jacobian 
$J$ evaluated on the training set,
\item \(B_{ii}\) denotes the \(i\)-th diagonal entry of 
\(B := V^\top H_{\mathrm{eff}} V\),
where \(V\) contains the right singular vectors of \(J\), and $H_{\mathrm{eff}}$ depends on the test distribution
\end{itemize}
\end{theorem}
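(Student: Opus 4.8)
The plan is to reduce the statement to a finite-dimensional linear least-squares computation and then push the resulting perturbation covariance through the trace identity of Eq.~\eqref{eq:trace-HC}. Stack the per-sample Jacobians into $J \in \mathbb{R}^{n \times p}$ with rows $J(x_i; w^\star)$, and collect the label noise into $\varepsilon = (\varepsilon_1,\dots,\varepsilon_n)^\top$. Because $f_{w^\star}(x_i) = f_{\theta^\star}(x_i)$ by hypothesis, the linearized residual on the \emph{noisy} labels is exactly $f_{w^\star}(x_i) + J(x_i)\delta w - y_i = J(x_i)\delta w - \varepsilon_i$, so the MSE objective in the linearized model is $\delta w \mapsto \tfrac{1}{n}\|J\delta w - \varepsilon\|^2$. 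Its stationary points are the solutions of the normal equations $J^\top J\,\delta w = J^\top \varepsilon$, and the minimum-norm stationary point is $\delta w = J^{+}\varepsilon$ with $J^{+}$ the Moore--Penrose pseudoinverse.

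Next I would compute the perturbation covariance $C = \mathbb{E}_\varepsilon[\delta w\,\delta w^\top]$. Writing the thin SVD $J = U S V^\top$ with $U \in \mathbb{R}^{n\times r}$, $V \in \mathbb{R}^{p\times r}$, $U^\top U = V^\top V = I_r$, and $S = \mathrm{diag}(s_1,\dots,s_r)$, we have $J^{+} = V S^{-1} U^\top$, hence $\delta w = V S^{-1} U^\top \varepsilon$. Using that the $\varepsilon_i$ are i.i.d.\ zero-mean with variance $\sigma_\varepsilon^2$, i.e.\ $\mathbb{E}[\varepsilon\varepsilon^\top] = \sigma_\varepsilon^2 I_n$, together with $U^\top U = I_r$, this collapses to $C = \sigma_\varepsilon^2\, V S^{-2} V^\top$. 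Substituting into $\operatorname{TPV} = \mathrm{Tr}(H_{\mathrm{eff}} C)$ and using cyclicity of the trace and $V^\top V = I_r$,
\[
\operatorname{TPV}_{\mathrm{label}}
= \sigma_\varepsilon^2\,\mathrm{Tr}\!\big(S^{-2}\, V^\top H_{\mathrm{eff}} V\big)
= \sigma_\varepsilon^2 \sum_{i=1}^{r} \frac{B_{ii}}{s_i^2},
\]
with $B = V^\top H_{\mathrm{eff}} V$, which is the claimed expression. It is worth emphasizing in the write-up that $H_{\mathrm{eff}}$ is the test-distribution second moment while $V$ and the $s_i$ come from the training-set Jacobian, so $V$ does not diagonalize $H_{\mathrm{eff}}$; the off-diagonal entries $B_{ij}$ are generally nonzero but drop out of the trace, leaving only the $B_{ii}$.

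The main obstacle is not the algebra but the modeling step: making precise the sense in which ``training on the noisy labels in the linearized model and taking the minimum-norm stationary point'' is the right object, and in particular (i) that the relevant target for $\delta w$ is genuinely the noise vector $\varepsilon$ rather than something that also mixes in approximation error of the interpolation hypothesis $f_{w^\star}(x_i)=f_{\theta^\star}(x_i)$, and (ii) handling rank deficiency of $J$ (when $r < \min(n,p)$) consistently --- the pseudoinverse does this cleanly, but one should note that the under-determined directions are exactly those annihilated by the minimum-norm choice, which is what makes $C$ supported on $\mathrm{span}(V)$ and the sum run only to $r$. A secondary point to verify is that the first-order expansion Eq.~\eqref{eq:local-lin} is invoked consistently twice --- once to define the training problem for $\delta w$, once to evaluate TPV via Eq.~\eqref{eq:trace-HC} --- which is why the final identity is stated with ``$\approx$''.
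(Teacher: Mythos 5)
Your proposal is correct and follows essentially the same route as the paper's proof in Appendix~\ref{appendix:tpv-label-noise-proof}: both derive the minimum-norm stationary point $\delta w = V S^{-1} U^\top \varepsilon$ from the normal equations via the compact SVD, and both arrive at $\sigma_\varepsilon^2\,\mathrm{Tr}(S^{-2}B)$ with $B = V^\top H_{\mathrm{eff}} V$. The only cosmetic difference is the order of expectations — you form the parameter covariance $C = \sigma_\varepsilon^2 V S^{-2} V^\top$ first and then apply the $\mathrm{Tr}(H_{\mathrm{eff}}C)$ identity, whereas the paper computes the per-test-point variance over $\varepsilon$ and then averages over $x$ — which yields the same quantity by Fubini.
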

The proof is provided in Appendix~\ref{appendix:tpv-label-noise-proof}.  The linear result (Eq.~\ref{eq:tpv-label-linear}) is a special case of Eq.~\ref{eq:tpv-eq38} when the nonlinear Jacobian is replaced by the data matrix. To our knowledge, Eq.~\ref{eq:tpv-eq38} is the first expression of label-noise--induced prediction variance for finite-width deep networks in terms of a test-Jacobian operator and a general parameter-noise covariance \(C\), that also accommodates other perturbation sources (e.g., SGD noise, finite precision, pruning masks) within the same \(\operatorname{Tr}(H_{\mathrm{eff}} C)\) template.

Equation~\ref{eq:tpv-eq38} highlights that label-noise sensitivity is dominated by directions where \(B_{ii}\) is large while \(s_i\) is small---i.e., where the test-distribution Jacobian aligns with poorly conditioned training directions. This makes explicit how the \emph{local geometry} around \(w^\star\)---particularly the conditioning and alignment of Jacobian modes---governs TPV. 

\paragraph{Connection to Over-Parameterization and Benign Overfitting.}
Over-parameterization induces solutions whose Jacobians are well-conditioned. This phenomenon has been rigorously studied through NTK theory. Let \(J(w) \in \mathbb{R}^{n \times p}\) denote the Jacobian of network outputs on the \(n\) training points and let \(G(w) = J(w) J(w)^\top\) be the empirical NTK matrix. When \(p \gg n\), the non-zero singular values of \(J(w)\) correspond to the eigenvalues of \(G(w)\). For two-layer ReLU networks, \citet{du2018gradient} show that with sufficient width, the smallest eigenvalue \(\lambda_{\min}(G(w_0))\) is bounded away from zero at initialization and remains so during training. \citet{bombari2022memorization} extend this to deep networks with \(\Omega(n)\) parameters, establishing well-conditioned NTK structure at initialization. Moreover, \citet{allen2019convergence} prove that SGD remains in a neighborhood of initialization where the NTK matrix stays close to its well-conditioned initial value.

Consequently, in the overparameterized regime, the non-zero singular values of \(J(w)\) remain bounded away from zero, suppressing the \(\frac{B_{ii}}{s_i^2}\) terms in Eq.~\ref{eq:tpv-eq38} and thereby yielding lower label-noise TPV. In this sense, \textbf{over-parameterization and benign overfitting in the non-linear case can be interpreted through the TPV lens}: overparameterized networks have stable Jacobian geometry, with small TPV, hence better robustness to label noise.

\paragraph{Pathological Cases}
Two pathological regimes are worth noting w.r.t. Eq.~\ref{eq:tpv-eq38}.  
(i) When the linearized system around $w^\star$ interpolates the noisy
labels (e.g., $J\delta w=\varepsilon$ with full row rank), both the
training TPV and the label-noise TPV in Eq.~\ref{eq:tpv-eq38} collapse to
$\sigma_\varepsilon^2$ (or to $0$ as $\sigma_\varepsilon^2\!\to\!0$) if the training and test distributions match.
TPV stability then holds trivially, but TPV becomes uninformative across
architectures.
(ii) When $\sigma_\varepsilon^2$ is sufficiently large that the noisy
solution either leaves the linearization regime (breaking the assumptions of
Theorem \ref{thm:tpv-label}) or induces a large $\mathrm{Tr}(C)$ making the upper bound in Theorem \ref{thm:tpv-trace-stability} weak, it can both cause TPV stability to break.  
The non-degenerate, small-but-finite noise regime—where \textbf{interpolation does
not occur and sufficiently small perturbations are used, and the min-norm solution is achieved}—is the regime in
which label noise TPV exhibits meaningful geometric dependence and is useful. However, it is not feasible to find this exact solution for modern deep networks. See
Appendix~\ref{app:label-noise-interpretation} and \ref{app:label-noise-computation} for a detailed discussion on the label noise TPV object, and \ref{sec_label_noise_algorithm} for a practical label noise TPV estimation algorithm.

\subsection{SGD Stationary Noise Near Convergence}
\label{sec_sgd_noise}
\begin{theorem}[TPV under SGD Noise at Convergence]
\label{thm:tpv-sgd}
Consider a scalar-output non-linear model $f(x;w)$ trained using SGD with squared loss $L$ on a fixed dataset
$\{(x_i,y_i)\}_{i=1}^n$ sampled i.i.d. from an underlying distribution $\mathcal{D}$ without any label noise, and assume that there exists a minimizer $w^\star$ with small but nonzero residuals. Then,
\begin{equation}
  \boxed{
 \operatorname{TPV}_{\mathrm{SGD}} 
  \;\approx\;
  \frac{\eta \sigma_\varepsilon^2}{2b}\,\mathrm{Tr}( \nabla_w^2 L(w^\star))
  }
  \label{eq:trace-HCsgd-Sigmaxi}
\end{equation}
where, $\eta$ and $b$ are the SGD learning rate and batch size, and $\sigma_\varepsilon^2$ denotes the variance of the residual error over the training samples at convergence.
\end{theorem}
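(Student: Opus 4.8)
The plan is to specialize the general trace identity $\operatorname{TPV}\approx\mathrm{Tr}(H_{\mathrm{eff}}C)$ of Eq.~\eqref{eq:trace-HC} to the parameter-perturbation covariance $C$ that SGD produces once it has equilibrated in the basin around $w^\star$. So the whole task reduces to computing this stationary covariance and then reading off the trace against $H_{\mathrm{eff}}$ from Eq.~\eqref{eq_heff}.

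First I would expand the loss to second order about $w^\star$: with $\delta w := w - w^\star$ and $H := \nabla_w^2 L(w^\star)$, the full-batch gradient is $\nabla_w L(w)\approx H\,\delta w$, and a size-$b$ mini-batch adds independent zero-mean noise of covariance $\tfrac1b\Sigma_g$, where $\Sigma_g$ is the single-example gradient-noise covariance at $w^\star$. In the small learning-rate, locally-quadratic regime SGD is then an Ornstein--Uhlenbeck process, and its stationary covariance solves the Lyapunov equation
\begin{equation}
H C + C H \;=\; \frac{\eta}{b}\,\Sigma_g ,
\end{equation}
a relation standard in stochastic-approximation analyses of SGD near minima \citep{mandt2017stochastic,zhu2018anisotropic,thomas2020interplay}.

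The key step is to evaluate $H$ and $\Sigma_g$ in the small-residual regime. For the squared loss the per-example gradient is $g_i = r_i\,J(x_i)^\top$ with residual $r_i := f_{w^\star}(x_i)-y_i$; hence the residual-weighted curvature term in $\nabla_w^2 L$ is negligible and $H \approx \tfrac1n\sum_i J(x_i)^\top J(x_i)$ (Gauss--Newton), while $\Sigma_g \approx \tfrac1n\sum_i r_i^2\,J(x_i)^\top J(x_i)\approx \sigma_\varepsilon^2\cdot\tfrac1n\sum_i J(x_i)^\top J(x_i)=\sigma_\varepsilon^2 H$, the last approximation treating the residual magnitudes as homoscedastic with variance $\sigma_\varepsilon^2$ and uncorrelated with the Jacobian directions. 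Plugging $\Sigma_g=\sigma_\varepsilon^2 H$ into the Lyapunov equation makes $C=\tfrac{\eta\sigma_\varepsilon^2}{2b}I$ a solution (uniquely so on $\mathrm{range}(H)$, which is the only subspace SGD diffuses in, since $\Sigma_g$ vanishes on $\ker H$). Substituting into $\mathrm{Tr}(H_{\mathrm{eff}}C)$ yields $\operatorname{TPV}_{\mathrm{SGD}}\approx \tfrac{\eta\sigma_\varepsilon^2}{2b}\mathrm{Tr}(H_{\mathrm{eff}})$, and a final identification $\mathrm{Tr}(H_{\mathrm{eff}})\approx\mathrm{Tr}\!\big(\tfrac1n\sum_i J(x_i)^\top J(x_i)\big)\approx\mathrm{Tr}(\nabla_w^2 L(w^\star))$---the first step by the trace-stability argument of Theorem~\ref{thm:tpv-trace-stability} (or merely i.i.d.\ concentration of the trace), the second by the same small-residual Gauss--Newton identity---gives the boxed expression in Eq.~\eqref{eq:trace-HCsgd-Sigmaxi}.

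The main obstacle is justifying the stationary-covariance step cleanly: the OU/Lyapunov description needs small $\eta$ and a genuinely quadratic, positive-semidefinite basin, and the identity $\Sigma_g\approx\sigma_\varepsilon^2 H$ is a real structural assumption on the gradient noise---exact when the residuals $r_i$ are i.i.d.\ with variance $\sigma_\varepsilon^2$ and independent of the rows $J(x_i)$, only approximate otherwise. A secondary subtlety is null-space bookkeeping: $C$ is only pinned to $0$ on $\ker H$, so one must argue $H_{\mathrm{eff}}$ contributes nothing there (again via the Gauss--Newton identification of $\mathrm{range}(H)$ with the training Jacobian's row span together with trace stability) so that no trace mass is lost. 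I would therefore state ``near-quadratic basin'' and ``homoscedastic small residuals'' as the explicit hypotheses under which the $\approx$ in Eq.~\eqref{eq:trace-HCsgd-Sigmaxi} is meant to hold.
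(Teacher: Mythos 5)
Your proposal is correct and follows the same overall skeleton as the paper's proof: linearize SGD around $w^\star$ as an OU process, arrive at the Lyapunov equation $H C + C H \approx \tfrac{\eta}{b}\Sigma_g$, identify $H_{\mathrm{eff}}\approx \nabla_w^2 L(w^\star)$ via the Gauss--Newton/small-residual argument, and invoke homoscedasticity of the residuals. The one genuine difference is how you extract the trace. You solve the Lyapunov equation explicitly, which forces you to assume the \emph{matrix-level} identity $\Sigma_g \approx \sigma_\varepsilon^2 H$ (residuals uncorrelated with Jacobian directions in every matrix entry) and then to do the $\ker H$ bookkeeping you flag at the end. The paper instead takes the trace of both sides of the Lyapunov equation directly: by cyclicity and symmetry of $C$, $\mathrm{Tr}(H_{\mathrm{eff}}C)+\mathrm{Tr}(C H_{\mathrm{eff}}^\top)=2\,\mathrm{Tr}(H_{\mathrm{eff}}C)$, so $\mathrm{Tr}(H_{\mathrm{eff}}C)\approx\tfrac{\eta}{2}\mathrm{Tr}(\Sigma_\xi)$ without ever solving for $C$. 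This only requires the weaker \emph{trace-level} statement $\mathrm{Tr}(\Sigma_\xi)\approx\tfrac{\sigma_\varepsilon^2}{b}\mathrm{Tr}(H_{\mathrm{eff}})$ (i.e.\ $\varepsilon_i^2$ approximately i.i.d.\ and independent of the diagonal entries $(JJ^\top)_{ii}$ only), and it sidesteps the null-space issue entirely since the identity holds for whatever $C$ is. Your route buys an explicit isotropic form for $C$ on $\mathrm{range}(H)$, which is a stronger conclusion, but at the cost of a stronger structural assumption on the gradient noise; the paper's trace trick is the more economical path to the boxed formula. One further point of alignment worth noting: the paper explicitly states that it uses $H_{\mathrm{eff}}$ computed on the training set and \emph{assumes} TPV stability to pass to the test-distribution quantity, which is exactly the move you invoke (via Theorem~\ref{thm:tpv-trace-stability}) in your final identification step, so no gap there.
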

The proof is shown in Appendix \ref{sec_tpv_sgd_proof} and assumes TPV stability because we use $H_{\mathrm{eff}}$ computed on the training set.

\textbf{Connection to Wide-minima Hypothesis}: There has been much debate on whether or not the flatness of minima affects the generalization performance of the final model achieved at the end of the training process \citep{keskar2017large,wu2022alignment,wu2018how}. The classic counter-argument against the wide-minima hypothesis by \citet{dinh2017sharp} is that by reparameterizing the weights of a network, the spectrum of the Hessian can be changed without changing the input-output function map.

However, in reality, noise sources like SGD mini-batch or finite precision prevent the weights from reaching $w^\star$ precisely. Theorem \ref{thm:tpv-sgd} shows that small perturbations in parameters due to a fixed SGD noise ($\eta/b$) in a sharp basin make the TPV (and in turn test error) large; thus, SGD effectively samples from a high-variance region of the loss surface, leading to unstable test behavior and degraded robustness despite low training error.

We also note that a common practice in the flatness literature is to evaluate flatness on the training set and correlate it with test-set generalization, implicitly assuming training-set flatness approximates its test-distribution counterpart. The SGD-noise TPV theorem along with TPV stability (Theorem~\ref{thm:tpv-trace-stability}) 
justify this substitution under i.i.d. sampling: training-set $\text{Tr}(H_{\mathrm{eff}})$ 
approximates its test-distribution counterpart, and at squared-loss minima with small 
residuals (Appendix~\ref{sec_h_eff_hessian}) $\text{Tr}(H_{\mathrm{eff}}) \approx 
\text{Tr}(\nabla^2_w L(w^\star))$, completing the justification for using training-set 
sharpness as a proxy for test-distribution flatness.

\subsection{Finite-precision noise}
\label{sec_quant_noise}
\begin{remark}[TPV under Quantization Noise]
\label{thm:tpv-quant}
Consider a scalar-output non-linear model $f(x;w)$ trained using GD (not SGD) with squared loss $L$ on a fixed dataset
$\{(x_i,y_i)\}_{i=1}^n$ sampled i.i.d. from an underlying distribution $\mathcal{D}$ without any label noise, and assume that there exists a minimizer $w^\star$ with small but nonzero residuals. Then,
\begin{equation}
  \boxed{
 \operatorname{TPV}_{\mathrm{quant}} 
  \;\approx\;
  \frac{\delta^2}{12}\,\mathrm{Tr}( \nabla_w^2 L(w^\star))
  }
  \label{eq:trace-HC_quant}
\end{equation}
where, $\mathrm{Var}(\delta w_j) = \delta^2 / 12$ denotes each parameter's variance under quantization under a simple independent per-coordinate quantization model, $\delta w_j \sim \mathrm{Unif}(-\delta/2, \delta/2), \forall j \in [p]$.
\end{remark}
The proof is shown in Appendix \ref{app_tpv_quant_proof}.
Thus, similar to section \ref{sec_sgd_noise}, the above claim shows that sharper minima hurts robustness under quantization/finite precision.

\begin{figure}[t]
    \centering
    \includegraphics[width=0.4\textwidth]{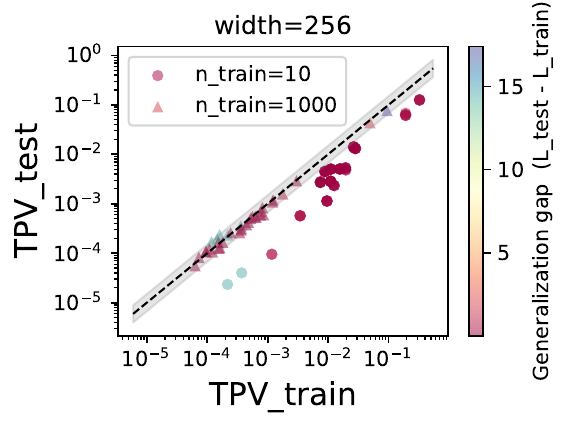}
    \vspace{-12pt}
    \caption{\textbf{TPV stability on synthetic data:}
    Analogous to Figure \ref{fig:tpv_synth_trace_stability}, 
    Each point corresponds to one synthetic configuration and one perturbation source. We ran 324 configurations. In Figure \ref{fig:tpv_synth_trace_stability}, we fixed the number of training samples $n_{\text{train}}=1000$ and varied network width. In this experiment, we fixed width to 256, and vary $n_{\text{train}}$. We find that \textbf{TPV stability breaks when} $n_{\text{train}}$ \textbf{is too low} (10; circles)-- points are far outside the gray $50\%$ error band, but holds for sufficiently large $n_{\text{train}}$.}
    \label{fig:tpv_synth_trace_stability_vary-n}
    \vspace{-13pt}
\end{figure}

\begin{figure}[t]
    \centering
    \includegraphics[width=0.4\textwidth]{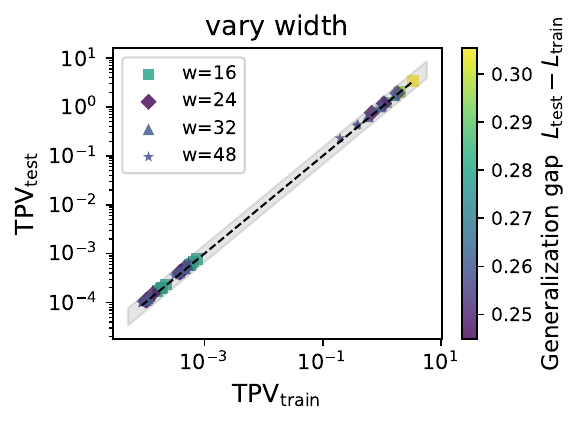}
    \vspace{-12pt}
    \caption{\textbf{TPV stability on CIFAR-10:} Analogous to Figure \ref{fig:tpv_synth_trace_stability}, this scatter plot shows that \textbf{TPV stability holds for architectures with different widths} on CIFAR-10.
    }
    \label{fig:tpv_cifar10_trace_stability_vary-w}
    \vspace{-15pt}
\end{figure}

\section{Empirical Verification}
\label{sec:experiments}

We run experiments around TPV stability between training and test sets and empirically identify the conditions when it holds and when it breaks. We then study TPV under label noise and how width plays an important role in modulating it, and how it correlates with generalization.
We do not include experiments on the relation between the flatness of minima (TPV under SGD noise) and generalization because several prior research works have confirmed a similar relationship \citep{keskar2017large,smith2018bayesian,jastrzkebski2017three}. 

\subsection{TPV Stability}
\label{sec_trace_stability_scatter_exp}

Theorem~\ref{thm:tpv-trace-stability} guarantees that training-set TPV approximates test-set TPV in the overparameterized limit with large $n$, relying on NTK stability and the Law of Large Numbers at initialization. In the experiments here we use the original form (Eq.~\ref{eq_tpv_orig_defn}) rather than the trace form, and find that TPV stability holds far more broadly than the theorem requires.

\paragraph{Synthetic data:} We construct a large benchmark spanning 324 distinct configurations of dataset type, input dimension, network width, depth, and training-set size, with two perturbation sources (label noise and SGD stationary noise), for a total of $\sim$20 independent runs per configuration. Full details are in Appendix~\ref{app_synth_data_exp_trace_stability}. Figure~\ref{fig:tpv_synth_trace_stability} plots training-set vs.\ test-set TPV on a log-log scale for $n_{\text{train}}=1000$, varying width. TPV values span more than five orders of magnitude, yet all points cluster tightly along the diagonal — most strikingly, stability holds even at width$=1$. Points with large generalization gaps lie just as close to the diagonal as well-generalizing models, confirming that TPV stability is decoupled from generalization.

Figure~\ref{fig:tpv_synth_trace_stability_vary-n} fixes width at 256 and varies $n_{\text{train}} \in \{10, 1000\}$: stability breaks severely at $n_{\text{train}}=10$ but holds tightly at $n_{\text{train}}=1000$.

\paragraph{CIFAR-10/100:} Figure~\ref{fig:tpv_cifar10_trace_stability_vary-w} shows that stability holds across MobileNetV2 width multipliers on CIFAR-10. Figure~\ref{fig:tpv_cifar10_trace_stability_vary-n} shows that stability breaks at $n_{\text{train}}=1$, holds mostly for $n_{\text{train}}=10$, and is tight for $n_{\text{train}}=10000$. Analogous CIFAR-100 results appear in Appendix~\ref{app_cifar_universal_scatter}. Full experimental details are in Appendix~\ref{app_cifar_universal_scatter}.

\paragraph{Summary:} Three conclusions follow: (i)~TPV stability holds regardless of the model's generalization gap; (ii)~it holds even at very small widths; (iii)~it breaks only when $n_{\text{train}}$ is very small.\footnote{TPV stability (Eq.~\ref{eq_tpv_orig_defn}) is distinct from TPV trace stability (Eq.~\ref{eq:trace-HC}), which uses the trace form.}

\begin{figure}[t]
    \centering

    \begin{subfigure}{0.49\linewidth}
        \centering
        \includegraphics[width=\linewidth]{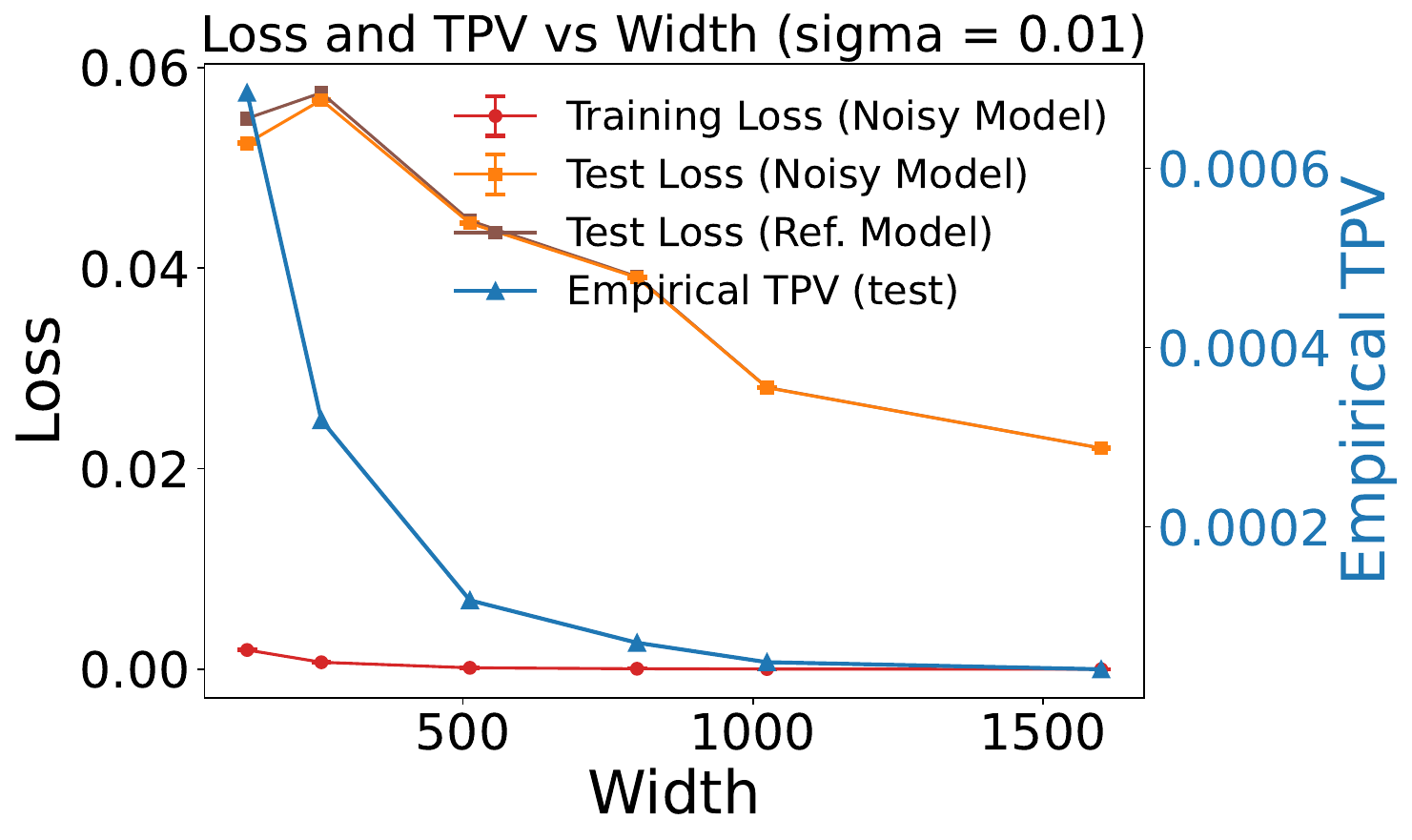}
    \end{subfigure}
    \hfill
    \begin{subfigure}{0.49\linewidth}
        \centering
        \includegraphics[width=\linewidth]{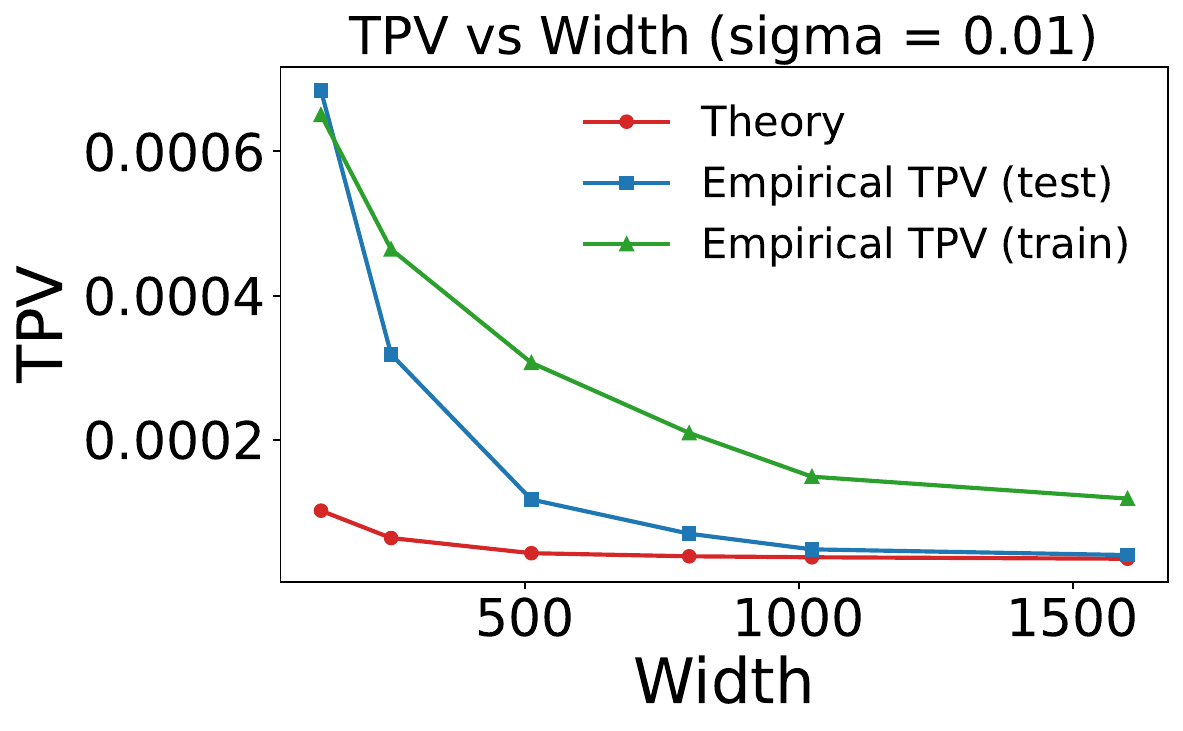}
    \end{subfigure}
    \vspace{-7pt}
    \caption{Empirical and theoretical TPV estimates under label noise on synthetic data for noise standard deviation $\sigma=0.01$. As width increases, both TPV estimates reduce. Further, TPV correlates with test loss.}
    \label{fig:TPV_label_noise}
    \vspace{-10pt}
\end{figure}

\begin{figure}[t]
    \centering

    \begin{subfigure}{0.49\linewidth}
        \centering
        \includegraphics[width=\linewidth]{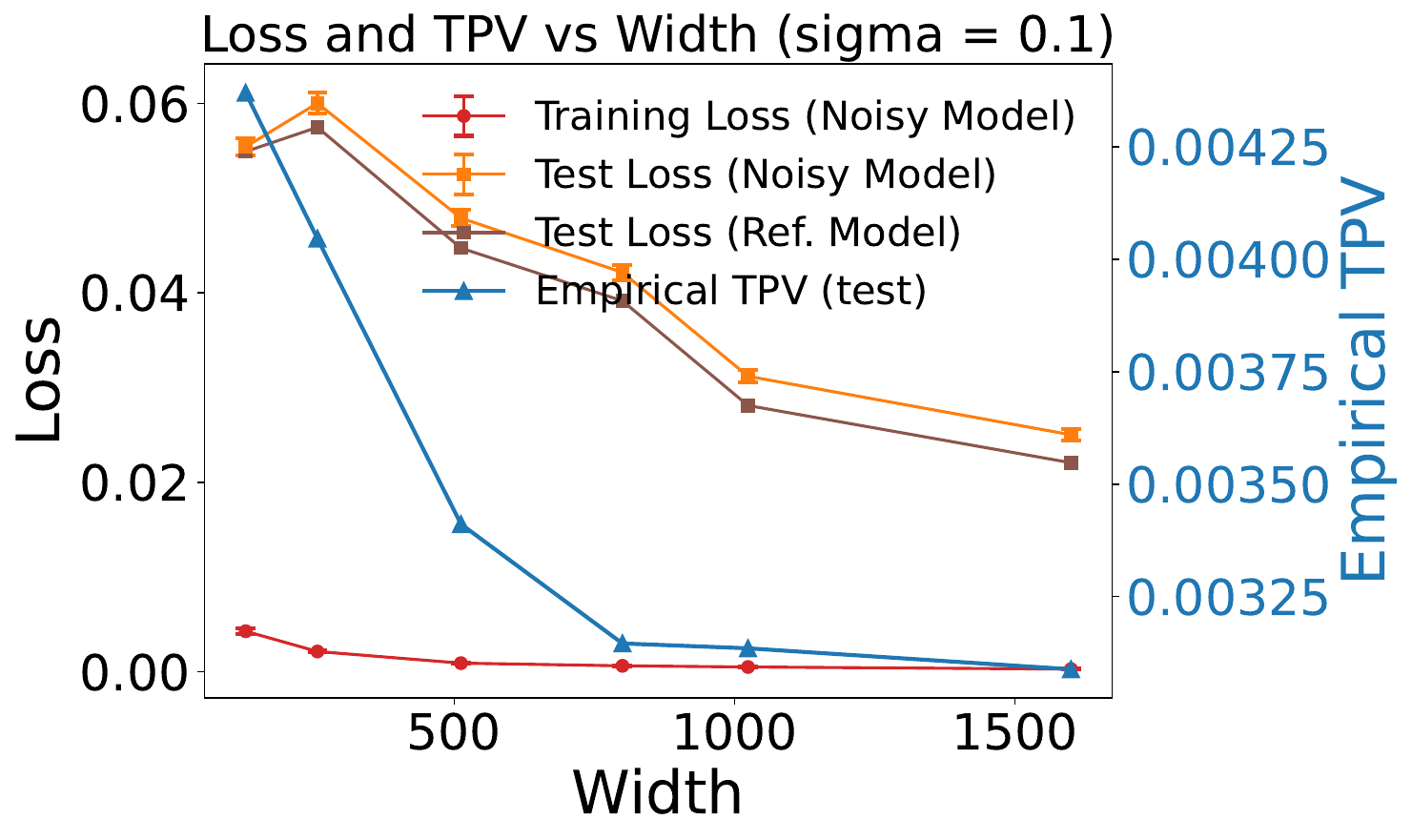}
    \end{subfigure}
    \hfill
    \begin{subfigure}{0.49\linewidth}
        \centering
        \includegraphics[width=\linewidth]{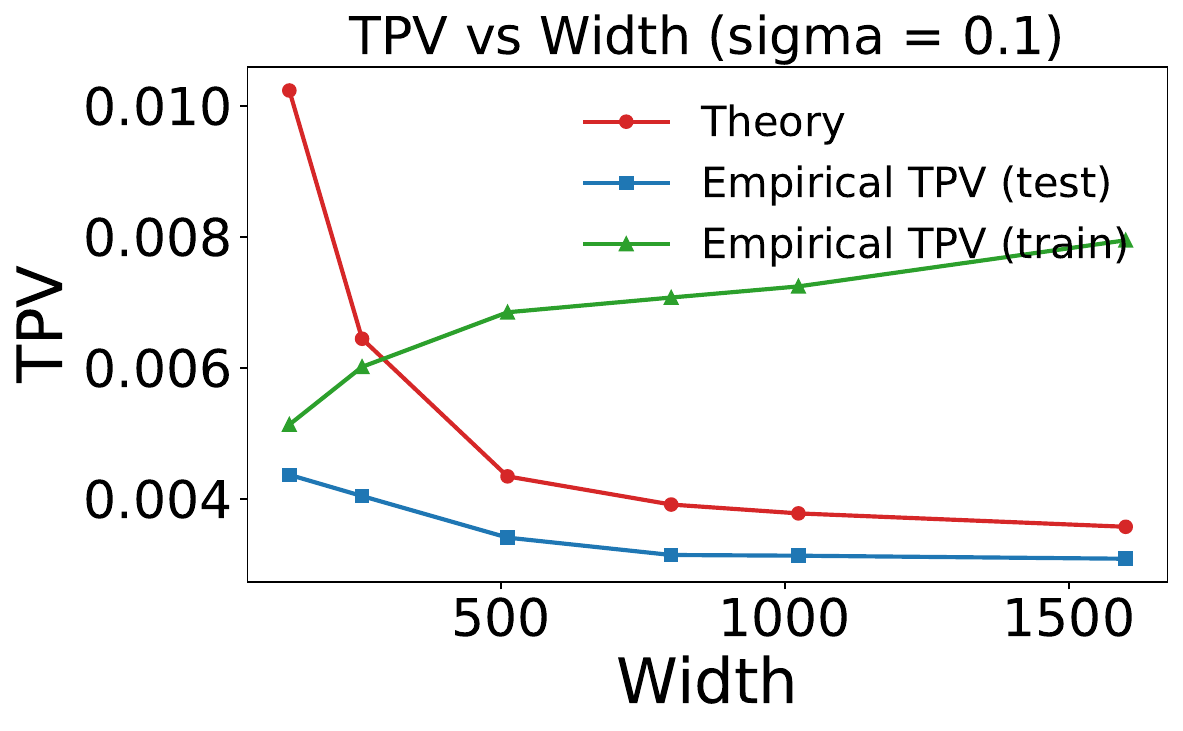}
    \end{subfigure}
    \vspace{-7pt}
    \caption{Empirical and theoretical TPV estimates under label noise on synthetic data for noise $\sigma=0.1$. As width increases, theoretical TPV and empirical test set TPV reduce, but training set TPV increases, breaking TPV stability when $\sigma$ is large.}
    \label{fig:TPV_label_noise_sigma0.1}
    \vspace{-5pt}
\end{figure}

\begin{figure}[t]
    \centering

    \begin{subfigure}{0.49\linewidth}
        \centering
        \includegraphics[width=\linewidth]{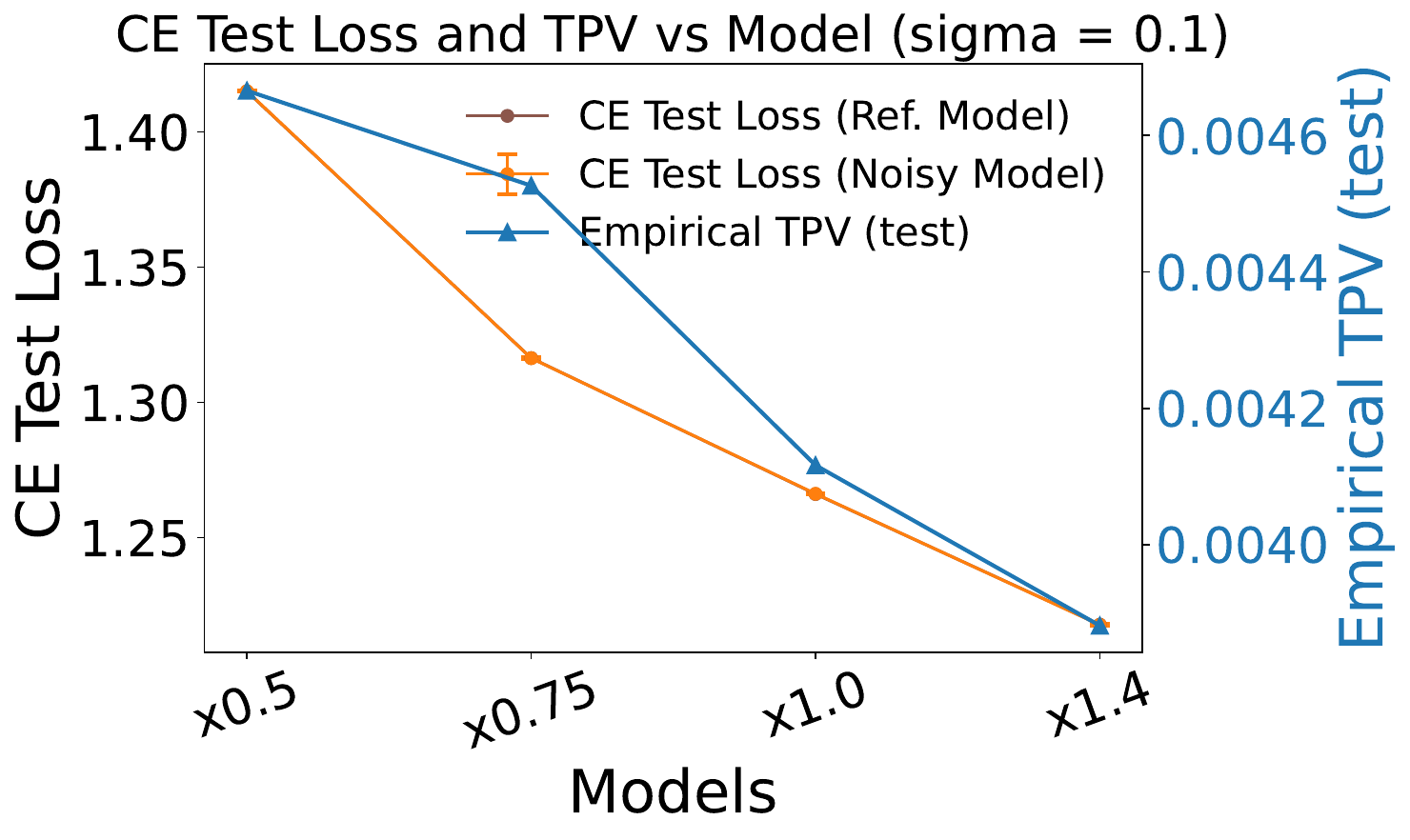}
    \end{subfigure}
    \hfill
    \begin{subfigure}{0.49\linewidth}
        \centering
        \includegraphics[width=\linewidth]{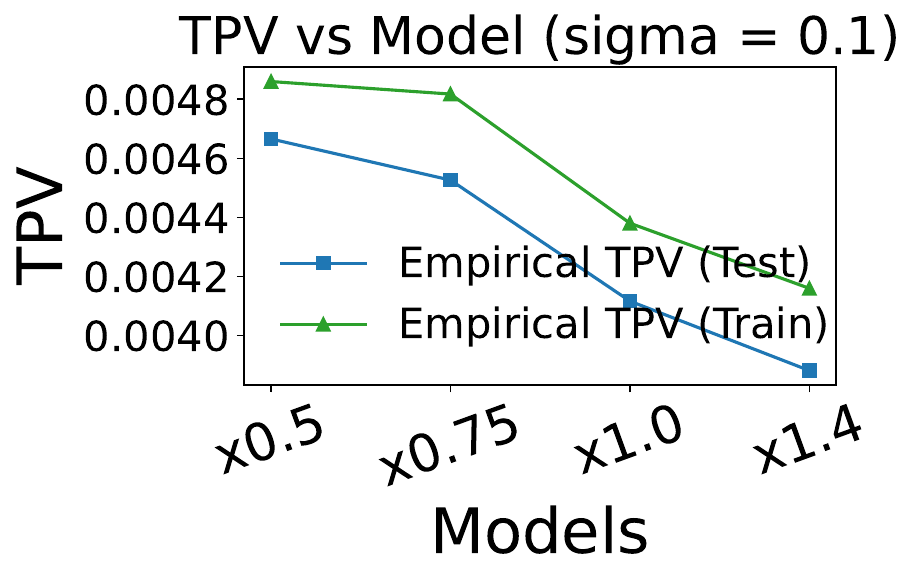}
    \end{subfigure}
    \vspace{-7pt}
    \caption{
    \textbf{Empirical TPV under logit noise on CIFAR-100 ($\sigma=0.1$).}
    Both TPV estimates decrease with width and track the reference model's
    clean test cross-entropy loss.
  }
    \label{fig:TPV_label_noise_c100}
    \vspace{-13pt}
\end{figure}

\subsection{Model Width and Label Noise TPV}
\label{sec_label_noise_exp}

We empirically study how network width modulates label-noise TPV (Theorem~\ref{thm:tpv-label}). Recall that for additive zero-mean label noise with variance $\sigma_\varepsilon^2$, sensitivity is governed by $T_{\mathrm{base}} := \sum_{i=1}^r B_{ii}/s_i^2$, where $s_i$ are training-set Jacobian singular values and $B_{ii}$ reflects test-distribution Jacobian alignment. Over-parameterization keeps Jacobian singular values bounded away from zero, suppressing $T_{\mathrm{base}}$ and yielding lower label-noise TPV. Our empirical goals are to verify that: (i)~empirical TPV tracks the theoretical $T_{\mathrm{base}}$ and decreases with width; (ii)~TPV stability holds; (iii)~lower TPV correlates with lower clean test loss.

\paragraph{Synthetic data.}
We train MLPs of varying widths on a synthetic Gaussian linear teacher task ($n_{\text{train}}=1000$, $n_{\text{test}}=5000$, input dim $20$). Full details are in Appendix~\ref{app:exp:synth-logit-noise}. As width increases, models achieve lower test loss (benign overfitting), and both training-set and test-set empirical TPV decrease together with the theoretical prediction $\sigma^2 T_{\mathrm{base}}$ (Figs.~\ref{fig:TPV_label_noise},~\ref{fig:Tbase_width_plot}).

At large noise ($\sigma=0.1$), TPV stability breaks: training-set TPV saturates near $\sigma^2$ once the model fits the noisy targets, while test-set TPV continues to decrease with width and track the theoretical prediction (Fig.~\ref{fig:TPV_label_noise_sigma0.1}). We attribute this to the large perturbation covariance weakening the upper bound in Theorem~\ref{thm:tpv-trace-stability}.

\paragraph{CIFAR-10/100.}
We perform an analogous experiment on CIFAR-100 using pretrained MobileNetV2 models with varying width multipliers. For each reference model, clean logits are perturbed with i.i.d.\ Gaussian noise ($\sigma=0.1$) and the model is fine-tuned on the noisy regression targets. Figure~\ref{fig:TPV_label_noise_c100} shows that wider models achieve lower test cross-entropy and lower empirical TPV, with training-set and test-set TPV tracking each other closely. CIFAR-10 results and full experimental details appear in Appendix~\ref{app:exp:cifar10-logit-noise}.

\subsection{Label Noise TPV and Generalization}
\label{sec:bias_variance}
 
The experiments in the previous section showed that empirical TPV correlates with test
loss.  We now examine the scope of
this correlation more carefully.  Fixing an MLP architecture on CIFAR-10
and sweeping a single regularizer (weight decay, dropout, or label
smoothing) reveals a \emph{U-shaped} relationship between TPV (local variance) and test
loss (bias) in Eq. \ref{eq:test-error-decomp}: both test-set TPV and test loss
decrease together in the low training loss regime, until regularization becomes strong enough to induce underfitting, after which, TPV continues
to decrease, while test loss rises in the high training loss regime. The latter part happens because TPV estimates the variance component of expected test error and is small when model underfits. Note that the relation between TPV and test loss is an empirical one. Figure~\ref{fig:mlp_tpv_ls} illustrates this setup for label smoothing regularization; the same pattern holds for dropout and weight decay
(see Appendix~\ref{app:bias_variance_appendix} for details).
Across architectures, TPV\footnote{Note that the above experiment uses test-set TPV. In \S \ref{sec:applications} we leverage TPV stability and use training set TPV instead to identify the best model/training recipe without any test labels.} still correlates with test loss in the low training loss regime, though the correlation is noisier within architecture-regularization groups with similar loss values.

\begin{figure}[t]
    \centering
    \includegraphics[width=0.9\linewidth]{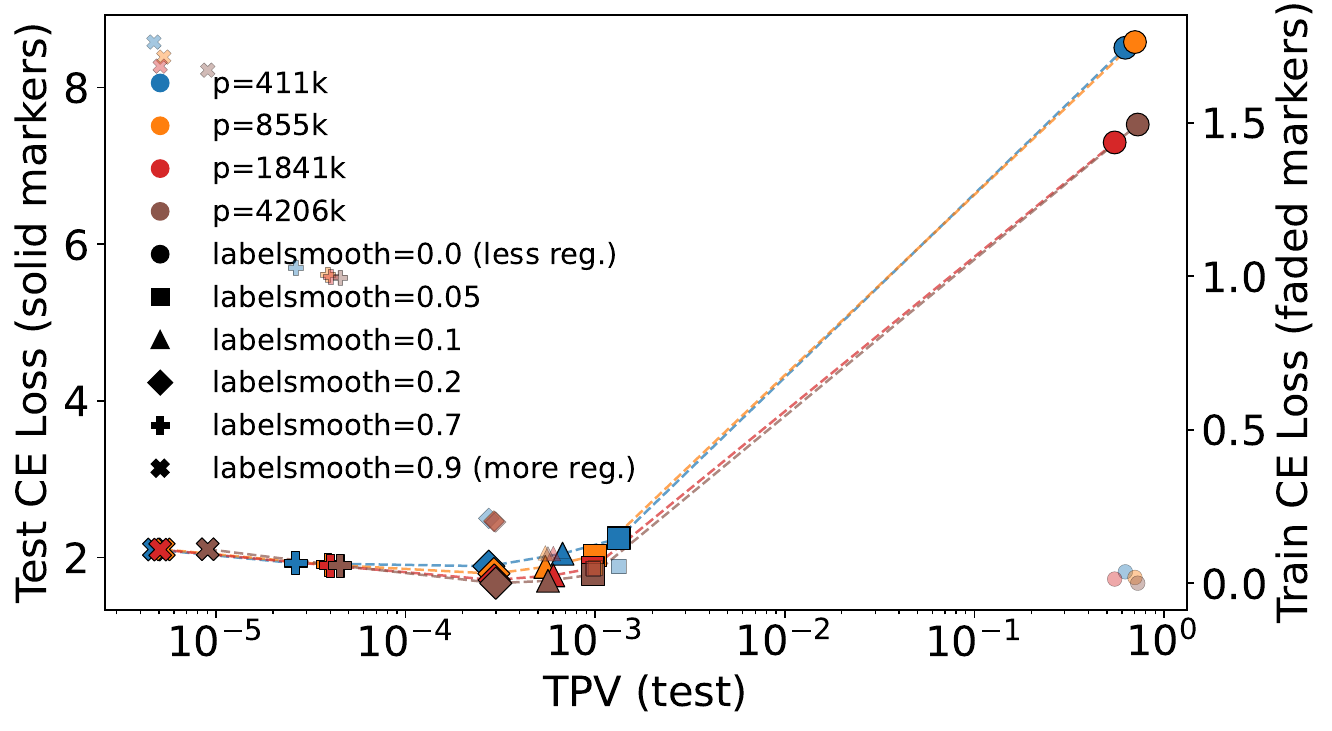}
    \caption{\textbf{TPV vs.\ test loss across architecture size and
    label smoothing strength on CIFAR-10.}  Each color is one architecture; each marker shape
    is one label smoothing value. U-shape emerges: in the high training loss regime (left) lower TPV corresponds to higher train and test loss due to underfitting;
    in the low training loss regime (right) higher TPV corresponds to lower train loss and higher test loss due to overfitting.}
    \label{fig:mlp_tpv_ls}
    \vspace{-15pt}
\end{figure}

\begin{figure}[t]
    \centering
    \includegraphics[width=0.9\linewidth]{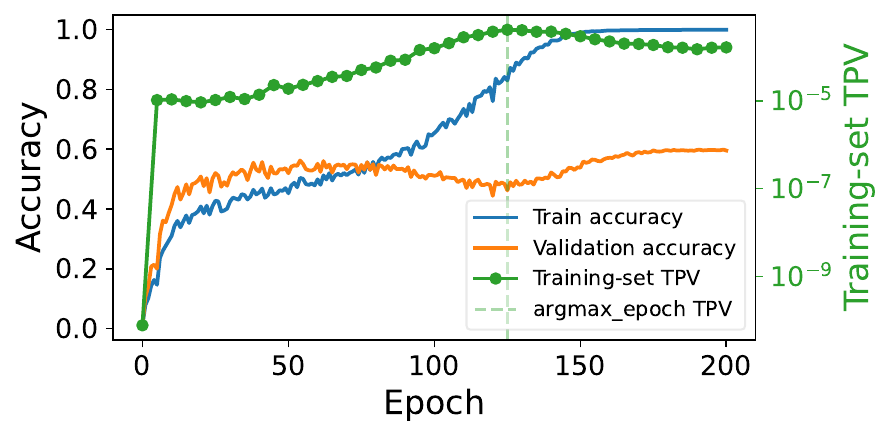}
    \caption{\textbf{TPV trajectory (ResNet-18 on CIFAR-100 with $30\%$ label noise).} TPV and validation accuracy are negative correlated in the low training loss regime (right of vertical green line).}
    \label{fig:tpv_trajectory}
    \vspace{-15pt}
\end{figure}

We further investigate the U-shaped relationship claim above by
tracking training-set TPV across epochs along a single training
trajectory. Specifically, we train ResNet-18 on
CIFAR-100 with $30\%$ label noise with label smoothing regularization \citep{szegedy2016rethinking}, and estimate empirical training-set TPV along the way.
Fig.~\ref{fig:tpv_trajectory} plots training accuracy, validation
accuracy, and training-set TPV against epoch.
The vertical dashed line marks the epoch at which TPV achieves its
maximum; this empirical landmark cleanly separates two regimes that
mirror the two branches of the U-shape in Figure~\ref{fig:mlp_tpv_ls}.
\emph{Left of the line} (high-training-loss regime), training accuracy
and TPV initially rise together (underfitted phase), and then validation accuracy dips after initial rise due to noisy label overfitting.
\emph{Right of the line} (low-training-loss regime), training accuracy saturates and validation accuracy exhibits a second rise—the epoch-wise double-descent pattern characteristic of training under label noise \citep{nakkiran2019}. In this regime, TPV and validation accuracy are negatively correlated. See details in Appendix \ref{app:tpv_trajectory} and additional NLU experiments using BERT in Appendix \ref{app:tpv_trajectory_bert}.

\section{Applications}
\label{sec:applications}

TPV provides a training-set-only proxy for a model's sensitivity to specific
post-training perturbations, without requiring test labels. We demonstrate two
practical uses of this property: structured pruning (\S\ref{sec:pruning}) and
training-set-based model selection
(\S\ref{sec:model-selection}).
 
\subsection{Pruning}
\label{sec:pruning}
 
We adopt the viewpoint that pruning should preserve the model's predicted class
on correctly-classified training samples.  Modeling pruning as a structured
parameter perturbation within the TPV framework (Appendix~\ref{sec_pruning_all})
yields \emph{Jacobian-Based Rebalancing} (JBR), a label-free pruning criterion
that assigns each parameter group $g$ the importance score
\begin{equation}
  \mathrm{score}_{\mathrm{JBR}}(w_g)
  = \mathbb{E}_x\!\left[w_g^\top J_g(x)^\top \delta_u(x)\,\delta_u(x)^\top J_g(x)\,w_g\right]\nonumber
\end{equation}
where $J_g(x)=\partial f(x;w)/\partial w_g$ and
$\delta_u(x)=\partial u(x;w)/\partial f(x;w)$ with
$u(x;w)=-\log p_{c(x)}(x;w)$ being the negative log-probability of the
predicted class.  Groups with small scores contribute little to test prediction
variance and are removed first.  JBR is closely related to the Jacobian
Criterion (JC) \citep{chen2025optimal}; their connection and differences are
discussed in Appendix~\ref{sec:jbr}.
 
\paragraph{Results.}
We evaluate JBR against seven baselines (Jacobian, L1, BN Scale, FPGM, WHC,
Taylor, Random) following the OBC global channel-pruning protocol
\citep{chen2025optimal} with no fine-tuning between iterations.
Results on CIFAR-10/100 and ImageNet are shown in Figures~\ref{fig:cifar_jbr}--\ref{fig:imgnet_jbr_num_params} in
Appendix~\ref{sec_pruning_all}; JBR matches or exceeds all baselines across all four settings.

\subsection{Model Selection}
\label{sec:model-selection}

A recurring practical challenge is selecting, from a pool of candidate models or training
recipes, the one that will (i) generalize best within the same domain, (ii) generalize best
in a new domain, or (iii) remain most robust when subsequently fine-tuned under label
noise---all without access to test labels. We show that training-set TPV under label noise
addresses all three scenarios. Notice we leverage TPV stability in these experiments.

\paragraph{In-distribution training recipe selection:}
Consider a
practitioner who needs to select a training recipe among
several recipes involving multiple hyperparameters.
We sample five \emph{joint} configurations (weight decay, dropout, label smoothing), each
drawn independently from a candidate grid per regularizer, and train ResNet architectures
of different sizes under each configuration. Figure~\ref{fig:multireg_cifar} shows that
training-set TPV ranks configurations consistently with test CE within each architecture
(low training loss regime), aggregating the joint effect of multiple regularizers into a
single scalar ranking that no individual regularizer value can recover.
Experimental details appear in Appendix~\ref{app:multireg}.

\paragraph{In-distribution cross-architecture model selection:}
Consider a practitioner who has several trained architec-
tures on a particular dataset and needs to select one with
best generalization performance using a small training sub-
set and does not have access to a validation set.
We compute empirical TPV for several pretrained ImageNet architectures and compare against
validation accuracy. As shown in Fig.~\ref{fig:imgnet_tpv_label_noise}, lower TPV tracks
higher accuracy, and train/test TPV estimates remain closely aligned,
enabling training-set-only model selection. See Appendix~\ref{app:exp:imgnet-logit-noise}
for details.

\paragraph{Transfer learning with label noise:}
Consider a practitioner who finetunes an architecture on a downstream dataset
(with label noise) that is different from the original dataset
the model was trained on, and needs to select the best train-
ing recipe without access to a clean validation set.
We finetune four ImageNet-pretrained backbones on Oxford-IIIT Pets~\citep{parkhi12pets}
with $10\%$ label corruption, under five joint regularization configurations each.
As shown in Fig.~\ref{fig:oxford_pets_tpv}, within each backbone lower training-set TPV
consistently tracks higher validation accuracy; across architectures the negative trend
holds but is weaker. See Appendix~\ref{app:oxford_pets} for details.

\paragraph{Sensitivity to label noise---training recipe selection:}
Consider a practitioner who has trained the same archi-
tecture under different hyperparameter choices and wants
to select the recipe most robust to label noise during sub-
sequent fine-tuning — without running any test evaluation.
We train the same MLP architecture under seven weight-decay values and compare two
training-data-only diagnostics for identifying the recipe most robust to subsequent
label-noise fine-tuning: (a)~\emph{sharpness} $= \mathrm{Tr}(H_{\text{eff}})$
estimated via Hutchinson's estimator; (b)~training-set label-noise TPV.
As shown in Figure~\ref{fig:wd_sweep}, sharpness is not positively correlated with label-noise
sensitivity, while label noise training-set TPV correctly identifies the most robust recipe.
See Appendix~\ref{app:wd_sweep} for details.

\begin{figure}[t]
    \centering
    \includegraphics[width=0.9\linewidth]{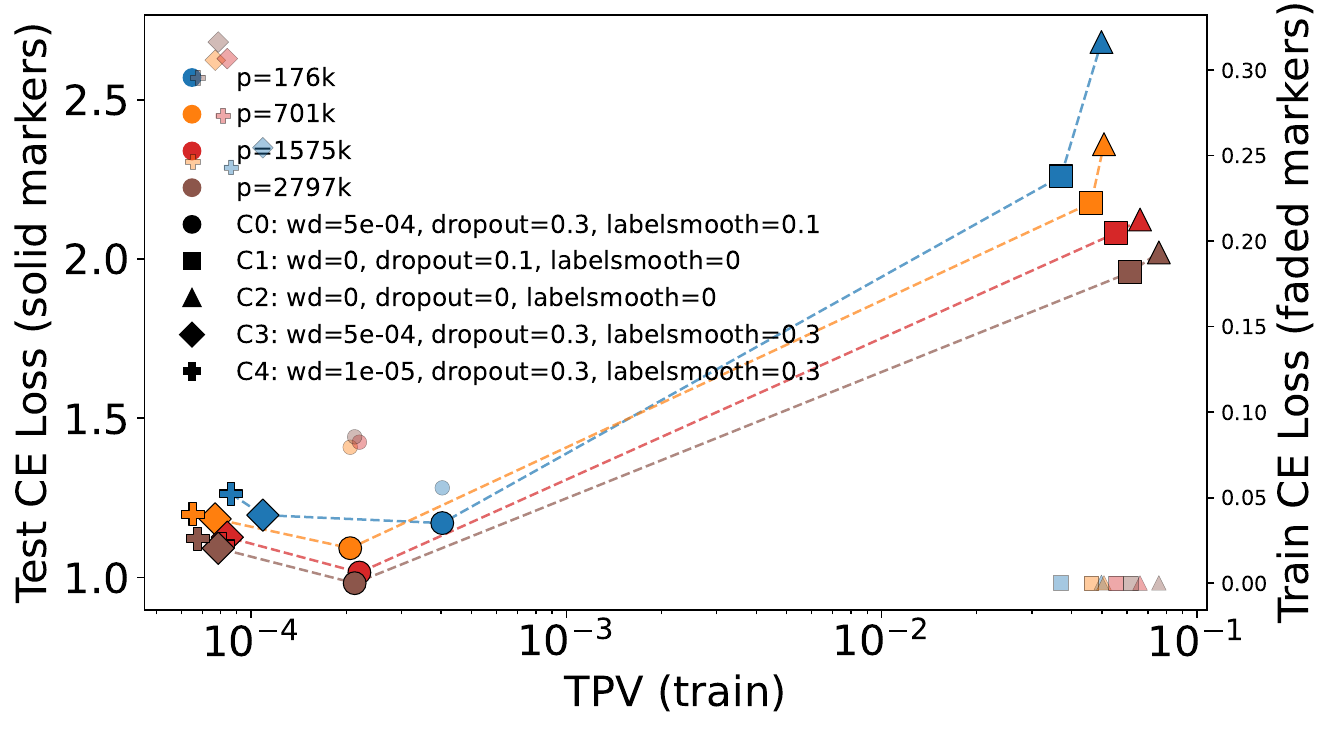}
    \caption{\textbf{TPV vs.\ test CE under multi-regularization on CIFAR-10.}  Each color is one architecture;
    each marker is one of five joint configurations
    (weight decay, dropout, label-smoothing). TPV is proportional to test loss in the low training loss regime and inversely proportional to test loss in the high training loss regime.}
    \label{fig:multireg_cifar}
    \vspace{-13pt}
\end{figure}

\begin{figure}
    \centering
    \begin{subfigure}{0.9\linewidth}
        \centering
        \includegraphics[width=\linewidth]{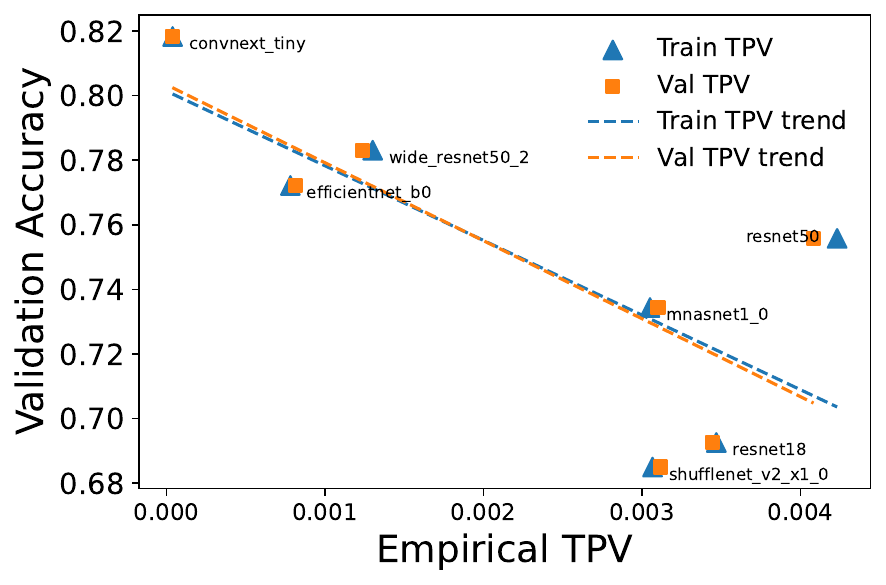}
    \end{subfigure}
    \vspace{-10pt}
    \caption{TPV estimates under label noise on Imagenet. TPV stability holds and models that generalize better typically have lower TPV estimates.}
    \label{fig:imgnet_tpv_label_noise}
    \vspace{-10pt}
\end{figure}

 \begin{figure}
    \centering
    \begin{subfigure}{0.9\linewidth}
        \centering
        \includegraphics[width=\linewidth]{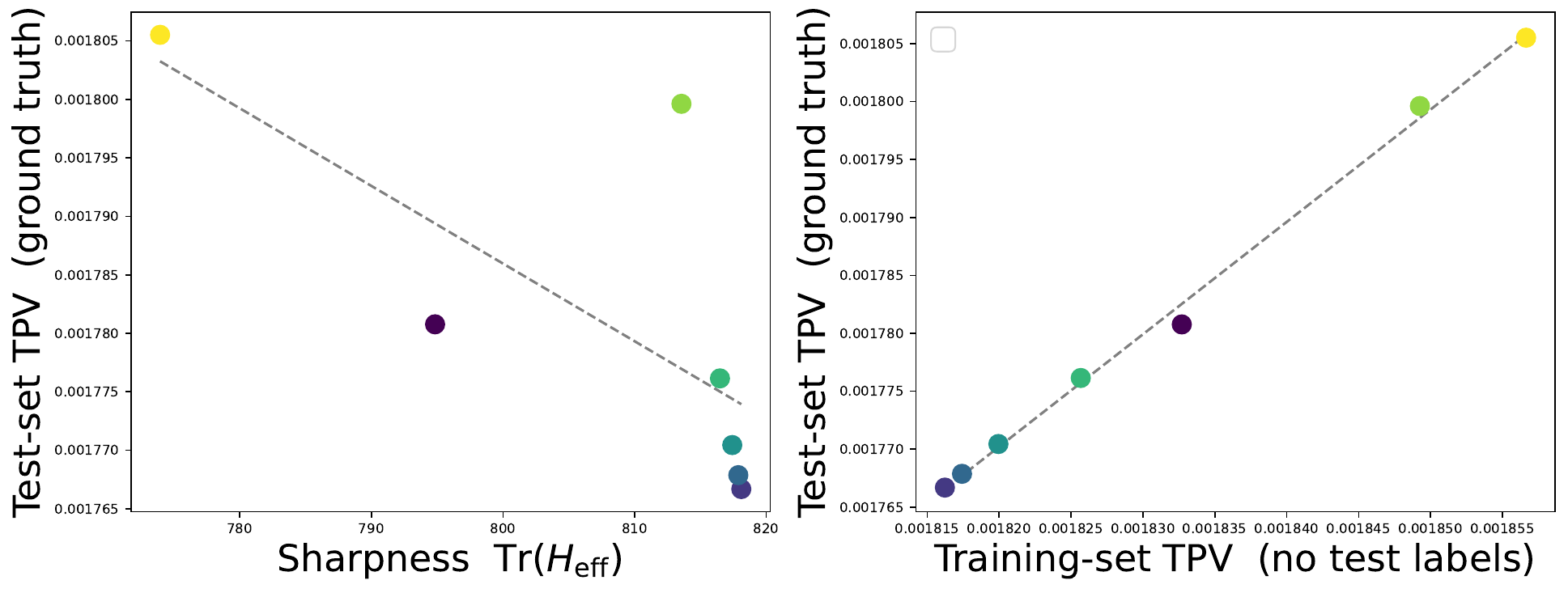}
    \end{subfigure}
  \caption{
    \textbf{Training recipe selection:}
    Each point is a model trained with a different weight decay value.
    \emph{Left:} sharpness vs.\ test-set label noise TPV.
    \emph{Right:} training-set vs.\ test-set label noise TPV. Training set label noise TPV is predictive of test set label noise sensitivity while sharpness (SGD noise TPV) is not.
  }
    \label{fig:wd_sweep}
    \vspace{-13pt}
\end{figure}

\section{Conclusion and future work}

We introduced TPV as a local, post-training measure of how a trained model's 
predictions vary under parameter perturbations. Its trace form 
$\mathrm{Tr}(H_{\mathrm{eff}}C)$ separates model geometry from noise mechanism, 
allowing SGD noise, label noise, quantization, and pruning to be analyzed under 
a single framework: while SGD-noise and quantization TPV recover the wide-minima 
hypothesis, label-noise TPV is governed by Jacobian spectral geometry explaining why overparameterization reduces label-noise sensitivity.

TPV stability (Theorem~\ref{thm:tpv-trace-stability}) — the result that 
training-set TPV reliably estimates its test-set counterpart — underpins the 
paper's practical contributions. It justifies using training-set TPV as a 
label-free proxy for test-time robustness, enabling training-set-based model 
selection across four deployment scenarios (\S \ref{sec:model-selection}). 
Empirically, stability holds far more broadly than the theorem requires, 
including at very low widths and under structured perturbations.

We also identify an empirical two-phase relationship between TPV and test loss: 
in the low training loss regime, TPV and test loss decrease together; in the 
high training loss regime, lower TPV corresponds to underfitting. This U-shaped pattern holds consistently across regularizers, 
architectures, and datasets, and is recoverable along a single training 
trajectory via the argmax-TPV landmark.

We believe the TPV framework opens several directions for future work:
(i)~extending TPV stability theory beyond NTK-based and isotropic assumptions, 
to close the gap with the broader empirical stability observed;
(ii)~formalizing TPV for input distribution shift;
(iii)~improving empirical estimation of label-noise TPV (Appendix~\ref{app:label-noise-computation});
(iv)~extending the framework to losses beyond MSE and architectures with 
discrete or structured outputs.

\section*{Impact Statement}
This work advances understanding of how trained neural networks respond to the parameter perturbations they face in practice — stochastic gradient noise, finite-precision arithmetic, label noise during fine-tuning, and post-training pruning — by placing them under a single analytical lens. The practical benefits are training-set-only diagnostics for model selection and a label-free pruning criterion, which can reduce reliance on held-out labeled data and lower the compute cost of deployment. Our theoretical guarantees rely on overparameterized and isotropic-perturbation assumptions, and empirical TPV stability, while broader than the theory predicts, can break under very small training sets or large perturbations. We see no immediate societal risks specific to this work beyond those general to deep learning research.

\bibliography{example_paper}
\bibliographystyle{icml2026}

\newpage
\appendix
\onecolumn

\section{TPV Trace Stability via NTK Stability and LLN}
\label{appendix:jac-conc-assump}

We provide a finite-sample version of the TPV trace stability result
used in the main text.  Let
\[
X_{\mathrm{tr}} = \{x_i\}_{i=1}^{n_{\mathrm{tr}}},\qquad
X_{\mathrm{te}} = \{x_j'\}_{j=1}^{n_{\mathrm{te}}}
\]
denote the training and test sets, respectively, with both drawn i.i.d.\
from the same data distribution~$\mathcal{D}$ independently of the
network initialization.

For any dataset $X$ of size $n_X$, let
$J_X(w) \in \mathbb{R}^{n_X\times p}$ be the Jacobian of network outputs
with respect to parameters $w$.  Following the main text, we define
\begin{equation}
H_{\mathrm{eff}}(w;X)
~=~
\frac{1}{n_X}\, J_X(w)^\top J_X(w),
\label{eq:heff-app}
\end{equation}
and the corresponding empirical NTK Gram matrix
\begin{equation}
G_X(w)
~=~
\frac{1}{n_X}\, J_X(w) J_X(w)^\top.
\label{eq:ntk-app}
\end{equation}
Note that the nonzero eigenvalues of $H_{\mathrm{eff}}(w;X)$ and $G_X(w)$ are
identical for every $w$.

The TPV trace on $X$ is
\begin{equation}
\mathrm{TPV}(w;X)
~=~
\mathrm{Tr}\!\left(H_{\mathrm{eff}}(w;X)\,C\right),
\label{eq:tpv-app}
\end{equation}
where $C\succeq 0$ is the parameter-perturbation covariance.

\subsection{Assumptions}

We make the following standard assumptions.

\begin{assumption}[i.i.d.\ train/test]
\label{ass:data-app}
$X_{\mathrm{tr}}$ and $X_{\mathrm{te}}$ are drawn i.i.d.\ from the same
distribution~$\mathcal{D}$, independently of $w_0$.
\end{assumption}

\begin{assumption}[Finite-set NTK stability]
\label{ass:ntk-app}
Let $w_t$ be the parameter vector after $t$ steps of gradient descent on
$X_{\mathrm{tr}}$, and let $w^\star := w_T$ be the trained network.  In
the infinite-width NTK regime ($m\to\infty$), individually for the fixed finite
datasets $X \in \{ X_{\mathrm{tr}}, X_{\mathrm{te}} \}$,
\begin{equation}
\|\,G_X(w^\star) - G_X(w_0)\,\|_{\mathrm{op}}
~\le~
\varepsilon_{\mathrm{NTK}},
\label{eq:ntk-stability-app}
\end{equation}
with probability $\to 1$ as $m\to\infty$.
\end{assumption}

\begin{assumption}[Isotropic Covariance]
\label{ass:isotropic_C}
The perturbation covariance matrix is isotropic, i.e., $C = \sigma^2 \mathbf{I}_p$, where $p$ is the number of parameters and $\sigma$ is a scalar.
\end{assumption}

\subsection{Initialization Consistency via LLN}

Define the per-example $H_{\mathrm{eff}}$ contribution
\[
h(x;w_0)
:= J(x;w_0)^\top J(x;w_0),
\]
so that
\[
H_{\mathrm{eff}}(w_0;X)
= \frac{1}{n_X}\sum_{x\in X} h(x;w_0).
\]

\begin{lemma}[LLN at initialization]
\label{lem:lln-app}
Let $\bar{H}_{\mathrm{eff}}(w_0) := \mathbb{E}_{x\sim\mathcal{D}}[h(x;w_0)]$.
If $\mathbb{E}\|J(x;w_0)\|_F^2 < \infty$ (i.e., well behaved output-parameter Jacobian), then under
Assumption~\ref{ass:data-app},

\begin{equation}
\|H_{\mathrm{eff}}(w_0;X_{\mathrm{tr}})
  - H_{\mathrm{eff}}(w_0;X_{\mathrm{te}})\|_{\mathrm{op}}
~=~ o_{n_{\mathrm{tr}}, n_{\mathrm{te}}}(1),
\label{eq:init-diff-app}
\end{equation}
where $o_{n_{\mathrm{tr}}, n_{\mathrm{te}}}(1)$ converges to $0$ almost surely as $n_{\mathrm{tr}},n_{\mathrm{te}}\to\infty$.
\end{lemma}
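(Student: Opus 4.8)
The plan is to prove Lemma~\ref{lem:lln-app} by a standard application of the strong law of large numbers (SLLN) in the finite-dimensional matrix space $\mathbb{R}^{p\times p}$, combined with a triangle inequality. Note that at a \emph{fixed} initialization $w_0$, the quantity $p$ is fixed and finite, so $\mathbb{R}^{p\times p}$ is a finite-dimensional normed vector space; all norms (operator, Frobenius, entrywise) are equivalent. The per-example matrices $h(x_i;w_0) = J(x_i;w_0)^\top J(x_i;w_0)$ are i.i.d.\ across $i$ (for fixed $w_0$, since the $x_i$ are i.i.d.), take values in this finite-dimensional space, and have finite first moment because $\|h(x;w_0)\|_{\mathrm{op}} \le \|J(x;w_0)\|_F^2$ and $\mathbb{E}\|J(x;w_0)\|_F^2 < \infty$ by hypothesis. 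Hence the vector-valued SLLN applies coordinatewise (or directly in $\mathbb{R}^{p\times p}$).

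The key steps, in order: (i) condition on $w_0$ and treat it as fixed throughout; (ii) observe that $\{h(x_i;w_0)\}_i$ are i.i.d.\ integrable $\mathbb{R}^{p\times p}$-valued random matrices with common mean $\bar H_{\mathrm{eff}}(w_0)$; (iii) apply the SLLN to conclude $H_{\mathrm{eff}}(w_0;X_{\mathrm{tr}}) \to \bar H_{\mathrm{eff}}(w_0)$ almost surely as $n_{\mathrm{tr}}\to\infty$, and likewise $H_{\mathrm{eff}}(w_0;X_{\mathrm{te}}) \to \bar H_{\mathrm{eff}}(w_0)$ almost surely as $n_{\mathrm{te}}\to\infty$, where convergence is in operator norm by norm equivalence in finite dimension; (iv) apply the triangle inequality,
\begin{equation}
\|H_{\mathrm{eff}}(w_0;X_{\mathrm{tr}}) - H_{\mathrm{eff}}(w_0;X_{\mathrm{te}})\|_{\mathrm{op}}
\le
\|H_{\mathrm{eff}}(w_0;X_{\mathrm{tr}}) - \bar H_{\mathrm{eff}}(w_0)\|_{\mathrm{op}}
+ \|\bar H_{\mathrm{eff}}(w_0) - H_{\mathrm{eff}}(w_0;X_{\mathrm{te}})\|_{\mathrm{op}},
\end{equation}
and note that each term on the right goes to $0$ almost surely as $n_{\mathrm{tr}},n_{\mathrm{te}}\to\infty$, so the left side is $o_{n_{\mathrm{tr}},n_{\mathrm{te}}}(1)$ as claimed. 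Since $X_{\mathrm{tr}}$ and $X_{\mathrm{te}}$ are independent (Assumption~\ref{ass:data-app}), the joint almost-sure statement follows directly.

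I do not expect a genuine obstacle here, as this is essentially bookkeeping around the SLLN. The one point requiring mild care is the interaction with Assumption~\ref{ass:ntk-app}: the NTK-stability assumption involves a \emph{different} limit ($m\to\infty$ at fixed dataset), whereas this lemma is purely about $n_{\mathrm{tr}},n_{\mathrm{te}}\to\infty$ at fixed $w_0$ (hence fixed $m$). I would therefore be explicit that the lemma is stated and proved for an arbitrary fixed width $m$, so that $p = p(m)$ is a fixed finite integer and the finite-dimensional SLLN is unambiguously valid; the $o_{n_{\mathrm{tr}},n_{\mathrm{te}}}(1)$ term may depend on $m$ (and on $w_0$), but that is compatible with how it is later combined with $\varepsilon_{\mathrm{NTK}}$ in Theorem~\ref{thm:tpv-trace-stability}. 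If one wants convergence in probability rather than almost surely, the weak LLN suffices under the same finite-first-moment hypothesis; I would mention this as a remark but default to the a.s.\ statement since the lemma is phrased that way.
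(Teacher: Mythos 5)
Your proposal is correct and follows essentially the same route as the paper's proof: apply the strong law of large numbers to the i.i.d.\ integrable matrices $h(x_i;w_0)$ in the finite-dimensional space $\mathbb{R}^{p\times p}$, pass to the operator norm via norm domination ($\|\cdot\|_{\mathrm{op}}\le\|\cdot\|_F$), and combine the train and test convergences with the triangle inequality. Your added remark that the lemma holds at fixed width $m$ (so $p$ is finite and the $o_{n_{\mathrm{tr}},n_{\mathrm{te}}}(1)$ term may depend on $m$) is a useful clarification that the paper leaves implicit.
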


\begin{proof}
Since $h(x;w_0) := J(x;w_0)^\top J(x;w_0)$ is integrable in Frobenius norm
($\mathbb{E}\|h(x;w_0)\|_F < \infty$) and takes values in the
finite-dimensional normed space $(\mathbb{R}^{p\times p},\|\cdot\|_F)$,
applying the strong law of large numbers (LLN) individually to $X \in \{ X_{\mathrm{tr}}, X_{\mathrm{te}} \}$, we obtain:

\[
  \frac{1}{n_X}
  \sum_{i=1}^{n_X} h(x_i;w_0)
  ~\xrightarrow[n_X\to\infty]{\text{a.s.}}~
  \mathbb{E}_{x\sim\mathcal{D}}[h(x;w_0)]
  ~=~ \bar{H}_{\mathrm{eff}}(w_0),
\]
in Frobenius norm. Using $\|\cdot\|_{\mathrm{op}} \le \|\cdot\|_F$, we
also have
\[
  \|H_{\mathrm{eff}}(w_0;X)
     - \bar{H}_{\mathrm{eff}}(w_0)\|_{\mathrm{op}}
  \xrightarrow[n_X \to\infty]{\text{a.s.}} 0,
\]

To obtain the difference bound \eqref{eq:init-diff-app}, we use the triangle
inequality in operator norm:
\begin{align*}
  \|H_{\mathrm{eff}}(w_0;X_{\mathrm{tr}})
     - H_{\mathrm{eff}}(w_0;X_{\mathrm{te}})\|_{\mathrm{op}}
  &\le
    \|H_{\mathrm{eff}}(w_0;X_{\mathrm{tr}})
       - \bar{H}_{\mathrm{eff}}(w_0)\|_{\mathrm{op}} \\
  &\quad+
    \|\bar{H}_{\mathrm{eff}}(w_0)
       - H_{\mathrm{eff}}(w_0;X_{\mathrm{te}})\|_{\mathrm{op}}.
\end{align*}
Each term on the right-hand side converges almost surely to $0$ by
the LLN argument above, so the sum converges to $0$
almost surely as $n_{\mathrm{tr}},n_{\mathrm{te}}\to\infty$.
\end{proof}

\subsection{Main Result}
\label{app:main-result}

We now state and prove a finite-sample TPV trace stability result. 

\begin{theorem}[TPV trace stability under isotropic perturbations]
\label{thm:tpv-trace-stability-app}
Under Assumptions~\ref{ass:data-app}, \ref{ass:ntk-app}, and
\ref{ass:isotropic_C}, we have
\begin{equation}
\bigl|\mathrm{TPV}(w^\star;X_{\mathrm{tr}})
      -\mathrm{TPV}(w^\star;X_{\mathrm{te}})\bigr|
~\le~
\mathrm{Tr}(C) \left(\,\frac{(n_{\mathrm{tr}}+n_{\mathrm{te}})}{p}\,\varepsilon_{\mathrm{NTK}} 
~+~ o_{n_{\mathrm{tr}},n_{\mathrm{te}}}(1)\right),
\label{eq:tpv-trace-stability-bound-app}
\end{equation}
where $\varepsilon_{\mathrm{NTK}} \rightarrow 0$ as $m \rightarrow \infty$ and $o_{n_{\mathrm{tr}}, n_{\mathrm{te}}}(1) \rightarrow 0$ as $n_{\mathrm{tr}},n_{\mathrm{te}}\to\infty$.
\end{theorem}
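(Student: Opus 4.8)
The plan is to use isotropy to collapse the trace form to a scalar multiple of a plain trace, then to chain through the initialization $w_0$ with the triangle inequality, estimating each leg with either NTK stability (Assumption~\ref{ass:ntk-app}) or the LLN lemma (Lemma~\ref{lem:lln-app}).

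First I would invoke Assumption~\ref{ass:isotropic_C} to write $\mathrm{TPV}(w^\star;X)=\mathrm{Tr}(H_{\mathrm{eff}}(w^\star;X)\,\sigma^2 I_p)=\sigma^2\,\mathrm{Tr}(H_{\mathrm{eff}}(w^\star;X))$, and record $\sigma^2=\mathrm{Tr}(C)/p$. The elementary identity $\mathrm{Tr}(J^\top J)=\mathrm{Tr}(JJ^\top)=\|J\|_F^2$ gives $\mathrm{Tr}(H_{\mathrm{eff}}(w;X))=\tfrac{1}{n_X}\|J_X(w)\|_F^2=\mathrm{Tr}(G_X(w))$, so
\[
\mathrm{TPV}(w^\star;X_{\mathrm{tr}})-\mathrm{TPV}(w^\star;X_{\mathrm{te}})
=\sigma^2\bigl[\mathrm{Tr}(G_{X_{\mathrm{tr}}}(w^\star))-\mathrm{Tr}(G_{X_{\mathrm{te}}}(w^\star))\bigr].
\]
Next I would insert $\pm\mathrm{Tr}(G_{X_{\mathrm{tr}}}(w_0))$ and $\pm\mathrm{Tr}(G_{X_{\mathrm{te}}}(w_0))$ and split the bracketed quantity into three pieces: $\mathrm{Tr}(G_{X_{\mathrm{tr}}}(w^\star))-\mathrm{Tr}(G_{X_{\mathrm{tr}}}(w_0))$, the initialization gap $\mathrm{Tr}(G_{X_{\mathrm{tr}}}(w_0))-\mathrm{Tr}(G_{X_{\mathrm{te}}}(w_0))$, and $\mathrm{Tr}(G_{X_{\mathrm{te}}}(w_0))-\mathrm{Tr}(G_{X_{\mathrm{te}}}(w^\star))$. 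For the first and third, I use $|\mathrm{Tr}(M)|\le d\,\|M\|_{\mathrm{op}}$ for a symmetric $d\times d$ matrix $M$ at the \emph{small} dimensions $n_{\mathrm{tr}},n_{\mathrm{te}}$, combined with Assumption~\ref{ass:ntk-app}, which bounds these by $n_{\mathrm{tr}}\varepsilon_{\mathrm{NTK}}$ and $n_{\mathrm{te}}\varepsilon_{\mathrm{NTK}}$. For the middle term I rewrite $\mathrm{Tr}(G_X(w_0))=\mathrm{Tr}(H_{\mathrm{eff}}(w_0;X))$ and apply the same inequality at dimension $p$ together with Lemma~\ref{lem:lln-app}, giving $p\,o_{n_{\mathrm{tr}},n_{\mathrm{te}}}(1)$. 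Collecting, the bracket is at most $(n_{\mathrm{tr}}+n_{\mathrm{te}})\varepsilon_{\mathrm{NTK}}+p\,o_{n_{\mathrm{tr}},n_{\mathrm{te}}}(1)$; multiplying by $\sigma^2=\mathrm{Tr}(C)/p$ yields exactly \eqref{eq:tpv-trace-stability-bound-app}. Since Assumption~\ref{ass:ntk-app} holds with probability $\to1$ as $m\to\infty$ and the LLN term converges almost surely as $n_{\mathrm{tr}},n_{\mathrm{te}}\to\infty$, the bound holds with probability $\to1$.

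The only point needing care — and the main (mild) obstacle — is the bookkeeping of which matrix representation, and hence which dimension, enters each trace estimate: applying $|\mathrm{Tr}(M)|\le d\|M\|_{\mathrm{op}}$ with $d=p$ on the NTK-stability legs would produce a vacuous $p\,\varepsilon_{\mathrm{NTK}}$ in the overparameterized regime $p\gg n$, so it is essential to estimate those legs in the $n_X\times n_X$ Gram picture $G_X$. Conversely, the initialization gap can only be controlled in the $p\times p$ picture $H_{\mathrm{eff}}$ (the matrices $G_{X_{\mathrm{tr}}}$ and $G_{X_{\mathrm{te}}}$ have incompatible sizes and cannot be subtracted), so that leg necessarily carries a factor $p$ — which is harmless precisely because it is cancelled by the $1/p$ hidden in $\sigma^2=\mathrm{Tr}(C)/p$, leaving a clean $o_{n_{\mathrm{tr}},n_{\mathrm{te}}}(1)$.
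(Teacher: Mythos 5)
Your proposal is correct and follows essentially the same route as the paper's proof: reduce to $\sigma^2\,\mathrm{Tr}(G_X(\cdot))$ via isotropy, chain through $w_0$ with the triangle inequality, bound the two drift legs with $|\mathrm{Tr}(M)|\le n_X\|M\|_{\mathrm{op}}$ plus NTK stability, and bound the initialization gap with $|\mathrm{Tr}(M)|\le p\|M\|_{\mathrm{op}}$ plus the LLN lemma. Your explicit remark that $\sigma^2=\mathrm{Tr}(C)/p$ cancels the factor of $p$ in the LLN leg is the same bookkeeping the paper performs implicitly when rewriting its final bound in the $\mathrm{Tr}(C)$ form.
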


\begin{proof}
By Assumption~\ref{ass:isotropic_C}, $C=\sigma^2 \mathbf{I}_p$, so
\begin{align}
\mathrm{TPV}(w;X)
&=
\mathrm{Tr}\!\left(H_{\mathrm{eff}}(w;X)\,\sigma^2 \mathbf{I}_p\right)
~=~
\sigma^2\,\mathrm{Tr}\!\left(H_{\mathrm{eff}}(w;X)\right)
\nonumber\\
&=
\sigma^2\,\mathrm{Tr}\!\left(\frac{1}{n_X}J_X(w)^\top J_X(w)\right)
~=~
\sigma^2\,\mathrm{Tr}\!\left(\frac{1}{n_X}J_X(w) J_X(w)^\top\right)
~=~
\sigma^2\,\mathrm{Tr}\!\left(G_X(w)\right).
\label{eq:tpv-trace-as-ntk-trace-app}
\end{align}

We begin with a triangle inequality decomposition:
\begin{align}
&\bigl|\mathrm{TPV}(w^\star;X_{\mathrm{tr}})
      -\mathrm{TPV}(w^\star;X_{\mathrm{te}})\bigr|
\nonumber\\
&\qquad\le
\bigl|\mathrm{TPV}(w^\star;X_{\mathrm{tr}})
      -\mathrm{TPV}(w_0;X_{\mathrm{tr}})\bigr|
+
\bigl|\mathrm{TPV}(w_0;X_{\mathrm{tr}})
      -\mathrm{TPV}(w_0;X_{\mathrm{te}})\bigr|
+
\bigl|\mathrm{TPV}(w_0;X_{\mathrm{te}})
      -\mathrm{TPV}(w^\star;X_{\mathrm{te}})\bigr|.
\label{eq:tpv-triangle-app}
\end{align}

\paragraph{Step 1: Controlling the train/test drift from initialization via NTK stability:}
Fix a dataset $X$ of size $n_X$. Using \eqref{eq:tpv-trace-as-ntk-trace-app},
\[
\bigl|\mathrm{TPV}(w^\star;X)-\mathrm{TPV}(w_0;X)\bigr|
=
\sigma^2\,\bigl|\mathrm{Tr}\bigl(G_X(w^\star)-G_X(w_0)\bigr)\bigr|.
\]
For any matrix $A\in\mathbb{R}^{n_X\times n_X}$,
$|\mathrm{Tr}(A)|\le \mathrm{rank}(A)\,\|A\|_{\mathrm{op}}\le n_X\|A\|_{\mathrm{op}}$, hence
\[
\bigl|\mathrm{TPV}(w^\star;X)-\mathrm{TPV}(w_0;X)\bigr|
\le
\sigma^2\,n_X\,\|G_X(w^\star)-G_X(w_0)\|_{\mathrm{op}}.
\]
Applying Assumption~\ref{ass:ntk-app} yields, for $X\in\{X_{\mathrm{tr}},X_{\mathrm{te}}\}$,
\begin{equation}
\bigl|\mathrm{TPV}(w^\star;X)-\mathrm{TPV}(w_0;X)\bigr|
\le
\sigma^2\,n_X\,\varepsilon_{\mathrm{NTK}}.
\label{eq:tpv-drift-bound-app}
\end{equation}

\paragraph{Step 2: Controlling the initialization train--test gap via LLN:}
Using \eqref{eq:tpv-app} and Assumption~\ref{ass:isotropic_C},
\[
\mathrm{TPV}(w_0;X)=\sigma^2\,\mathrm{Tr}\!\left(H_{\mathrm{eff}}(w_0;X)\right).
\]
Therefore,
\[
\bigl|\mathrm{TPV}(w_0;X_{\mathrm{tr}})-\mathrm{TPV}(w_0;X_{\mathrm{te}})\bigr|
=
\sigma^2\,\bigl|\mathrm{Tr}\!\left(H_{\mathrm{eff}}(w_0;X_{\mathrm{tr}})
- H_{\mathrm{eff}}(w_0;X_{\mathrm{te}})\right)\bigr|.
\]
For symmetric $B\in\mathbb{R}^{p\times p}$, $|\mathrm{Tr}(B)|\le p\|B\|_{\mathrm{op}}$, so
\[
\bigl|\mathrm{TPV}(w_0;X_{\mathrm{tr}})-\mathrm{TPV}(w_0;X_{\mathrm{te}})\bigr|
\le
\sigma^2\,p\,\|H_{\mathrm{eff}}(w_0;X_{\mathrm{tr}})
- H_{\mathrm{eff}}(w_0;X_{\mathrm{te}})\|_{\mathrm{op}}.
\]
Applying Lemma~\ref{lem:lln-app} gives
\begin{equation}
\bigl|\mathrm{TPV}(w_0;X_{\mathrm{tr}})-\mathrm{TPV}(w_0;X_{\mathrm{te}})\bigr|
\le
\sigma^2\,p\cdot o_{n_{\mathrm{tr}},n_{\mathrm{te}}}(1).
\label{eq:tpv-init-gap-app}
\end{equation}

\paragraph{Step 3: Combine:}
Plugging \eqref{eq:tpv-drift-bound-app} (for $X_{\mathrm{tr}}$ and $X_{\mathrm{te}}$)
and \eqref{eq:tpv-init-gap-app} into \eqref{eq:tpv-triangle-app} yields
\[
\bigl|\mathrm{TPV}(w^\star;X_{\mathrm{tr}})
      -\mathrm{TPV}(w^\star;X_{\mathrm{te}})\bigr|
\le
\sigma^2\,n_{\mathrm{tr}}\,\varepsilon_{\mathrm{NTK}}
+
\sigma^2\,p\cdot o_{n_{\mathrm{tr}},n_{\mathrm{te}}}(1)
+
\sigma^2\,n_{\mathrm{te}}\,\varepsilon_{\mathrm{NTK}},
\]
which is exactly \eqref{eq:tpv-trace-stability-bound-app}.
\end{proof}

\section{TPV for Scalar and Vector Output Models}
\label{app_scalar_vector_tpv}
We clarify the definition of the $H_{\mathrm{eff}}$ and
Test Prediction Variance (TPV) for both scalar- and vector-output models.

\textbf{Scalar-output case.}
Let $f_w(x)\in\mathbb{R}$ be a scalar-output model with parameters
$w\in\mathbb{R}^p$, and define the Jacobian
$J(x;w)=\nabla_w f_w(x)\in\mathbb{R}^{1\times p}$.
For a dataset $X=\{x_i\}_{i=1}^n$, $H_{\mathrm{eff}}$ is
\[
H_{\mathrm{eff}}(w;X)
\;=\;
\frac{1}{n}\sum_{i=1}^n J(x_i;w)^\top J(x_i;w)
\;=\;
\frac{1}{n} J_X(w)^\top J_X(w),
\]
where $J_X(w)\in\mathbb{R}^{n\times p}$ stacks the per-sample Jacobians.
For a zero-mean parameter perturbation $\delta w$ with covariance $C$,
the (first-order) TPV is
\[
\mathrm{TPV}(X)
\;=\;
\frac{1}{n}\mathbb{E}_{\delta w}\!\left[\|f_{w+\delta w}(X)-f_w(X)\|_2^2\right]
\;\approx\;
\mathrm{Tr}\!\left(H_{\mathrm{eff}}(w;X)\,C\right).
\]

\textbf{Vector-output case.}
Let $f_w(x)\in\mathbb{R}^K$ (e.g., logits) and
$J(x;w)=\nabla_w f_w(x)\in\mathbb{R}^{K\times p}$.
$H_{\mathrm{eff}}$ is defined as
\[
H_{\mathrm{eff}}(w;X)
\;=\;
\frac{1}{n}\sum_{i=1}^n J(x_i;w)^\top J(x_i;w),
\]
which is again a $p\times p$ matrix. The definition of TPV now becomes,
\[
\mathrm{TPV}(X)
\;=\;
\frac{1}{n}\mathbb{E}_{\delta w}\!\left[\sum_{i=1}^n
\|f_{w+\delta w}(x_i)-f_w(x_i)\|_2^2\right]
\;\approx\;
\mathrm{Tr}\!\left(H_{\mathrm{eff}}(w;X)\,C\right).
\]
In particular, for isotropic perturbations $C=\sigma^2 I$,
$\mathrm{TPV}(X)=\sigma^2\,\mathrm{Tr}(H_{\mathrm{eff}}(w;X))$ in both cases.

\section{TPV for Linear Regression under Label Noise: Benign Overfitting}
\label{appendix:tpv-linear-label-noise-proof}

From $y = X\theta^\star + \varepsilon$ and the definition of $w^\star$,
\[
    w^\star 
    = X^\top(XX^\top)^{-1}(X\theta^\star + \varepsilon)
    = \theta^\star + X^\top(XX^\top)^{-1}\varepsilon,
\]
using the assumption that $\theta^\star$ lies in the row span of $X$.
Hence
\[
    \delta w := w^\star - \theta^\star 
    = X^\top(XX^\top)^{-1}\varepsilon,
\]
so the parameter covariance is
\[
    C 
    = \mathrm{Cov}(\delta w)
    = \sigma^2 X^\top (XX^\top)^{-2} X.
\]
By whitened input distribution assumption, $H_{\mathrm{eff}} = \mathbb{E}_x[xx^\top] = I_d$,
so
\[
    \mathrm{Tr}(H_{\mathrm{eff}} C)
    = \mathrm{Tr}\big(C\big) 
    = \sigma^2 \mathrm{Tr}\big(X^\top (XX^\top)^{-2} X\big).
\]
Using the cyclic property of trace, 
$\mathrm{Tr}\big(X^\top (XX^\top)^{-2} X\big) = \mathrm{Tr}\big((XX^\top)^{-1}\big)$,
yielding
\[
    \mathrm{Tr}(H_{\mathrm{eff}} C)
    = \sigma^2 \mathrm{Tr}\big((XX^\top)^{-1}\big),
\]
which matches the classical expression for the expected test prediction
variance in overparameterized linear regression.

\section{TPV for Non-Linear Regression Under Label Noise}
\label{appendix:tpv-label-noise-proof}

\subsection{Proof}
We consider a scalar-output model \(f_w : \mathbb{R}^d \to \mathbb{R}\) with parameters
\(w \in \mathbb{R}^p\), trained on a dataset \(\{(x_i, y_i)\}_{i=1}^n\), where $y_i = f_{\theta^\star}(x_i)$ (without noise) for some fixed function $f_{\theta^\star}$.
Let \(w^\star\) denote a parameter vector that satisfies $f_{w^\star}(x_i) = f_{\theta^\star}(x_i)$ for all training inputs.

We now perturb the training labels by additive noise:
\[
y' \;=\; y + \varepsilon,
\]
where \(y = (y_1,\dots,y_n)^\top \in \mathbb{R}^n\) is the original label vector and
\(\varepsilon \in \mathbb{R}^n\) is the label perturbation (noise).

We are interested in the stationary points 
\(w^\star + \delta w\) of the squared loss with
respect to the perturbed labels \(y'\) under the first order Taylor's expansion around \(w^\star\).
Under this approximation, the loss for \(w^\star + \delta w\)
and labels \(y' = y + \varepsilon\) is approximated by
\begin{equation}
\begin{aligned}
L(\delta w; y')
&:= \frac{1}{2} \big\| f(w^\star) + J \delta w - (y + \varepsilon) \big\|^2.
\end{aligned}
\label{eq:linearized-loss}
\end{equation}
where,
\[
f(w) \;:=\;
\begin{bmatrix}
f_w(x_1) \\
\vdots \\
f_w(x_n)
\end{bmatrix}
\in \mathbb{R}^n,
\]
and similarly defining the Jacobian of the training outputs w.r.t.\ parameters at \(w^\star\),
\begin{equation}
J \;:=\;
\begin{bmatrix}
g(x_1)^\top \\
\vdots \\
g(x_n)^\top
\end{bmatrix}
\in \mathbb{R}^{n \times p},
\label{eq:jacobian-def}
\end{equation}

Assuming that \(w^\star\) already fits the original labels well, so that
\(f(w^\star) = y\), we approximate
\[
f(w^\star) + J \delta w - (y + \varepsilon)
\;=\; J \delta w - \varepsilon,
\]
and hence
\begin{equation}
L(\delta w; y')
\;=\;
\frac{1}{2} \big\| J \delta w - \varepsilon \big\|^2.
\label{eq:linearized-loss-simplified}
\end{equation}

Thus, in the linearized regime, the effect of label noise is captured by the least-squares
problem
\begin{equation}
\min_{\delta w \in \mathbb{R}^p}
\frac{1}{2} \big\| J \delta w - \varepsilon \big\|^2.
\label{eq:ls-problem}
\end{equation}

The gradient of $L(\delta w; y')$ with respect to \(\delta w\) is
\[
\nabla_{\delta w} L(\delta w; y')
\;=\; J^\top (J \delta w - \varepsilon).
\]
Setting this to zero yields the \emph{normal equations}
\begin{equation}
J^\top J \delta w \;=\; J^\top \varepsilon.
\label{eq:normal-equations}
\end{equation}

Recall that we are interested in finding the min-norm solution among all the $\delta w$ that satisfy the above equation, i.e. $\arg \min_{\delta w} \{ \lVert \delta w \rVert^2 s.t. \nabla_{\delta w} L(\delta w; y')=0 \}$.
To analytically find this min-norm solution, we start with the compact SVD of \(J\):
\begin{equation}
J \;=\; U_r S V_r^\top,
\label{eq:svd-J}
\end{equation}
where
\begin{itemize}
    \item \(U_r \in \mathbb{R}^{n \times r}\) with orthonormal columns
          (\(U_r^\top U_r = I_r\)),
    \item \(V_r \in \mathbb{R}^{p \times r}\) with orthonormal columns
          (\(V_r^\top V_r = I_r\)),
    \item \(S \in \mathbb{R}^{r \times r}\) is diagonal with positive singular values
          \(S = \mathrm{diag}(s_1,\dots,s_r)\), where \(s_i > 0\).
\end{itemize}

Substituting \eqref{eq:svd-J} into \eqref{eq:normal-equations}, we have
\[
J^\top J \delta w
= V_r S U_r^\top U_r S V_r^\top \delta w
= V_r S^2 V_r^\top \delta w
\]
and
\[
J^\top \varepsilon
= (U_r S V_r^\top)^\top \varepsilon
= V_r S U_r^\top \varepsilon.
\]
Thus the normal equations become
\begin{equation}
V_r S^2 V_r^\top \delta w
\;=\;
V_r S U_r^\top \varepsilon.
\label{eq:normal-equations-svd}
\end{equation}
Left-multiplying by \(V_r^\top\) and defining
\(\alpha := V_r^\top \delta w \in \mathbb{R}^r\), we obtain
\[
S^2 \alpha = S U_r^\top \varepsilon
\quad\Rightarrow\quad
S \alpha = U_r^\top \varepsilon
\quad\Rightarrow\quad
\alpha = S^{-1} U_r^\top \varepsilon.
\]

Any \(\delta w\) can be decomposed into components along the range of \(V_r\)
and its orthogonal complement. Extending \(V_r\) to an orthonormal basis
\([V_r \; V_\perp] \in \mathbb{R}^{p \times p}\), we can write
\[
\delta w = V_r \alpha + V_\perp \beta,
\]
for some \(\beta \in \mathbb{R}^{p-r}\). The normal equations constrain only the
component along \(V_r\), giving \(\alpha = S^{-1} U_r^\top \varepsilon\), while
\(\beta\) is unconstrained (it lies in the nullspace of \(J\)).

The \emph{minimum-norm} solution is obtained by setting \(\beta = 0\), yielding
\begin{equation}
\delta w_{\min}
\;=\;
V_r S^{-1} U_r^\top \varepsilon.
\label{eq:delta-w-min}
\end{equation}

For a new test input \(x\), the first-order Taylor expansion gives
\begin{equation}
\label{eq_dx_intro}
f_{w^\star + \delta w}(x)
\;\approx\;
f_{w^\star}(x) + g(x)^\top \delta w,
\end{equation}
where \(g(x) = \nabla_w f_{w^\star}(x)\). Note that this is the gradient of the output, not the loss. Hence the prediction change is
\begin{equation}
\delta f(x)
\;:=\;
f_{w^\star + \delta w}(x) - f_{w^\star}(x)
\;\approx\;
g(x)^\top \delta w.
\label{eq:delta-f-def}
\end{equation}
Plugging in the minimum-norm solution \eqref{eq:delta-w-min}, we get
\begin{equation}
\delta f(x)
\;\approx\;
g(x)^\top V_r S^{-1} U_r^\top \varepsilon.
\label{eq:delta-f-pre-projection}
\end{equation}

We now decompose the test gradient \(g(x)\) in the parameter-space basis
\([V_r \; V_\perp]\):
\[
g(x) = V_r b(x) + V_\perp c(x),
\]
where
\[
b(x) := V_r^\top g(x) \in \mathbb{R}^r,
\qquad
c(x) := V_\perp^\top g(x) \in \mathbb{R}^{p-r}.
\]
Using orthogonality \(V_r^\top V_\perp = 0\), we have
\[
g(x)^\top V_r S^{-1} U_r^\top \varepsilon
= b(x)^\top S^{-1} U_r^\top \varepsilon.
\]
Thus the prediction change for the minimum-norm solution is
\begin{equation}
\delta f(x)
\;=\;
b(x)^\top S^{-1} U_r^\top \varepsilon,
\qquad
b(x) = V_r^\top g(x).
\label{eq:delta-f-svd-form}
\end{equation}

We now treat the label noise as random from any distribution such that it has zero mean and covariance
\(\mathbb{E}[\varepsilon \varepsilon^\top] = \sigma_\varepsilon^2 I_n\).

Define
\[
\eta := U_r^\top \varepsilon \in \mathbb{R}^r.
\]
Since \(U_r\) has orthonormal columns and \(\varepsilon\) is isotropic, we have
\[
\mathbb{E}[\eta] = 0, \qquad
\mathbb{E}[\eta \eta^\top] = \sigma_\varepsilon^2 I_r.
\]

Using \eqref{eq:delta-f-svd-form}, we can rewrite
\[
\delta f(x)
= b(x)^\top S^{-1} \eta.
\]
For a fixed test input \(x\), the mean and variance over label noise are:

\paragraph{Mean.}
\[
\mathbb{E}_\varepsilon[\delta f(x)]
= b(x)^\top S^{-1} \mathbb{E}[\eta]
= 0.
\]

\paragraph{Variance.}
\begin{equation}
\begin{aligned}
\mathbb{E}_\varepsilon\big[ (\delta f(x))^2 \big]
&= \mathbb{E}_\varepsilon\!\left[ b(x)^\top S^{-1} \eta \,\eta^\top S^{-1} b(x) \right] \\
&= b(x)^\top S^{-1} \mathbb{E}_\varepsilon[\eta \eta^\top] S^{-1} b(x) \\
&= \sigma_\varepsilon^2\, b(x)^\top S^{-2} b(x),
\end{aligned}
\label{eq:var-delta-f-x}
\end{equation}
where \(S^{-2} = S^{-1} S^{-1} = \mathrm{diag}(1/s_1^2,\dots,1/s_r^2)\).

To obtain the \emph{expected test prediction variance}, we now average
\eqref{eq:var-delta-f-x} over the test input distribution \(x \sim P_X\).
Define the second-moment matrix of the projected test gradients:
\begin{equation}
B := \mathbb{E}_{x \sim P_X}\big[\, b(x) b(x)^\top \,\big]
\;\in\; \mathbb{R}^{r \times r}.
\label{eq:G-def}
\end{equation}
Then
\begin{equation}
\begin{aligned}
\mathbb{E}_{x,\varepsilon} \big[ (\delta f(x))^2 \big]
&= \mathbb{E}_x \big[ \mathbb{E}_\varepsilon\big[ (\delta f(x))^2 \mid x \big] \big] \\
&= \mathbb{E}_x \big[ \sigma_\varepsilon^2\, b(x)^\top S^{-2} b(x) \big] \\
&= \sigma_\varepsilon^2 \,\mathbb{E}_x\big[ \mathrm{Tr}(S^{-2} b(x) b(x)^\top) \big] \\
&= \sigma_\varepsilon^2 \,\mathrm{Tr}\!\Big( S^{-2} \mathbb{E}_x[b(x) b(x)^\top] \Big) \\
&= \sigma_\varepsilon^2 \,\mathrm{Tr}\!\big( S^{-2} B \big).
\end{aligned}
\label{eq:expected-test-var-final}
\end{equation}

Since \(S^{-2}\) is diagonal, we can write this explicitly as
\begin{equation}
\mathbb{E}_{x,\varepsilon} \big[ (\delta f(x))^2 \big]
\;=\;
\sigma_\varepsilon^2 \sum_{i=1}^r \frac{B_{ii}}{s_i^2},
\label{eq:expected-test-var-diagonal}
\end{equation}
where \(B_{ii} = \mathbb{E}_x[ b_i(x)^2 ]\) is the expected squared projection of the Jacobian under the \textit{test data distribution} onto the \(i\)-th right singular vector of the training dataset's \textit{empirical Jacobian} \(J\). 

\subsection{Useful Regime and Pathological Case Analysis}
\label{app:label-noise-interpretation}

In this section we resolve two subtle but important conceptual points regarding: i) the regime in which TPV stability holds and the label-noise TPV (Theorem \ref{thm:tpv-label}) is useful; ii) pathological cases where label noise perturbations are too large and and TPV stability breaks.

If a model is sufficiently expressive, then it can interpolate
arbitrarily small perturbations to the logits on the training set.
If interpolation occurs, then we have \(f_{w^\star + \delta w}(x) - f_{w^\star}(x) = \varepsilon\) on the training set and the training-set TPV (Eq. \ref{eq_tpv_orig_defn}) becomes
exactly $\sigma_\varepsilon^2$ for \emph{every} such model.

If now, the
test-set TPV from Theorem \ref{thm:tpv-label} still depends on the Jacobian spectrum and
therefore varies across architectures, it may raise the concern that TPV stability has failed.
We argue below that this case happens when the label noise is too large to the point that either the first order assumption breaks (in which case neither Theorem \ref{thm:tpv-trace-stability} nor Theorem \ref{thm:tpv-label} apply), or the upper bound of the distance between training and test set TPV in Theorem \ref{thm:tpv-trace-stability} becomes too loose (since it directly depends on the trace of the perturbation covariance) and TPV stability does not hold based on the theorem.

If on the other hand, the label noise is not too large but the training and test distributions are identical, then test-set TPV becomes identical to the training set TPV ($\sigma_\varepsilon^2$). In this case, TPV stability holds trivially, but the label noise TPV object itself loses its discriminative power and is no longer informative.

To shed light on the above cases, we begin by noting that \textbf{Theorem \ref{thm:tpv-label} does \emph{not} assume interpolation of noisy labels.}
Theorem \ref{thm:tpv-label} analyzes the \emph{local} linearized problem around the clean
minimizer $w^\star$.
Let $J\in\mathbb{R}^{n\times p}$ be the Jacobian of network outputs at the
training inputs.
Under label noise $\varepsilon$, the linearized loss is
\[
    L(\delta w) \;=\; \tfrac12 \|J\delta w - \varepsilon\|^2,
\]
and we find the \emph{minimum-norm} stationary point
$\delta w_{\min}$ satisfying the normal equations
$J^\top (J\delta w_{\min}-\varepsilon)=0$.
The solution is
\[
    \delta w_{\min} \;=\;
    V_r S^{-1} U_r^\top \varepsilon,
\]
using the compact SVD $J = U_r S V_r^\top$ with rank $r\le n$.
Theorem \ref{thm:tpv-label} itself makes no assumption that
$J \delta w_{\min} = \varepsilon$; indeed,
\[
    J \delta w_{\min}
    \;=\;
    U_r U_r^\top \varepsilon,
\]
which equals $\varepsilon$ \emph{only} when $r=n$ (full row rank).
Thus, interpolation of noisy labels is a special case and is not used
anywhere in the proof of Theorem \ref{thm:tpv-label}.

\paragraph{Training TPV in four local regimes.}
We now distinguish the four local regimes relevant for interpreting TPV.

\medskip
\noindent
\textbf{(1) Ridge/local regime:}
Our experimental setup constrains the noisy models to remain close to $w^\star$, through explicit proximity regularization.
In the linear approximation, this corresponds to solving the \emph{ridge} problem
\[
    \min_{\delta w}
    \tfrac12 \|J\delta w - \varepsilon\|^2
    + \tfrac{\lambda}{2}\|\delta w\|_2^2,
\]
with $\lambda>0$.
The minimizer satisfies
$(J^\top J + \lambda I)\delta w_\lambda = J^\top\varepsilon$, giving
\[
    J\delta w_\lambda
    \;=\;
    U_r \,\mathrm{diag}\!\Big(\frac{s_i^2}{s_i^2+\lambda}\Big)\,U_r^\top
    \varepsilon,
\]
where $s_i$ are the singular values of $J$.
Hence
\[
    \operatorname{TPV}_{\text{train}}
    = \frac{1}{n}
      \mathbb{E}\big[\|J\delta w_\lambda\|_2^2\big]
    = \sigma_\varepsilon^2 \cdot \frac{1}{n}
      \sum_{i=1}^r
      \Big(\frac{s_i^2}{s_i^2+\lambda}\Big)^2
    \;<\; \sigma_\varepsilon^2,
\]
with explicit dependence on the Jacobian spectrum.
Even when $r=n$ (full row rank), the shrinkage factors
$s_i^2/(s_i^2+\lambda)$ are strictly less than~1, so the noisy labels are
\emph{not} interpolated and $\operatorname{TPV}_{\text{train}}$ remains
strictly below $\sigma_\varepsilon^2$.
The test TPV is given by Theorem \ref{thm:tpv-label} (with $S^{-2}$ replaced by ridge-shrunk directions),
and Theorem \ref{thm:tpv-trace-stability} guarantees that train and test TPV remain close
in this small-perturbation regime (in theory for extremely wide networks).

\medskip
\noindent
\noindent
\textbf{(2) Pure least-squares, full-row-rank interpolation (degenerate case):}
We now consider the case $\lambda = 0$, and the linearized least-squares problem admits an exact interpolating 
solution, and therefore, $\mathrm{rank}(J)=n$. Two related sub-regimes arise.

\medskip
\emph{(2a) Finite-variance interpolation:}
When $r=n$ and $\lambda=0$, the minimum-norm solution satisfies
$J\delta w_{\min} = \varepsilon$, so the noisy labels are interpolated
exactly in the linearized model. Consequently,
\[
    \operatorname{TPV}_{\mathrm{train}}
    = \sigma_\varepsilon^2.
\]
If the train and test distributions match and the Jacobian moments
concentrate, then,
\[
    B 
    \;=\; V^\top H_{\text{eff}}(w^\star; X_{\text{te}}) V 
    \;\approx\; V^\top \!\left( \tfrac{1}{n} V S^2 V^\top \right) V
    \;=\; \tfrac{1}{n} S^2,
\]
i.e., $B_{ii} \approx s_i^2/n$, and Theorem \ref{thm:tpv-label} gives
\[
    \operatorname{TPV}_{\mathrm{test}}
    \approx \sigma_\varepsilon^2\frac{r}{n}
    = \sigma_\varepsilon^2.
\]
Thus training and test TPV coincide exactly, and TPV stability holds
trivially.  However, in this interpolation regime TPV loses all
discriminative power across architectures, since its value is the same
constant $\sigma_\varepsilon^2$ for all sufficiently expressive models.

\medskip
\emph{(2b) Infinitesimal-variance interpolation ($\sigma_\varepsilon^2 \to 0$):}
A more extreme version of the interpolation regime occurs when the
variance of the injected logit noise is taken to be arbitrarily small.
If $\sigma_\varepsilon^2$ is sufficiently small, then for any model with
$r=n$ the optimization remains in the linear regime and still satisfies
$J\delta w_{\min}=\varepsilon$. In this case the covariance of the
parameter perturbation,
$C = \mathbb{E}[\delta w_{\min}\delta w_{\min}^\top]
   = \sigma_\varepsilon^2 V S^{-2} V^\top$,
shrinks to zero as $\sigma_\varepsilon^2\to 0$. Therefore both
\[
    \operatorname{TPV}_{\mathrm{train}}
    \to 0,
    \qquad
    \operatorname{TPV}_{\mathrm{test}}
    \to 0,
\]
and once again TPV stability
holds trivially. As in sub-regime (2a), TPV becomes
\emph{uninformative}: although stability is preserved, TPV provides no
basis for discriminating among models, as the entire TPV curve collapses
to zero in the limit $\sigma_\varepsilon^2\to 0$.

\medskip
Across both sub-regimes (finite or infinitesimal noise), TPV stability
remains intact but TPV becomes a constant across architectures, and thus
ceases to encode meaningful geometric or generalization differences.

\medskip
\noindent
\textbf{(3) Low-rank or effectively narrow networks, and small perturbations:}
If $r<n$, then even with $\lambda=0$, the interpolation residual
$\varepsilon_{\mathrm{res}} = (I-U_r U_r^\top)\varepsilon$ is non-zero, and
\[
    \operatorname{TPV}_{\text{train}}
    = \sigma_\varepsilon^2 \frac{r}{n}
    \;<\; \sigma_\varepsilon^2.
\]
In this regime, both training and test TPV depend on $r$ and the Jacobian
alignment encoded in $B$, and TPV is robustly discriminative across
architectures.
TPV stability becomes non-trivial follows from Theorem \ref{thm:tpv-trace-stability}.

\medskip
\noindent
\textbf{(4) Large-noise (Figure~\ref{fig:TPV_label_noise_sigma0.1}):}
There remains one additional regime, which we empirically probe in
Fig.~\ref{fig:TPV_label_noise_sigma0.1}:
the \emph{large-noise} regime.
For sufficiently large noise variance $\sigma_\varepsilon^2$, SGD driven by
the noisy logits may move the parameters far away from $w^\star$. 
In this case, the parameter increment $\delta w$ is no longer small. This can lead to at least one of two situations:
(i) the trace of the induced parameter covariance matrix is large,
(ii) the actual solution reached by SGD is not well-approximated by the
\emph{local} min-(ridge-)norm solution of the linearized problem at $w^\star$ and the first order approximation breaks.

Empirically, in Figure~\ref{fig:TPV_label_noise_sigma0.1}, we observe that the training noise
is effectively interpolated (the network fits the large noisy perturbations),
so the \emph{empirical} training TPV saturates near $\sigma_\varepsilon^2$ and
becomes almost independent of width, while the test TPV still varies with width, but closely matches the theoretical TPV. Thus, while TPV stability breaks, theorem \ref{thm:tpv-label} holds. Since theorem \ref{thm:tpv-label} requires the first order approximation to hold, we conjecture that TPV stability breaks because the large noise induces parameter perturbations whose covariance matrix trace is large, which in turn weakens the upper bound in the TPV trace stability theorem \ref{thm:tpv-trace-stability} and diminishes the TPV stability guarantee.

\paragraph{Summary:}
The discussion above highlights that TPV stability does \emph{not} fail
in either of the interpolation regimes described in (2a)–(2b).  When the
linearized problem interpolates the noisy labels exactly (either for
finite $\sigma_\varepsilon^2$ or in the limit $\sigma_\varepsilon^2\to
0$), both the training and test TPV collapse to the same constant
($\sigma_\varepsilon^2$) or to zero, respectively.  In these limits,
Theorem~2.1 is satisfied trivially.
What is lost in these regimes is not the validity of TPV stability but
the \emph{utility} of TPV as a discriminative signal across
architectures—TPV becomes identical for all sufficiently expressive
models.

On the other hand, if the parameter perturbations induced by label noise is too large, it can make the upper bound in the TPV stability theorem weak, or worse, break the first order assumption. This diminishes TPV stability guarantee. In this situation, if the first order assumption holds, theorem \ref{thm:tpv-label} remains applicable (Figure~\ref{fig:TPV_label_noise_sigma0.1}), however, TPV stability nonetheless fails and training set cannot be used for estimating robustness.

The practically relevant regime, and the one probed in our
experiments (e.g. Figure \ref{fig:TPV_label_noise}), is the nontrivial, small perturbation setting
where the optimization remains local, TPV stability holds, and the noisy labels are \emph{not}
interpolated in the linear approximation. In this regime, TPV retains meaningful
dependence on the Jacobian geometry as predicted by Theorem \ref{thm:tpv-label} and training set estimate of TPV remains a good estimator of the test set estimate.

\subsection{Practical Computation of the Label–Noise TPV Quantity}
\label{app:label-noise-computation}

\subsubsection{Idealized Linearized Response Under Label Noise}

The label–noise TPV theorem (Section~\ref{sec_label_noise}) analyzes how 
\emph{infinitesimal} perturbations to the \emph{training labels} propagate through the 
optimization dynamics to induce fluctuations in \emph{test predictions}.  Let
\[
    J_{\mathrm{tr}} \in \mathbb{R}^{n_{\mathrm{tr}} \times p}
    \qquad\text{and}\qquad
    J_{\mathrm{te}} \in \mathbb{R}^{n_{\mathrm{te}} \times p}
\]
denote the Jacobians of the training and test logits with respect to the model parameters,
evaluated at the reference point $\theta^\star$.

For a perturbation $\varepsilon \in \mathbb{R}^{n_{\mathrm{tr}}}$ to the training labels,
the idealized first–order parameter displacement is the \emph{minimum–norm} solution of
\begin{equation}
    \delta w^\star \;=\;
    \arg \min_{\delta w \in \mathbb{R}^p}
    \frac12 \big\| J_{\mathrm{tr}} \,\delta w - \varepsilon \big\|_2^2,
    \label{eq:minnorm-appendix}
\end{equation}
which has the closed–form expression
\[
    \delta w^\star
    \;=\;
    J_{\mathrm{tr}}^\top\!
    \left( J_{\mathrm{tr}} J_{\mathrm{tr}}^\top \right)^{+}
    \varepsilon.
\]
This induces the test–prediction perturbation
\[
    \delta f_{\mathrm{te}}^\star
    \;=\;
    J_{\mathrm{te}} \,\delta w^\star
    \;=\;
    J_{\mathrm{te}} J_{\mathrm{tr}}^\top
    \left( J_{\mathrm{tr}}J_{\mathrm{tr}}^\top \right)^{+}
    \varepsilon.
\]
Hence the label–noise TPV is
\[
    \mathrm{TPV}_{\mathrm{label}}
    \;=\;
    \mathbb{E}_{\varepsilon}
    \!\left[
        \big\| \delta f_{\mathrm{te}}^\star \big\|_2^2
    \right],
\]
and therefore depends \emph{jointly} on both the training and test Jacobians.
The training Jacobian determines how label noise induces parameter motion,
while the test Jacobian determines how that motion affects generalization.

\subsubsection{Dual Formulation and Ideal Numerical Computation}

Equation~\eqref{eq:minnorm-appendix} admits the well–known dual representation
\[
    \left(
        J_{\mathrm{tr}}J_{\mathrm{tr}}^\top + \lambda I
    \right)\alpha = \varepsilon,
    \qquad
    \delta w^\star = J_{\mathrm{tr}}^\top \alpha,
\]
where $\lambda \ge 0$ is used for finite–variance perturbations or numerical stability.
Thus, computing the exact linearized TPV requires repeatedly solving systems of the form
\begin{equation}
    A\,\alpha = \varepsilon,
    \qquad
    A := J_{\mathrm{tr}} J_{\mathrm{tr}}^\top + \lambda I.
\end{equation}
Importantly, iterative solvers such as conjugate gradient (CG) require only
matrix–vector products of the form $\alpha \mapsto A\alpha$.
These can be implemented via standard Jacobian–vector and vector–Jacobian
products in modern autodiff systems without forming the full Jacobian.
For smaller networks or modest $n_{\mathrm{tr}}$, CG converges rapidly, enabling
exact numerical evaluation of the theoretical TPV.

\subsubsection{Empirical Observation: Deep–Network Jacobians Are Extremely Ill–Conditioned}

While we were able to compute the SVD of the training set and test set Jacobians for the synthetic data experiments in Fig. \ref{fig:TPV_label_noise}, full SVD is not possible for modern architectures like CIFAR-10 MobileNetV2. So
we attempted to compute the linearized label–noise TPV using the above CG formulation.
Despite using:

\begin{itemize}
    \item subsampled training sets ($n_{\mathrm{tr}} = 2000$--$4000$),
    \item ridge regularization ($\lambda > 0$),
    \item diagonal preconditioners via Hutchinson estimator,
\end{itemize}

CG \emph{failed to converge} in essentially all settings.
Residuals stagnated or oscillated rather than decreasing, and solutions
did not approach the correct linearized displacement even after hundreds of iterations.
Rayleigh–quotient diagnostics revealed
spectral scales for $A$ spanning $10^6$--$10^8$, implying condition numbers far beyond
the regime in which iterative solvers are computationally viable.

Constructing $A$ explicitly is also infeasible:
even with $n_{\mathrm{tr}} = 4000$, computing $A$ requires thousands of Jacobian–vector
products \emph{per logit}.
Thus, for large deep networks, the exact linearized TPV
$\varepsilon^\top A^{-1}\varepsilon$ is a well–defined mathematical object but a
\emph{numerically intractable} one.

\subsubsection{\textbf{Proposed Label Noise TPV Algorithm}}
\label{sec_label_noise_algorithm}
Because direct inversion of $J_{\mathrm{tr}}J_{\mathrm{tr}}^\top$ is numerically intractable
for deep networks, we approximate the idealized TPV dynamics using a
\emph{local perturb–and–retrain procedure} show below. We perform $R$ independent optimization runs where each run performs the following procedure:

\begin{enumerate}
    \item Initialize $w$ to the reference parameters $w^\star$.
    \item Perturb the predicted labels on the training data: $y_i^\prime = y_i + \varepsilon$, where $y_i := f_{w^\star}(x_i)$. We i.i.d. sample $\varepsilon \sim \mathcal{N}(0, \sigma^2)$, where $\sigma := c \, \sigma_0$ for all experiments throughout this paper unless specified otherwise. We typically use $\sigma_0=0.1$, and $c := \text{mean}_{x \in X}(\lvert f_{w^\star}(x) \rvert)$, where $X$ is a small subset of the training set, and the operator 'mean' computes the average over both samples in $X$ and logit dimensions (if $f$ has multi-dimensional output). The idea behind using adaptive scale $c$ is to scale noise proportional to the scale at which the model output distribution lives. We find this to be especially helpful when comparing cross-architecture TPV estimates (see \S \ref{app:bias_variance_appendix} for a discussion). 
    \item Train $f_w$ on the training set for a small number of steps using SGD (or AdamW) without weight decay on the objective
        \[
           \min_w \frac{1}{2} \sum_{i=1}^{n_{\text{train}}} \big\| f_{w}(x_i) - y_i^\prime) \big\|^2
            \;+\;
            \frac{\gamma}{2}\|w - w^\star\|_2^2,
        \]
        where the regularization is to ensure that the optimization remains in the local neighborhood of $w^\star$ where the linearized approximation is valid. We denote the resulting function as $f^{(r)}$, where $r$ denotes the $r^{th}$ run (out of $R$ runs). In most of our experiments, we found that $\gamma=0$ work, i.e., no proximity regularization is needed. As a side note, we find that using gradient clipping typically hurts TPV estimation reliability, so we do not use it in our experiments.
\end{enumerate}
Estimate TPV (on training or test set) as the empirical variance of the logit prediction across multiple independent perturbation runs:
\begin{align}
    \widehat{\mathrm{TPV}}
=
\frac{1}{R}
\sum_{r=1}^R \frac{1}{n} \sum_{i=1}^{n}
\bigl\| f^{(r)}(x_i) - f_{w^\star}(x_i)\bigr\|_2^2,
\end{align}
where, $R$ is the number of runs with independent perturbations, $n$ is the number of samples, $f^{(r)}$ is the function learned by minimizing the objective from step 3 above, and $f^\star$ is the model with weights $w^\star$. The above TPV estimate is training set TPV if the samples used in the above equation are from the training set, and test set TPV if they come from the test set.

While the above objective does not yield the \emph{exact} minimum–norm linearized solution, it provides a robust local approximation to the TPV dynamics in regimes where the exact computation is infeasible.

\textbf{Important Practical Considerations:} There are a couple of important practical considerations when using SGD based fine-tuning on noisy target logits for estimating label noise TPV:

\begin{enumerate}
    \item Models need to be trained in eval mode: we perturb the logits of the clean model's prediction with Gaussian noise and train a copy of the reference model to fit these new targets, which act as infinitesimal change in targets. To achieve this faithfully, training must be done in eval mode, i.e., modules like batch norm and dropout should not be active. The reason is that if these modules are in training mode, even a zero variance perturbation causes the model to have perturbed targets, which conflicts with our goal.

    \item Mini-batch shuffling: All sources of randomness other than label noise should be removed as much as possible to isolate the effect of label noise when measuring TPV (e.g. different mini-batch in each epoch). In practice, we do use mini-batch SGD for noisy label fine-tuning for efficiency and to make the training loss go down in some cases. However, we ensure that the sequence of mini-batch remains the same in each epoch or at the very least across different runs. Even though mini-batch gradient is a sum of the expected gradient plus a noise vector, removing random shuffling in this way ensures that each independent run sees the same sequence of mini-batches.

    \item MSE training loss must go down during training. This can be easily overlooked, and if the loss does not go down or diverges, it can easily lead to incorrect TPV estimates. We found this to be especially true in the case of our preliminary ImageNet experiments, where is was extremely difficult to fit noisy target logits, and had to choose the experimental protocol carefully to avoid this problem.
\end{enumerate}

\section{TPV for SGD Stationary Noise}
\label{sec_tpv_sgd_proof}
\paragraph{Setup.}
Consider a scalar-output model $f(x;w)$ trained on a fixed dataset
$\{(x_i,y_i)\}_{i=1}^n$ with squared loss
\[
L(w)
= \frac{1}{2n}\sum_{i=1}^n \varepsilon_i(w)^2,
\qquad
\varepsilon_i(w) := f(x_i;w) - y_i.
\]
Let $J_i(w) := \nabla_w f(x_i;w) \in \mathbb{R}^{1\times p}$ denote
the output--parameter Jacobian row, and let
$J(w)\in\mathbb{R}^{n\times p}$ be the matrix stacking these rows.
Then the per-sample gradient is
\[
g_i(w) := \nabla_w \ell_i(w)
= \varepsilon_i(w)\,J_i(w)^\top.
\]

\subsection{Relation Between Gradient Covariance Matrix and Hessian near Minima}
\label{sec_grad_cov_h_eff}

Note that similar proportionality between SGD gradient covariance and curvature (Hessian / Fisher) has been shown in prior work \citep{wu2022alignment,mori2022powerlaw,mandt2017variational,kuehn2023correlated}, typically under population or online assumptions, specific model classes (linear networks, RFMs) or log-likelihood objectives. Here we make this connection explicit for finite-sample nonlinear squared-loss regression on a fixed dataset with no label noise.

Throughout this derivation we fix a parameter $w$ (later evaluated in a
small-loss regime near an attractor $w^\star$) and study the covariance of
the stochastic gradients induced by random mini-batching, without assuming
any stochasticity in the dataset itself.

\paragraph{Per-sample and mini-batch gradient covariance.}
Let $I$ be a random index uniformly distributed in $\{1,\dots,n\}$.
Then
\[
g_I(w) = \varepsilon_I(w)\,J_I(w)^\top.
\]
Define the diagonal matrix
\[
A_2(w) := \mathrm{diag}\!\big(\varepsilon_1(w)^2,\dots,\varepsilon_n(w)^2\big),
\]
and the full-batch gradient
\[
g(w) = \nabla_w L(w)
= \frac{1}{n}\sum_{i=1}^n g_i(w)
= \frac{1}{n} J(w)^\top \varepsilon(w),
\]
where $\varepsilon(w)\in\mathbb{R}^n$ stacks the residuals
$\varepsilon_i(w)$.

The per-sample gradient covariance (over the random index $I$) is
\begin{align}
\Sigma_{\text{sample}}(w)
&:= \mathrm{Cov}(g_I(w))
= \mathbb{E}[g_I(w)g_I(w)^\top] - g(w)g(w)^\top \\
&= \frac{1}{n}\sum_{i=1}^n \varepsilon_i(w)^2\,J_i(w)^\top J_i(w)
   \;-\;
   \frac{1}{n^2}
   \Big(\sum_{i=1}^n \varepsilon_i(w) J_i(w)^\top\Big)
   \Big(\sum_{j=1}^n \varepsilon_j(w) J_j(w)^\top\Big)^\top \\
&= \frac{1}{n} J(w)^\top A_2(w) J(w)
   \;-\;
   \frac{1}{n^2} J(w)^\top \varepsilon(w)\,\varepsilon(w)^\top J(w).
\end{align}
Thus we have the exact decomposition
\begin{equation}
\boxed{
\Sigma_{\text{sample}}(w)
= \frac{1}{n} J(w)^\top A_2(w) J(w)
  \;-\;
  \frac{1}{n^2} J(w)^\top \varepsilon(w)\,\varepsilon(w)^\top J(w).
}
\label{eq:sigma-sample-exact}
\end{equation}

Now consider a mini-batch $B$ of size $b$, sampled uniformly with
replacement, and the corresponding mini-batch gradient
\[
g_B(w) := \frac{1}{b}\sum_{i\in B} g_i(w).
\]
Conditioned on $w$, the covariance of $g_B(w)$ is
\[
\Sigma_\xi(w)
:= \mathrm{Cov}(g_B(w) - g(w)\mid w)
= \mathrm{Cov}(g_B(w)\mid w)
= \frac{1}{b}\,\Sigma_{\text{sample}}(w).
\]
Using~\eqref{eq:sigma-sample-exact}, we obtain
\begin{equation}
\boxed{
\Sigma_\xi(w)
= \frac{1}{bn} J(w)^\top A_2(w) J(w)
  \;-\;
  \frac{1}{bn^2} J(w)^\top \varepsilon(w)\,\varepsilon(w)^\top J(w).
}
\label{eq:sigma-xi-exact}
\end{equation}

\paragraph{Approximation 1: dropping the $1/n^2$ term.}
The second term in~\eqref{eq:sigma-xi-exact} carries an explicit factor
$1/n^2$ and is the image, under $J(w)^\top(\cdot)J(w)$, of a rank-$1$ matrix
$\varepsilon(w)\varepsilon(w)^\top$.
In contrast, the first term scales as $1/n$ and involves a diagonal matrix
$A_2(w)$.
For large $n$, it is therefore natural to approximate
\begin{equation}
\Sigma_\xi(w)
\;\approx\;
\frac{1}{bn} J(w)^\top A_2(w) J(w).
\label{eq:sigma-xi-approx1}
\end{equation}

\paragraph{Approximation2: Trace and residual-variance approximation.}
We now focus on the trace of the gradient noise covariance.
Using cyclicity of the trace,
\begin{align}
\mathrm{Tr}\big(\Sigma_\xi(w)\big)
&\approx \frac{1}{bn}\,\mathrm{Tr}\big(J(w)^\top A_2(w) J(w)\big) \\
&= \frac{1}{bn}\,\mathrm{Tr}\big(A_2(w) J(w)J(w)^\top\big) \\
&= \frac{1}{bn}\,\sum_{i=1}^n \varepsilon_i(w)^2\,(J(w)J(w)^\top)_{ii}.
\end{align}
At the late-time SGD equilibrium, we model the residuals as random
variables induced by the stationary distribution of $w_t$ and make the
following approximation:

\begin{itemize}
\item[(i)] the squared residuals
$\varepsilon_i(w)^2$ are approximately i.i.d.\ across samples, with
common variance
$\mathbb{E}[\varepsilon_i(w)^2] = \sigma_\varepsilon^2$;

\item[(ii)] the residual magnitudes are approximately independent of the
geometry encoded in the diagonal entries $(J(w)J(w)^\top)_{ii}$.
\end{itemize}

Under these assumptions,
\begin{align}
\mathbb{E}\big[\mathrm{Tr}(\Sigma_\xi(w))\big]
&\approx \frac{1}{bn}\sum_{i=1}^n
  \mathbb{E}[\varepsilon_i(w)^2]\,(J(w)J(w)^\top)_{ii} \\
&\approx \frac{\sigma_\varepsilon^2}{bn}
  \sum_{i=1}^n (J(w)J(w)^\top)_{ii}
= \frac{\sigma_\varepsilon^2}{bn}\,\mathrm{Tr}\big(J(w)J(w)^\top\big).
\end{align}
Define the effective Jacobian curvature
\begin{equation}
H_{\mathrm{eff}}(w) := \frac{1}{n} J(w)^\top J(w),
\label{eq:Heff-def}
\end{equation}
so that $\mathrm{Tr}(J(w)J(w)^\top) = n\,\mathrm{Tr}(H_{\mathrm{eff}}(w))$.
We obtain
\begin{equation}
\boxed{
\mathbb{E}\big[\mathrm{Tr}(\Sigma_\xi(w))\big]
\;\approx\;
\frac{\sigma_\varepsilon^2}{b}\,\mathrm{Tr}\big(H_{\mathrm{eff}}(w)\big).
}
\label{eq:trace-sigma-vs-Heff}
\end{equation}
Thus, up to a scalar factor determined by the residual variance and the
mini-batch size $b$, the trace of the SGD gradient covariance is
proportional to the trace of the effective Jacobian curvature
$H_{\mathrm{eff}}$.

\subsection{Relation Between $H_{\mathrm{eff}}$ And Hessian Near Minima}
\label{sec_h_eff_hessian}

For the squared loss, the Hessian of $L(w)$ is
\begin{align}
\nabla_w^2 L(w)
&= \frac{1}{n}\sum_{i=1}^n
   \big( J_i(w)^\top J_i(w)
        + \varepsilon_i(w)\,\nabla_w^2 f(x_i;w)
   \big) \\
&= \underbrace{\frac{1}{n}\sum_{i=1}^n J_i(w)^\top J_i(w)}_{H_{\mathrm{eff}}(w)}
   \;+\;
   R(w),
\end{align}
where
\[
R(w) := \frac{1}{n}\sum_{i=1}^n \varepsilon_i(w)\,\nabla_w^2 f(x_i;w)
\]
collects the second-derivative terms weighted by residuals.
In particular,
\begin{equation}
\nabla_w^2 L(w) = H_{\mathrm{eff}}(w) + R(w).
\label{eq:Hessian-vs-Heff}
\end{equation}

Suppose that $w$ lies in a small-loss regime (convergence) where $|\varepsilon_i(w)|$
is uniformly small for all $i$, and that the second derivatives
$\nabla_w^2 f(x_i;w)$ are uniformly bounded in operator norm.
Then $\|R(w)\|$ is small, and $H_{\mathrm{eff}}(w)$ provides a good
approximation to the true Hessian:
\begin{equation}
\|\nabla_w^2 L(w) - H_{\mathrm{eff}}(w)\|
= \|R(w)\|
\;\le\;
\frac{1}{n}\sum_{i=1}^n |\varepsilon_i(w)|\,\big\|\nabla_w^2 f(x_i;w)\big\|
\;\ll\; \|H_{\mathrm{eff}}(w)\|.
\end{equation}
Consequently, in the late-time regime where training loss is small on
all samples, we have
\begin{equation}
\boxed{
H_{\mathrm{eff}}(w) \approx \nabla_w^2 L(w)
}
\label{eq:Heff-Hessian-approx}
\end{equation}

\subsection{SGD Late Dynamics and Relation Between TPV and Hessian}

We consider the late-time dynamics of SGD near an attractor
$w^\star$ of the training dynamics. Writing deviations
$\delta w_t = w_t - w^\star$, a single SGD step with learning
rate $\eta$ and mini-batch gradient $g_B(w_t)$ can be written as
\begin{equation}
  \delta w_{t+1}
  \;=\;
  \delta w_t \;-\; \eta\,g_B(w_t).
\end{equation}
Let $g(w_t) = \nabla L(w_t)$ denote the full-batch gradient of the
empirical loss $L$, and define the mini-batch noise as
\begin{equation}
  \xi_t
  \;:=\;
  g_B(w_t) - g(w_t),
  \qquad
  \mathbb{E}[\xi_t \mid w_t] = 0.
\end{equation}
Near $w^\star$ we linearize the deterministic drift as
\begin{equation}
  g(w_t)
  \;\approx\;
  \nabla_w^2 L(w)\,\delta w_t,
\end{equation}
where $\nabla_w^2 L(w)$ is the Hessian. We now use the result from appendix \ref{sec_h_eff_hessian} stating $H_{\mathrm{eff}} \approx \nabla_w^2 L(w)$. Thus,
\begin{equation}
  g_B(w_t)
  \;\approx\;
  H_{\mathrm{eff}}\,\delta w_t +  \xi_t,
\end{equation}

With this notation, the linearized SGD dynamics become
\begin{equation}
  \delta w_{t+1}
  \;=\;
  (I - \eta H_{\mathrm{eff}})\,\delta w_t
  \;-\;
  \eta\,\xi_t
  \;=\;
  A\,\delta w_t - \eta\,\xi_t,
  \qquad
  A := I - \eta H_{\mathrm{eff}}.
\end{equation}

Let $C_t = \mathbb{E}[\delta w_t \delta w_t^\top]$ denote the covariance
of the parameters at time $t$, and let
$\Sigma_\xi = \mathrm{Cov}(\xi_t)$ denote the covariance of the mini-batch
noise at $w^\star$.
Using the linear update and the fact that $\xi_t$ is independent of
$\delta w_t$ and has zero mean, we obtain the standard covariance recursion
\begin{equation}
  C_{t+1}
  \;=\;
  A C_t A^\top \;+\; \eta^2 \Sigma_\xi.
\end{equation}
Assuming convergence to a stationary distribution, we set
$C_{t+1} = C_t = C_{\mathrm{sgd}}$ and obtain the discrete Lyapunov equation
\begin{equation}
  C_{\mathrm{sgd}}
  \;=\;
  A C_{\mathrm{sgd}} A^\top \;+\; \eta^2 \Sigma_\xi.
  \label{eq:discrete-lyapunov}
\end{equation}

Thus,
\begin{align}
  A C A^\top
  &= (I - \eta H_{\mathrm{eff}})\,C_{\mathrm{sgd}}\,(I - \eta H_{\mathrm{eff}})^\top \\
  &= C_{\mathrm{sgd}} - \eta (H_{\mathrm{eff}} C_{\mathrm{sgd}} + C_{\mathrm{sgd}} H_{\mathrm{eff}}^\top) + \eta^2 H_{\mathrm{eff}} C_{\mathrm{sgd}} H_{\mathrm{eff}}^\top.
\end{align}
Plugging this into Eq.~\eqref{eq:discrete-lyapunov} and canceling $C_{\mathrm{sgd}}$ on both
sides yields
\begin{equation}
  0
  \;\approx\;
  - \eta (H_{\mathrm{eff}} C_{\mathrm{sgd}} + C_{\mathrm{sgd}} H_{\mathrm{eff}}^\top) + \eta^2 H_{\mathrm{eff}} C_{\mathrm{sgd}} H_{\mathrm{eff}}^\top + \eta^2 \Sigma_\xi.
\end{equation}
Neglecting $O(\eta^2)$ terms in the drift (the $H_{\mathrm{eff}} C_{\mathrm{sgd}} H_{\mathrm{eff}}^\top$ term) under the assumption of a sufficiently small learning rate at convergence, we obtain
the continuous-time Lyapunov equation
\begin{equation}
  H_{\mathrm{eff}}\,C_{\mathrm{sgd}} \;+\;
  C_{\mathrm{sgd}}\,H_{\mathrm{eff}}^\top
  \;\approx\;
  \eta\,\Sigma_\xi.
  \label{eq:ou-lyapunov}
\end{equation}
Taking the trace of both sides gives
\begin{equation}
  \mathrm{Tr}\!\big(H_{\mathrm{eff}} C_{\mathrm{sgd}}\big)
  \;+\;
  \mathrm{Tr}\!\big(C_{\mathrm{sgd}} H_{\mathrm{eff}}^\top\big)
  \;\approx\;
  \eta\,\mathrm{Tr}(\Sigma_\xi).
\end{equation}
Using cyclicity of the trace and the symmetry of $C_{\mathrm{sgd}}$, we have
\begin{equation}
  \mathrm{Tr}\!\big(C_{\mathrm{sgd}} H_{\mathrm{eff}}^\top\big)
  \;=\;
  \mathrm{Tr}\!\big((C_{\mathrm{sgd}} H_{\mathrm{eff}}^\top)^\top\big)
  \;=\;
  \mathrm{Tr}\!\big(H_{\mathrm{eff}} C_{\mathrm{sgd}}\big),
\end{equation}
so that
\begin{equation}
  \boxed{
  \mathrm{Tr}\!\big(H_{\mathrm{eff}} C_{\mathrm{sgd}}\big)
  \;\approx\;
  \frac{\eta}{2}\,\mathrm{Tr}(\Sigma_\xi)
  }.
\end{equation}

Finally, we use the result from Appendix \ref{sec_grad_cov_h_eff} and \ref{sec_h_eff_hessian}, and assuming that the Hessian is stable around the minimum $w^\star$, which lies at the center of the SGD stationary dynamics, we have that TPV under SGD noise is given by,
\begin{equation}
  \boxed{
  \mathrm{Tr}\!\big(H_{\mathrm{eff}} C_{\mathrm{sgd}}\big)
  \;\approx\;
  \frac{\eta \sigma_\varepsilon^2}{2b}\,\mathrm{Tr}( \nabla_w^2 L(w^\star))
  }
\end{equation}
where $\eta$ and $b$ are the SGD learning rate and batch size, and $\sigma_\varepsilon^2$ denotes the variance of the residual error over the training samples (assumed to be i.i.d.).

\section{TPV for Parameter Quantization Noise}
\label{app_tpv_quant_proof}
TPV is given by $\mathrm{Tr}(H_{\mathrm{eff}} C)$. We show in Appendix \ref{sec_h_eff_hessian} that $H_{\mathrm{eff}}(w) \approx \nabla_w^2 L(w)$ (Hessian) near minimum $w^\star$. 
Next, we compute the covariance of $\delta w$. Under $\delta w \sim \text{Unif}(-\delta/2, \delta/2)$, the variance is $(\delta/2)^2/3 = \delta^2/12$ by the standard result $\text{Var}(\text{Unif}(-a, a)) = a^2/3$. Thus under the quantization model, the parameter perturbation covariance is $C_{\mathrm{quant}} \approx \frac{\delta^2}{12} I_p$, where $I_p$ is the identity matrix (thus obeying the TPV trace stability requirement), which proves the claim.

\section{Experiments}

\begin{figure}
    \centering
    \includegraphics[width=0.4\textwidth]{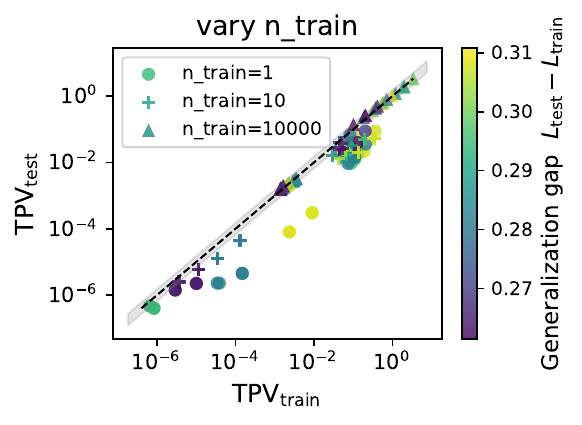}
    \caption{\textbf{TPV stability on CIFAR-10:}
    Analogous to Figure \ref{fig:tpv_synth_trace_stability_vary-n}, this scatter plot shows that \textbf{TPV stability breaks for very low values of} $n_{\text{train}}$ and holds increasingly better (close to $y=x$ line and within the $50\%$ error band) for larger values.
    }
    \label{fig:tpv_cifar10_trace_stability_vary-n}
    \vspace{-10pt}
\end{figure}

\begin{figure*}[t]
    \centering

    \begin{minipage}[t]{0.48\textwidth}
        \centering
        \includegraphics[width=\textwidth]{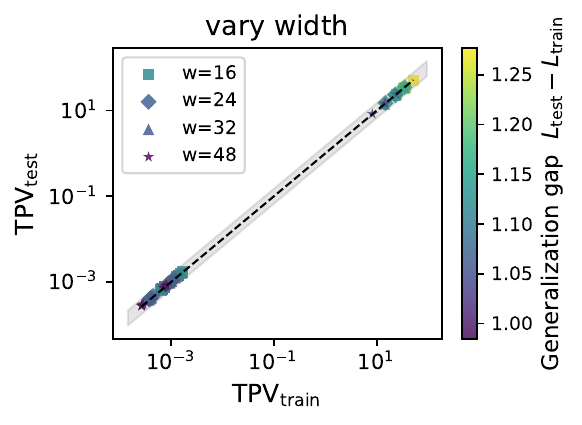}
        \caption{\textbf{TPV stability on CIFAR-100:} Analogous to Figure \ref{fig:tpv_synth_trace_stability}, this scatter plot shows that TPV stability holds for different width architectures on CIFAR-100.
        }
        \label{fig:tpv_cifar100_trace_stability_vary-w}
    \end{minipage}
    \hfill
    \begin{minipage}[t]{0.48\textwidth}
        \centering
        \includegraphics[width=\textwidth]{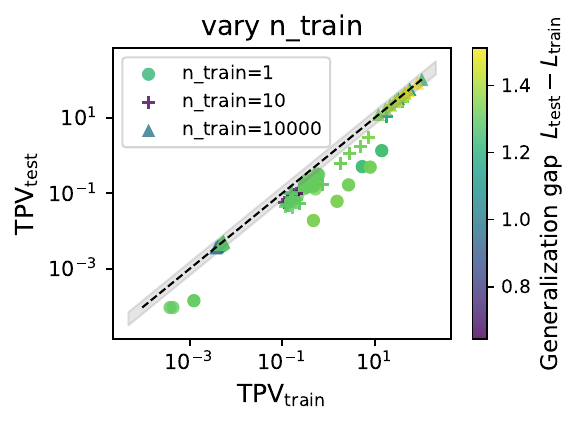}
        \caption{\textbf{TPV stability on CIFAR-100:}
        Analogous to Figure \ref{fig:tpv_cifar10_trace_stability_vary-n}, this scatter plot shows that TPV stability breaks for very low values of $n_{\text{train}}$ and holds increasing better (close to $y=x$ line and within the $50\%$ error band) for larger values.
        }
        \label{fig:tpv_cifar100_trace_stability_vary-n}
    \end{minipage}

\end{figure*}

\subsection{Details of Experiments in Section \ref{sec_trace_stability_scatter_exp}}
\label{app_trace_stability_scatter_exp}

\subsubsection{Synthetic Data Experiment:}
\label{app_synth_data_exp_trace_stability}

For the synthetic experiments, we consider three families of data-generating
processes:
(i) a Gaussian linear teacher ($y = x^\top w_{\rm true}$),
(ii) a ReLU teacher ($y=\mathrm{ReLU}(a^\top x)$), and
(iii) a $10$-unit multi-ReLU teacher ($y=\sum_{k=1}^{10} \mathrm{ReLU}(a_k^\top x + b_k)$),
each with isotropic inputs $x\sim\mathcal{N}(0,I_d)$.
We sweep over three input dimensions $d\in\{10,20,50\}$ and two training-set sizes
$n_{\mathrm{tr}}\in\{10,1000\}$. The test set always contains $5000$ samples.
For the student network, we use fully-connected ReLU MLPs with widths
$w\in\{1,256\}$ and depths $\{2,3,4\}$. 

\medskip
\noindent\textbf{Clean reference model:}
For every configuration $(\text{dataset type},d,n_{\mathrm{tr}},w,\text{depth})$, we train a 
clean reference network $f_{w^\star}$ on $1000$ samples of noiseless training data using full-batch SGD with
learning rate $2\!\times\!10^{-3}$, cosine-annealing LR schedule, momentum $0.9$,
no weight decay, and $800$ training epochs.  
The resulting parameters $w^\star$ and predictions $f_{w^\star}(x)$ on both train
and test sets are cached and reused across all perturbation experiments.

\medskip
\noindent\textbf{Label-noise perturbation:}
For the label-noise experiments, we retrain from the fixed initialization
$w^\star$ for $R=20$ independent runs. Each run injects additive i.i.d.\ Gaussian noise $\varepsilon\sim\mathcal{N}(0,\sigma^2)$ with 
$\sigma\in\{0.005,0.01\}$ to the clean labels.  
Each model is then trained for $200$ epochs with full-batch GD, learning rate
$2\!\times\!10^{-3}$, cosine annealing, momentum $0.9$, and no weight decay.
We record the empirical train/test TPV, train/test MSE, and a scalar
first-order Taylor approximation error (described below). Denote the fine-tuned model in the $r^{th}$ run as $f^{(r)}$. Empirical TPV is computed as
\[
\widehat{\mathrm{TPV}}_{\mathrm{train}}
=
\frac{1}{R n_{\text{train}}}
\sum_{r=1}^R \sum_{i=1}^{n_{\text{train}}}
\bigl\| f^{(r)}(x_i) - f_{w^\star}(x_i)\bigr\|_2^2,
\]
and analogously for the test set. See Appendix \ref{sec_label_noise_algorithm} for details on the Label Noise TPV estimation algorithm.

\medskip
\noindent\textbf{SGD-noise perturbation:}
To simulate stationary SGD noise around a minimum, we initialize the model at
$w^\star$ and run SGD for $1000$ steps with momentum $0.9$ and no weight decay,
using learning rates $\{10^{-3},5\!\times\!10^{-4}\}$ and batch sizes
$\{32,128\}$.  
The number of training samples $n_{\mathrm{tr}}$ may be as small as $10$, so we
disable \texttt{drop\_last} in the PyTorch \texttt{DataLoader} to avoid
degenerate cases with empty batches.  
Snapshots are collected every $20$ steps after a burn-in period of $200$ steps. Each snapshot is treated as a run and together, the deviation of the fine-tuned model logits from the clean (reference) model logits from these different runs give an estimate of empirical TPV using the same empirical TPV formula as above. We also track the Taylor approximation error.

\medskip
\noindent\textbf{First-order validity check:}
For every noisy model (label-noise run or SGD snapshot), we compute a relative
finite-difference Taylor error to evaluate whether the model remains in a
first-order regime around $w^\star$.
Let $\Delta = w - w^\star$ and let $h=10^{-2}$ be a finite-difference step.
For a randomly-selected reference set of $128$ training inputs $X_{\rm ref}$, we
estimate
\[
\mathrm{rel\_err}
\;=\;
\frac{\mathbb{E}_{x\in X_{\rm ref}}
\left[(f_w(x)-f_{w^\star}(x)) -
\frac{f_{w^\star + h\Delta}(x)-f_{w^\star}(x)}{h}\right]^2}
     {\mathbb{E}_{x\in X_{\rm ref}}\left[(f_w(x)-f_{w^\star}(x))^2\right] + 10^{-12}},
\]
where expectations are empirical averages over $X_{\rm ref}$.  
We then discard the runs with values above the
threshold $10^{-3}$.  

\medskip
\noindent\textbf{Total configuration count:}
We consider two experiment groups: (a) varying $n_{\mathrm{tr}}$ with fixed
width, and (b) varying width with fixed $n_{\mathrm{tr}}$.  
Taken together, the sweeps cover
$3 \times 3 \times 2 \times 3 \times 2 = 108$ label-noise configurations and
$3 \times 3 \times 2 \times 3 \times 4 = 216$ SGD-noise configurations, for a
combined total of $324$ distinct settings, each with up to $20$ independent
label-noise runs or $\approx 40$ SGD-noise snapshots.  
These yield the TPV scatter plots in Fig. \ref{fig:tpv_synth_trace_stability} and Fig. \ref{fig:tpv_synth_trace_stability_vary-n}.

\subsubsection{CIFAR Experiment}
\label{app_cifar_universal_scatter}

\textbf{1. Vary Width Experiment Details:}

We describe the details for the CIFAR-10 experiment in Fig \ref{fig:tpv_cifar10_trace_stability_vary-w} below. Details for CIFAR-100 (Fig. \ref{fig:tpv_cifar100_trace_stability_vary-w}) are similar except the output has 100 dimensional logits and we use analogous CIFAR-100 pre-trained architectures.

\textbf{Dataset and preprocessing:}
We use the standard CIFAR-10 per-channel normalization.
From these, we randomly subsample $n_{\text{train}} = 10000$ and $n_{\text{test}} = 10000$.

\textbf{Reference models:}
We use pre-trained $\text{mobilenetv2\_x0\_5}$, $\text{mobilenetv2\_x0\_75}$, $\text{mobilenetv2\_x1\_0}$, $\text{mobilenetv2\_x1\_4}$ from Pytorch Hub
and denote its logits by \(f_{w^\star}(x)\). These models have widths roughly $16, 24, 32, 48$ respectively.

\textbf{Label-noise perturbation (logit noise):}
For the label-noise experiments, we retrain from the fixed initialization
$w^\star$ for $R=5$ independent runs. Each run injects additive i.i.d.\ Gaussian noise $\varepsilon\sim\mathcal{N}(0,\sigma^2)$ with
$\sigma\in\{0.05,0.1\}$ to the clean labels.  
Each model is then trained for $50$ epochs using mini-batch SGD MSE regression on the noisy logits with momentum $0.9$, learning rate $10^{-4}$, batch size $256$, and $0$ weight decay. Mini-batch shuffling is turned off and a randomness seed is used so that each run sees the same sequence of mini-batches in order to avoid randomness due to SGD and focus only on randomness due to label noise.
Also, we train the models in Pytorch eval mode so that modules like batch norm do not make output logits batch dependent. Denote the fine-tuned model in the $r^{th}$ run as $f^{(r)}$.
We record the train/test CE (for generalization gap) and empirical train/test TPV using all these runs.

Empirical TPV is computed as
\[
\widehat{\mathrm{TPV}}_{\mathrm{train}}
=
\frac{1}{R n_{\text{train}}}
\sum_{r=1}^R \sum_{i=1}^{n_{\text{train}}}
\bigl\| f^{(r)}(x_i) - f_{w^\star}(x_i)\bigr\|_2^2,
\]
and analogously for the test set.
We additionally record the cross-entropy loss on clean labels test set. See Appendix \ref{sec_label_noise_algorithm} for details on the Label Noise TPV estimation algorithm.

\textbf{SGD-noise perturbation:}
To simulate stationary SGD noise around a minimum, we initialize the model at
$w^\star$ and run SGD on the CIFAR-10 classification task using cross-entropy loss for $10$ epochs with momentum $0.9$ and no weight decay,
using learning rates $\{10^{-4},5\!\times\!10^{-5}\}$ and batch sizes
$\{128,256\}$.  
Snapshots are collected every epoch.
Each snapshot is treated as a run and together, the deviation of the fine-tuned model logits from the clean (reference) model logits from these different runs give an estimate of empirical TPV using the same empirical TPV formula as above.

\textbf{2. Vary Number of Samples Experiment Details:}

For the experiment with varying number of training samples (Fig. \ref{fig:tpv_cifar10_trace_stability_vary-n}), we use pretrained ResNet--20/32/44/56 models on CIFAR-10. Notice these architectures have different depth, which is not a consideration in the TPV theory, and is merely used as a source of variation in our experiments. The experimental details are similar to the varying width experiment above, except now we group experiments by a randomly selected training dataset subset of size $n_{\text{train}} \in \{1, 10, 10000\}$. 

For CIFAR-100, analogous pre-trained models are used corresponding to each of the CIFAR-10 models. All pretrained models are taken from the GitHub repository \texttt{chenyaofo/pytorch-cifar-models}.

\textbf{Results summary:} Across both CIFAR-10 and CIFAR-100, we find the same pattern as in the synthetic experiments: (i)~TPV stability holds regardless of the model's generalization gap; (ii)~it holds across all tested widths, including the smallest; (iii)~it breaks only when $n_{\text{train}}$ is very small ($n_{\text{train}}=1$ lies outside the $50\%$ error band; $n_{\text{train}}=10$ is mostly within it; $n_{\text{train}}=10000$ is tight).

\subsection{Details of Experiments in Section \ref{sec_label_noise_exp}}
\label{app_label_noise_exp}

The experiments in Section~\ref{sec_label_noise_exp} have three empirical goals: (i)~verify that empirical TPV tracks the theoretical base quantity $T_{\mathrm{base}} = \sum_i B_{ii}/s_i^2$ and decreases with width; (ii)~confirm TPV stability in this setting; (iii)~confirm that lower TPV correlates with lower clean test loss. The following subsections provide full experimental details for the synthetic and CIFAR settings.

\subsubsection{Synthetic Data Experimental setup}
\label{app:exp:synth-logit-noise}

We study a controlled synthetic regression problem designed to empirically test Theorem~\ref{thm:tpv-label}. 
Inputs $x \in \mathbb{R}^{20}$ are drawn i.i.d.\ from $\mathcal{N}(0,I)$, and targets are generated by a fixed teacher 
$y = x^\top w_{\text{true}}$ with $w_{\text{true}} \sim \mathcal{N}(0,I)$. 
We sample $n_{\text{train}} = 1000$ training points and $n_{\text{test}} = 5000$ test points. 
The learner is a three-layer ReLU MLP (input $\rightarrow$ width $\rightarrow$ width $\rightarrow 1$), 
and we sweep over widths 
$\{128, 256, 512, 800, 1024, 1600\}$. 
For each width, we first train a ``clean'' reference network on the noiseless labels using full-batch SGD 
with momentum~0.9, fixed learning rate $5\times10^{-3}$, 
no weight decay, and 800 epochs. 
This gives a reference parameter $w^\star$ and corresponding reference outputs $f^\star(x)$ on both the training and test sets.

At $w^\star$, we compute the full Jacobian $J \in \mathbb{R}^{n_{\text{train}} \times P}$ via automatic differentiation 
(one row per input), perform an SVD $J = U S V^\top$, and estimate the test-distribution Hessian surrogate 
$H_{\mathrm{eff}}$ by sampling test inputs and computing $G_{ii} = \mathbb{E}_x[(g(x)^\top v_i)^2]$, 
where $g(x)$ is the gradient of the network output and $v_i$ are the right singular vectors of $J$. 
The theoretical base quantity is then 
$T_{\mathrm{base}} = \sum_i G_{ii} / s_i^2$, 
so that Theorem~\ref{thm:tpv-label} predicts $\mathrm{TPV} \approx \sigma^2 T_{\mathrm{base}}$ for label-noise variance $\sigma^2$.

To estimate empirical TPV, for each pair (width, $\sigma$) with 
$\sigma \in \{0.01, 0.05, 0.1, 0.2\}$, 
we run 50 independent Monte Carlo trials. 
In each trial we add i.i.d.\ noise $\epsilon \sim \mathcal{N}(0,\sigma^2)$ to the training labels, 
re-initialize the model at $w^\star$, and retrain using identical optimization settings for 500 epochs, and no proximity penalty.
Empirical TPV on train and test sets is computed as the variance across runs of 
the predictions relative to $f^\star$.

\begin{figure}[t]
    \centering

    \begin{subfigure}{0.49\linewidth}
        \centering
        \includegraphics[width=\linewidth]{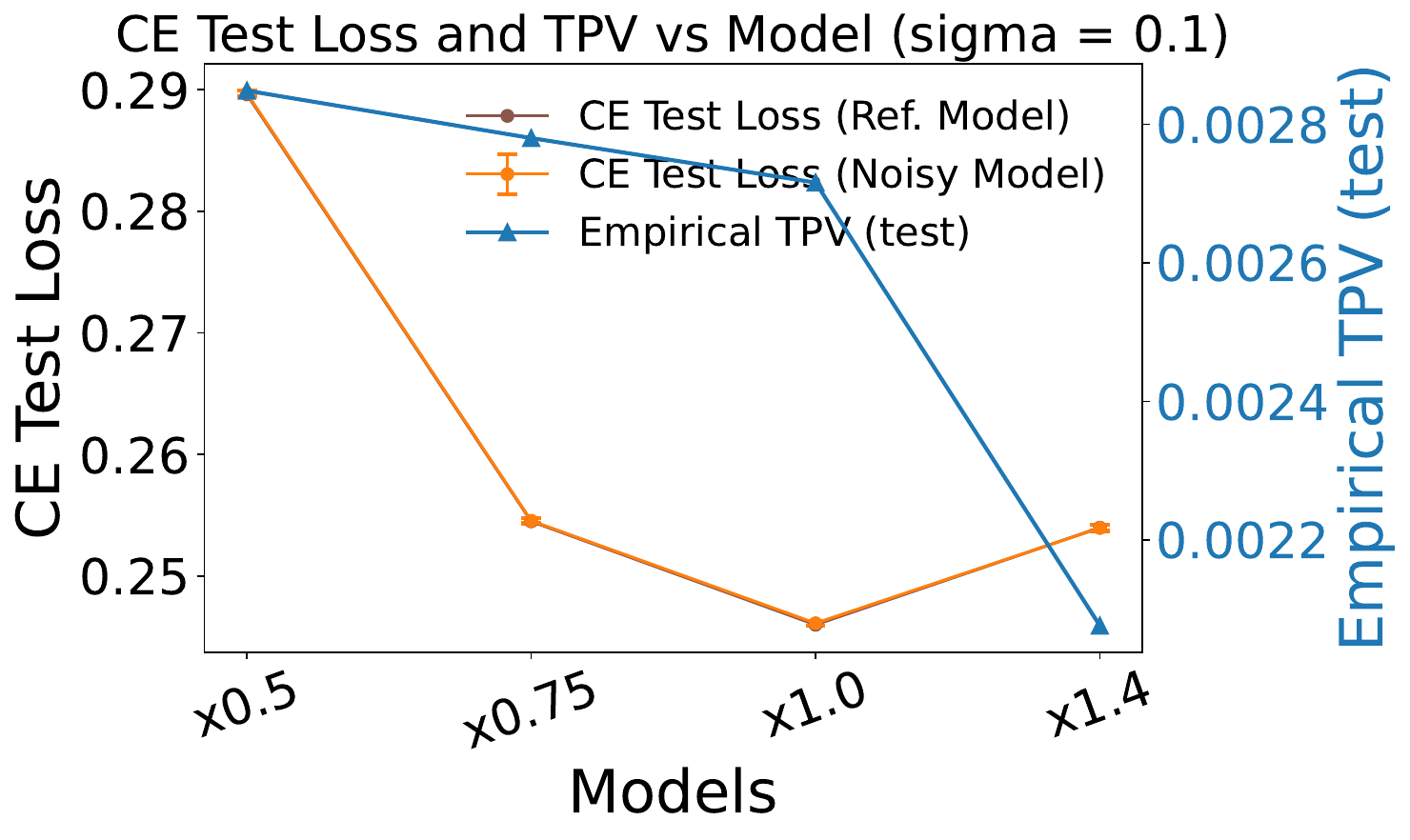}
    \end{subfigure}
    \hfill
    \begin{subfigure}{0.49\linewidth}
        \centering
        \includegraphics[width=\linewidth]{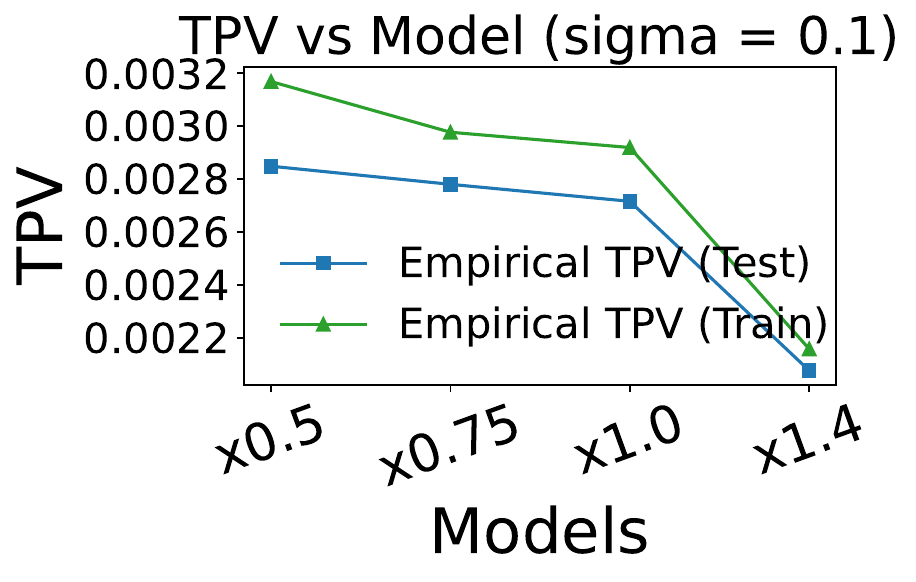}
    \end{subfigure}
    \vspace{-10pt}
    \caption{Empirical TPV estimates under target logit noise on CIFAR-10 for noise standard deviation $\sigma=0.01$. Both TPV estimates reduce as width increases and correlate with the test set cross-entropy loss of the reference model.}
    \label{fig:TPV_label_noise_c10}
\end{figure}

\begin{figure}
    \centering
    \begin{subfigure}{0.49\linewidth}
        \centering
        \includegraphics[width=\linewidth]{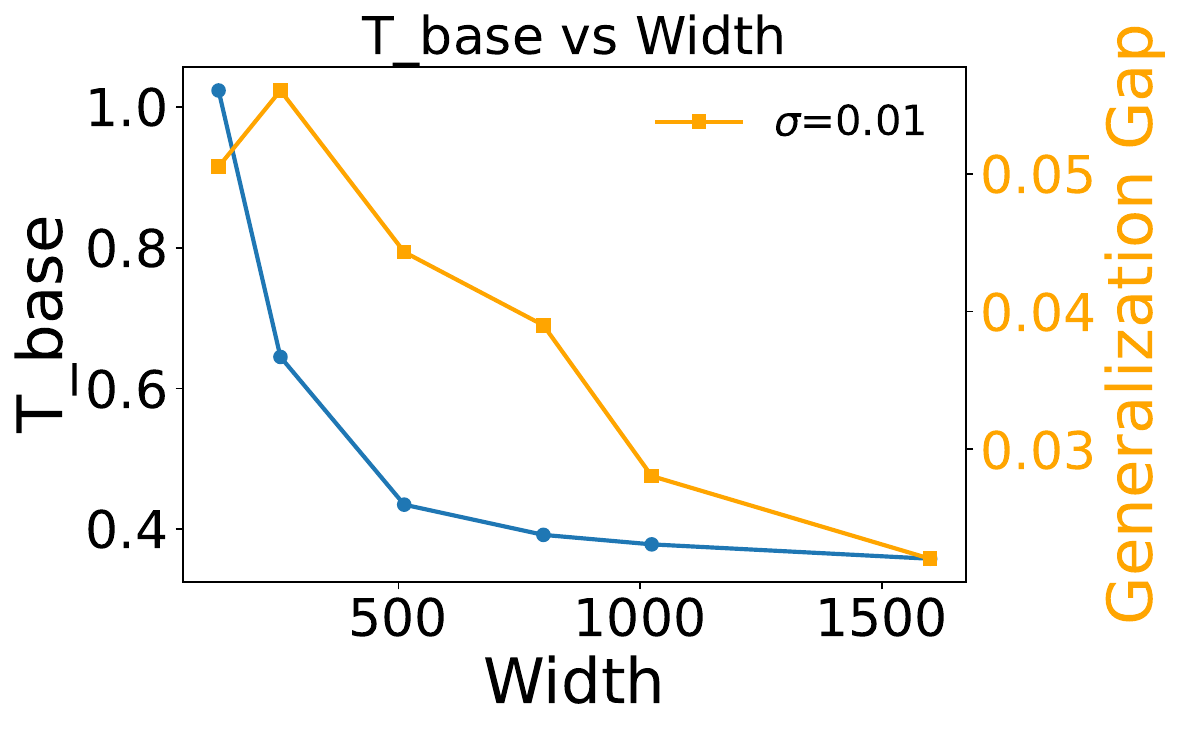}
    \end{subfigure}
    \vspace{-10pt}
    \caption{Generalization gap and $T_{\text{base}}$ vs. network width. As width increases, both quantities reduce.}
    \label{fig:Tbase_width_plot}
\end{figure}

\subsubsection{CIFAR experimental setup}
\label{app:exp:cifar10-logit-noise}
We evaluate empirical TPV under injected Gaussian noise on the \emph{logits} of a clean reference model on CIFAR-10/100. We describe the details for CIFAR-10 below. Details for CIFAR-100 are similar except the output has 100 dimensional logits.

\textbf{Dataset and preprocessing:}
We use the standard CIFAR-10 per-channel normalization.
From these, we randomly subsample $n_{\text{train}} = 4000$ and $n_{\text{test}} = 4000$. We find that it is important to use a sufficiently large samples size for consistent estimate.

\textbf{Reference models:}
For each architecture in $\texttt{cifar10\_mobilenetv2\_x0\_5}$, $\texttt{cifar10\_mobilenetv2\_x0\_75}$, $\texttt{cifar10\_mobilenetv2\_x1\_0}$, and $\texttt{cifar10\_mobilenetv2\_x1\_4}$, we load a pretrained clean model from Pytorch Hub
and denote its logits by \(f_{w^\star}(x)\).
For every architecture, we compute the baseline cross-entropy losses on the clean labels for both train/test sets.

\textbf{Label-noise perturbation (logit noise):}
For each model, we use Gaussian noise level $\sigma=0.1$, and we run \(R = 20\) Monte Carlo replicates.  
For each replicate, we sample noise
\[
\varepsilon_i \sim \mathcal{N}(0,\, \sigma^2 I_{10}),
\]
and construct the noisy regression target using the randomly selected $n_{\text{train}}$ samples in the training set as
\[
y^{\text{noisy}}_i = f_{w^\star}(x_i) + \varepsilon_i
\qquad (1 \le i \le n_{\text{train}}).
\]

Each replicate begins by re-initializing the model to the clean reference weights \(w^\star\).
We then fine-tune this model for $10$ epochs using mini-batch SGD MSE regression on the noisy logits with momentum $0.9$, learning rate $10^{-4}$, batch size $256$, and $0$ weight decay. Mini-batch shuffling is turned off and a random seed is used so that each run sees the same sequence of mini-batches in order to avoid randomness due to SGD and focus only on randomness due to label noise.
Also, we train the models in Pytorch eval mode so that modules like batch norm do not make output logits batch dependent. Denote the fine-tuned model in the $r^{th}$ run as $f^{(r)}$.

After training each noisy model, we record its logits \(f^{(r)}(x)\) on both train and test subsets.
Empirical TPV is computed as
\[
\widehat{\mathrm{TPV}}_{\mathrm{train}}
=
\frac{1}{R n_{\text{train}}}
\sum_{r=1}^R \sum_{i=1}^{n_{\text{train}}}
\bigl\| f^{(r)}(x_i) - f_{w^\star}(x_i)\bigr\|_2^2,
\]
and analogously for the test set.
We additionally record the cross-entropy loss on clean labels test set. See Appendix \ref{sec_label_noise_algorithm} for details on the Label Noise TPV estimation algorithm.

\subsection{Details of Experiments in Section \ref{sec:bias_variance} (Label Noise TPV and Generalization)}
\label{app:bias_variance_appendix}
 
Section~\ref{sec:bias_variance} showed the TPV--test-loss relationship
under a single regularizer (label smoothing) on an MLP architecture swept
across parameter counts. Here we concretely discuss this empirical relationship, document the experimental setup in detail and
report the analogous results for dropout and weight decay.

\subsubsection{Relationship between Label Noise TPV and test loss}
Recall the local bias-variance decomposition Equation~\ref{eq:test-error-decomp},
\begin{align}
\mathcal{E}_{\mathrm{test}}
&=
    \operatorname{TPV}
    +
   \mathbb{E}_x[ \underbrace{\big(f_{w^\star}(x)-f^\star(x)\big)^2}_{
        \text{bias}^2
    }].
\end{align}
This equation decomposes the expected test error
into a variance component (TPV) and a bias$^2$ component.
The quantity we correlate TPV against in our experiments is the
clean test loss at $w^\star$, which approximates bias$^2(w^\star)$.
The relationship between TPV and clean test loss is therefore an
\emph{empirical} question about how the two components in the R.H.S. of
the above equation co-vary across configurations. 

We find that the correlation between the two terms exhibit opposite relation depending on the \textit{training loss regime} of the model being considered. Specifically, consider two regimes-- low training loss regime and high training loss regime. These regimes can be operationalized, for instance, by using a pre-fixed loss threshold. High training loss regime corresponds to underfitting in the presence of effective model-capacity bottlenecks (e.g. small model size, high regularization levels, poor/insufficient optimization, etc.). Low training loss regime on the other hand is more typical in modern deep learning with overparameterized networks and can exhibit a wide range of generalization gap.

Our experiments reveal a consistent pattern:
in the low training loss regime, smaller TPV (via regularization, width,
or other knobs) is accompanied by a reduction in test loss (bias$^2$);
so TPV and test loss are positively correlated.
In the high training loss regime, regularization that lowers TPV
simultaneously raises test loss (bias$^2$), inverting the correlation due to underfitting.

\begin{figure}[t]
    \centering
    \includegraphics[width=0.8\linewidth]{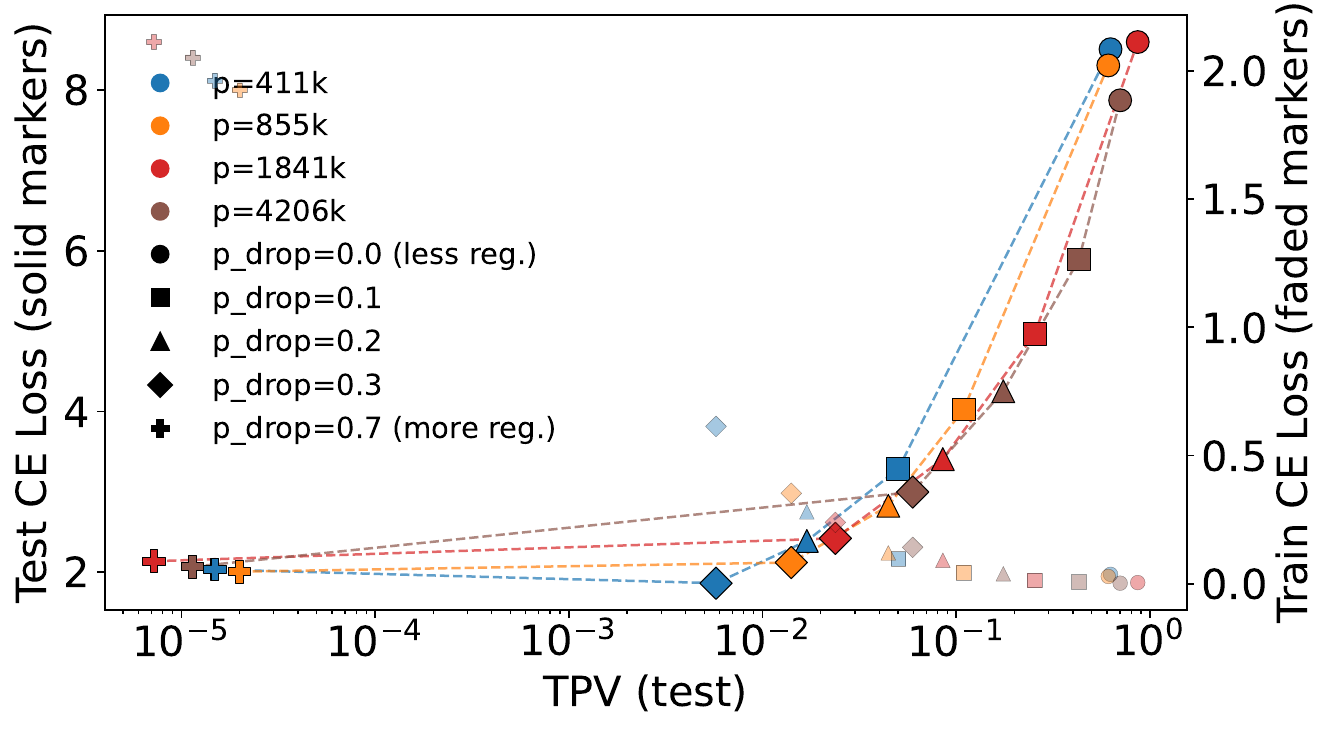}
    \caption{\textbf{Label Noise TPV vs.\ test loss on CIFAR-10 for dropout.}  Same plotting convention as
    Fig.~\ref{fig:mlp_tpv_ls}.}
    \label{fig:mlp_tpv_dropout}
\end{figure}

\begin{figure}[t]
    \centering
    \includegraphics[width=0.8\linewidth]{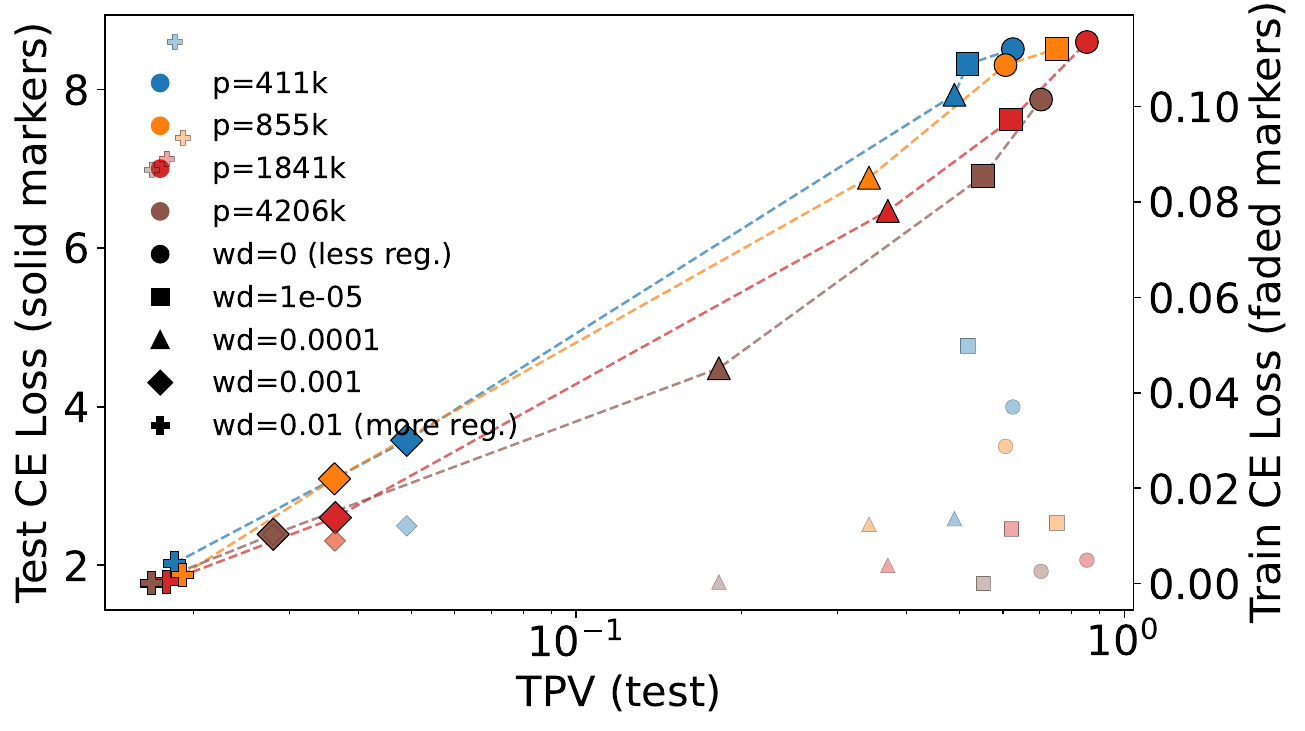}
    \caption{\textbf{Label Noise TPV vs.\ test loss on CIFAR-10 for weight decay.}  Same plotting convention as
    Fig.~\ref{fig:mlp_tpv_ls}.}
    \label{fig:mlp_tpv_wd}
\end{figure}

\subsubsection{Details for the CIFAR-10 MLP Experiment}
Now we describe the details behind the experiment in Section \ref{sec:bias_variance}. 

\paragraph{Architecture and data:}  We use a 2-hidden-layer ReLU MLP
with hidden width $h\in\{128, 256, 512, 1024\}$, giving parameter counts
$p\in\{\approx\!201\mathrm{k}, 411\mathrm{k}, 855\mathrm{k},
1.8\mathrm{M}, 4.2\mathrm{M}\}$.  The input is a flattened $3\times 32\times 32$
CIFAR-10 image with standard mean/std normalization.  We use a fixed
random subset of $10,000$ training images and $4000$ test images.
 
\paragraph{Regularization grids:}
\begin{itemize}
    \item Weight decay: $\{0, 10^{-5}, 10^{-4}, 10^{-3},
    10^{-2}\}$.
    \item Dropout (applied after every hidden layer): $\{0, 0.1, 0.2,
    0.3, 0.7\}$.
    \item Label smoothing: $\{0, 0.05, 0.1, 0.2, 0.7, 0.9\}$.
\end{itemize}
Each architecture is trained once at each regularization level.
 
\paragraph{Reference-model training:}  The reference model $w^{\star}$
is trained for $300$ epochs of SGD with learning rate $0.05$, Nesterov momentum $0.9$,
step-annealed learning rate multiplying learning rate by $0.1$ every $1/3$rd of the total training epochs, and batch
size $256$.  Weight decay is
passed through SGD's weight-decay parameter; dropout is applied in the
architecture; label smoothing is applied in the cross-entropy loss.
After training, test and train CE are evaluated with a plain
cross-entropy criterion (no smoothing) for comparability.

\paragraph{TPV estimation:} We use the Label Noise TPV estimation algorithm described in \S \ref{sec_label_noise_algorithm}. Given $w^{\star}$, we estimate TPV under
additive Gaussian noise on the teacher logits.  For each of $R=20$
independent runs, we add i.i.d.\ Gaussian noise with $\sigma= c \, \sigma_0$ ($\sigma_0=0.1$ and $c$ set as described in \S \ref{sec_label_noise_algorithm}) to
the logits of $w^{\star}$ on the training inputs, reload $w^{\star}$,
and fine-tune the model with MSE regression against the noisy target
logits for $20$ epochs using SGD with lr $10^{-4}$, momentum $0.9$,
batch size $256$, and no weight decay.  The noisy fine-tuning is conducted in \texttt{eval} mode
(dropout disabled) and with a fixed data-loader seed across runs to
isolate the effect of label noise from other sources of randomness.
Proximity regularization is not used ($\lambda_\mathrm{prox}=0$).
 
\paragraph{Results:}  Figures ~\ref{fig:mlp_tpv_dropout} and ~\ref{fig:mlp_tpv_wd}
respectively show the TPV-vs-test-CE plots for dropout and weight decay,
analogous to label smoothing (Fig.~\ref{fig:mlp_tpv_ls}) in the main text.  Dropout and label smoothing
regularizers exhibit U-shaped relationship: test loss decreases with TPV
monotonically with increasing regularization strength in the low training loss regime, and then test loss increases as TPV reduces further in the high training loss regime (underfitting).  Within the low training loss
regime, ordering models by TPV recovers their ordering by test CE
within each architecture. The weight decay plot on the other hand contains all points in the low training loss regime and therefore shows a consistent positive correlation between test loss and TPV.

\subsubsection{Training-set TPV along the training trajectory (Resnet-18 CIFAR-100 Experiment)}
\label{app:tpv_trajectory}

This appendix documents the experimental setup behind
Figure~\ref{fig:tpv_trajectory} in Section~\ref{sec:bias_variance}.

\paragraph{Setup:}
We train a CIFAR-adapted ResNet-18 on CIFAR-100 with $30\%$ symmetric
label noise: a fixed RNG seed is used to select $30\%$ of
training samples whose labels are reassigned uniformly at random to
one of the remaining $99$ classes.
The architecture is a standard CIFAR ResNet-18 with a $3{\times}3$
stem (no max-pool), four stages of two residual blocks each at widths
$\{64, 128, 256, 512\}$, BatchNorm, and a final linear classifier;
total parameter count is $\approx 11.2$M.
Training uses SGD with Nesterov momentum $0.9$, weight decay
$5{\times}10^{-4}$, batch size $128$, initial learning rate $0.1$,
and a cosine annealing schedule over $200$ epochs.
Cross-entropy loss with label smoothing $\varepsilon_\mathrm{LS} = 0.1$
is used as the training objective.
The training stream applies standard CIFAR data augmentation (random
crop with $4$-pixel padding, random horizontal flip) followed by
channel-wise normalization with CIFAR-100 statistics.

\paragraph{Accuracy evaluation:}
Training accuracy is measured against the noisy labels the model is
being optimized against, on the augmented training stream in
\texttt{model.eval()} mode (BatchNorm running statistics fixed).
Validation accuracy is measured on the standard CIFAR-100 test split
($10{,}000$ images, normalization only, no augmentation).

\paragraph{TPV estimation:}
Training-set TPV is estimated every $5$ epochs (and at epoch $0$).
The TPV subset consists of $10{,}000$ training images randomly sampled once from the un-augmented training set; only channel-wise
normalization is applied to these images, so that no augmentation
randomness enters the TPV estimate.
For each of $R=5$
independent runs, we add i.i.d.\ Gaussian noise with $\sigma= c \, \sigma_0$ ($\sigma_0=0.1$ and $c$ set as described in \S \ref{sec_label_noise_algorithm}) to
the logits of $w^{\star}$ on the training inputs, reload $w^{\star}$,
and fine-tune the model with MSE regression against the noisy target
logits for $3$ epochs using SGD with lr $10^{-4}$, momentum $0.9$,
batch size $256$, and no weight decay.  The noisy fine-tuning is conducted in \texttt{eval} mode
(dropout disabled) and with a fixed data-loader seed across runs to
isolate the effect of label noise from other sources of randomness.
Proximity regularization is not used ($\lambda_\mathrm{prox}=0$).

\paragraph{Argmax-TPV landmark:}
The vertical dashed line in Figure~\ref{fig:tpv_trajectory} is placed
at $\arg\max_{e}\widehat{\mathrm{TPV}}_{\mathrm{train}}(e)$ over all
TPV checkpoints $e$.
This is computed purely from the trajectory of training-set TPV
estimates---no validation labels or test loss are used to identify
this epoch.

\paragraph{Observations:}
Two regimes are visible in Figure~\ref{fig:tpv_trajectory}.
First, the high-training-loss regime (epoch
$\lesssim 125$) shows training accuracy slowly climbing toward $100\%$
on the (noisy) supervision while TPV gradually rises;
this corresponds to the underfitting branch of the U-shape in
Figure~\ref{fig:mlp_tpv_ls}, where the model has not yet
interpolated.
Second, the low-training-loss regime (epoch $\gtrsim 125$) is reached
once training accuracy saturates; TPV peaks and then decreases
monotonically while validation accuracy rises in step.
The argmax-TPV landmark approximately identifies the transition
between the first and second regimes, recovering---along the time
axis of a single training run---the same TPV-versus-test-error
relationship that Figure~\ref{fig:mlp_tpv_ls} exhibits across a
sweep of regularization strengths at convergence.

\subsubsection{Training-set TPV along the training trajectory (Additional Experiments on NLU Tasks Using BERT)}
\label{app:tpv_trajectory_bert}

\begin{figure}[t]
    \centering
    \includegraphics[width=0.8\linewidth]{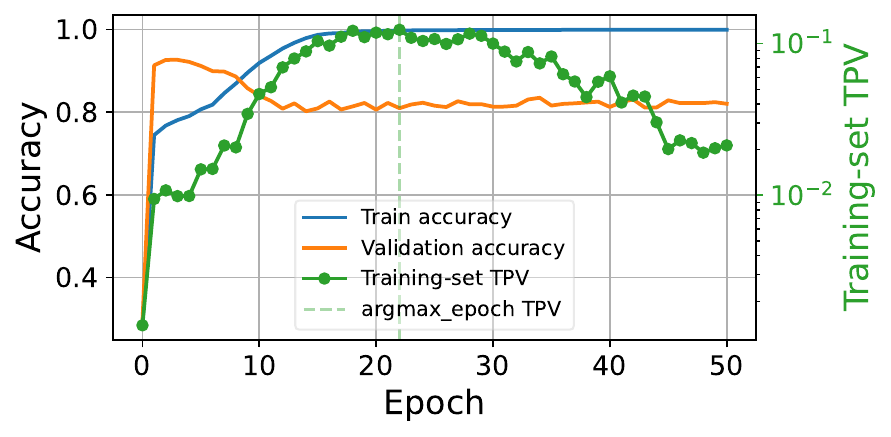}
    \caption{\textbf{TPV trajectory (BERT-small on AG News dataset with $20\%$ label noise).} Same plotting convection as Fig.~\ref{fig:tpv_trajectory}.}
    \label{fig:ag_news_bert_small}
\end{figure}

\begin{figure}[t]
    \centering
    \includegraphics[width=0.8\linewidth]{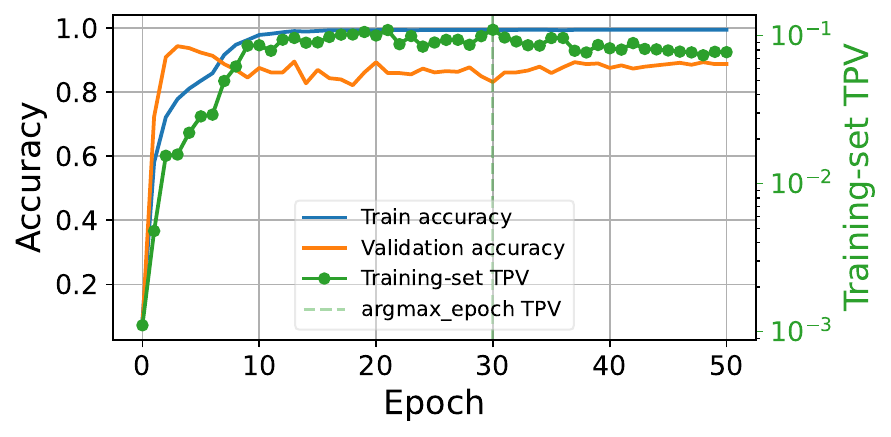}
    \caption{\textbf{TPV trajectory (BERT-small on TREC dataset with $20\%$ label noise).} Same plotting convection as Fig.~\ref{fig:tpv_trajectory}.}
    \label{fig:trec_bert_small}
\end{figure}

\paragraph{Setup:}
We fine-tune BERT-Small (\texttt{prajjwal1/bert-small}; 4 transformer layers,
hidden size $512$, ${\approx}29$M parameters) on two text-classification
benchmarks with $20\%$ symmetric label noise: a fixed RNG seed is used to
select $20\%$ of training examples whose labels are reassigned uniformly at
random to one of the remaining classes.
The tasks differ only in their label spaces and dataset sizes:
AG~News is a $4$-class topic classification task with ${\approx}120{,}000$
training examples and $7{,}600$ test examples, while TREC is a $6$-class
coarse question-type classification task with ${\approx}5{,}452$ training
examples and $500$ test examples.
In both cases a linear classifier is placed on the \texttt{[CLS]}
representation, and inputs are truncated/padded to $128$ tokens.
Training uses AdamW with the standard BERT parameter-group convention (weight
decay $0.01$ for non-bias, non-LayerNorm parameters; $0$ otherwise),
learning rate $10^{-4}$, batch size $32$, and a linear warmup over $10\%$ of
total steps followed by linear decay to zero over $50$ epochs.
Cross-entropy loss is used as the training objective.

\paragraph{Accuracy evaluation:}
Training accuracy is measured against the noisy labels the model is optimized
against.  Validation accuracy is measured on the respective test splits
($7{,}600$ examples for AG~News; $500$ for TREC), with no augmentation.

\paragraph{TPV estimation:}
Training-set TPV is estimated at every epoch (and at epoch $0$).
The TPV subset consists of $3{,}000$ training examples randomly sampled once
from the training set and pre-tokenized; the same fixed subset is reused at
every checkpoint so that no sampling randomness enters the TPV trajectory.
For each of $R=3$ independent runs, i.i.d.\ Gaussian noise with
$\sigma = c\,\sigma_0$ ($\sigma_0 = 0.1$, $c = \mathrm{mean}(|f^\star|)$
over the TPV subset, as described in \S\ref{sec_label_noise_algorithm}) is
added to the reference logits of $w^\star$; the model is then reloaded from
$w^\star$ and fine-tuned with MSE regression against the noisy target logits
for $3$ epochs using AdamW ($\mathrm{lr}=10^{-5}$, no weight decay,
batch size $64$).
The noisy fine-tuning is conducted in \texttt{eval()} mode so that dropout is
disabled.
A fixed data-loader seed is used across runs to isolate logit noise
as the sole source of run-to-run variation.
Proximity regularization is not used ($\lambda_\mathrm{prox}=0$).

\paragraph{Argmax-TPV landmark:}
A vertical dashed line is placed at
$\arg\max_{e}\,\widehat{\mathrm{TPV}}_{\mathrm{train}}(e)$ over all per-epoch
TPV estimates.  This is computed purely from the training-set TPV
trajectory---no validation labels or test loss are consulted.

\paragraph{Observations:}
The results are shown in Fig. \ref{fig:ag_news_bert_small} for the AG News dataset and in Fig. \ref{fig:trec_bert_small} for the TREC dataset.
Both datasets exhibit the two-regime pattern visible in the CIFAR-100
experiment, now along the fine-tuning time axis of a single BERT-Small run.
In the early regime (epoch $\lesssim 22$ for AG~News; epoch $\lesssim 30$ for
TREC), training accuracy on the noisy labels is still rising and the model has
not yet interpolated the training set; TPV grows steadily throughout this
phase.
In the late regime, training accuracy saturates at $100\%$ on the noisy
supervision while validation accuracy plateaus and TPV peaks then declines.
The argmax-TPV landmark identifies the transition between these two regimes in
both cases, using only the training-set TPV trajectory and no validation
labels.

\section{Pruning (Application)}
\label{sec_pruning_all}

\subsection{Jacobian-Based Rebalancing (JBR) for Pruning}
\label{sec:jbr}

We adopt the viewpoint that pruning should preserve the model’s \emph{predicted
class} on the correctly classified training samples. For a classifier with logits
$f(x;w)\in\mathbb{R}^K$, the full logit vector is irrelevant for this purpose; prediction changes only if the probability of the
currently predicted class drops relative to the others. Thus, instead of
treating $f$ itself as the task in the TPV framework, we work with the
scalar functional
\[
u(x;w) := -\log (p_{c(x)}(x;w)),
\]
where $c(x) := \arg\max_{k} p_k(x; w^\star)$ and $p=\mathrm{softmax}(f)$ and $w^\star$ is the reference (unpruned)
network. This quantity uses the probability assigned to the class
the model originally predicted, and pruning should leave this value
stable. Specifically, we only want to include training samples for which the model predicts correctly. As a proxy, we select samples for which $p_{c(x)}(x;w) > \tau$, where $\tau$ is a probability threshold (we use $\tau=0.9$). We caution that the TPV Trace Stability theorem only applies to logits ($f(.)$) and does not directly apply to functionals like $u(.)$ considered above. We leave that analysis for future work. The focus here is to formulate pruning score as a TPV object.

\paragraph{Pruning as a Source of Parameter Noise}
Once training has converged and we are interested purely in pruning, the
dominant source of parameter perturbation is the pruning operation itself. In many practical settings we prune entire groups at once (e.g., channels). Define $g \subseteq \{1,\dots,p\}$ as a group with parameter vector
$w_{g} \in \mathbb{R}^{p_g}$,
and for a given group $g$ we either prune all its parameters or keep them all.
We model pruning as a structured parameter perturbation that acts
coherently on each parameter group.
For sensitivity analysis within the TPV framework, we introduce a
zero-mean, group-aligned perturbation:
\begin{equation}
    \delta w_g = \sigma\, \xi_g\, w_g,
    \qquad g \in \mathcal{G},
    \label{eq:prune-noise-def}
\end{equation}
where $\sigma > 0$ is a small scalar controlling the perturbation scale and
$\xi_g$ is a Rademacher random variable taking values $\pm 1$ with equal
probability.
Thus $\mathbb{E}[\delta w_g] = 0$ and the group-wise pruning
covariance is
\begin{equation}
    C_{\mathrm{prune},g}
    := \mathbb{E}[\delta w_g \delta w_g^\top]
    = \sigma^2\, w_g w_g^\top.
    \label{eq:C-prune-group}
\end{equation}
Treating masks as being independent across groups, the full pruning covariance
$C_{\mathrm{prune}}$ is block-diagonal with blocks
$\{C_{\mathrm{prune},g}\}_{g \in \mathcal{G}}$.
Using the cyclic property of the trace, the contribution of group $g$ to the TPV object is
\begin{equation}
  \mathrm{Tr}\bigl(H_{\mathrm{eff},g}\, C_{\mathrm{prune},g}\bigr)
  =
  \sigma^2 . \mathbb{E}[w_{g}^\top J_{u,g}^T J_{u,g} w_{g}].
\end{equation}
where
$J_{u,g}(x;w) := \partial u(x;w)/\partial w_g \in \mathbb{R}^{1\times p_g}$.
Thus, to keep the overall TPV low for a pruned network, we want to set $\sigma^2 = 0$ for groups with large $w_{g}^\top J_{u,g}^T J_{u,g} w_{g}$ and only prune groups (equivalently $\sigma^2 > 0$) with small $w_{g}^\top J_{u,g}^T J_{u,g} w_{g}$. Thus we define the JBR importance of a parameter group as
\begin{equation}
\boxed{
    \mathrm{score}_{\mathrm{JBR}}(w_g) := \mathbb{E}_x[w_{g}^\top J_g^T \delta_u^\top \delta_u  J_g w_{g}].}
      \label{eq:jbr-group-form}
\end{equation}
where $J_g := \partial f(x;w)/\partial w_g \in \mathbb{R}^{K\times p_g}$,
and $\delta_u := \partial u(x;w)/\partial f(x;w) \in \mathbb{R}^{K}$.

\paragraph{Connection and Contrast with JC}
The proposed JBR is very similar to Jacobian Criterion (JC) proposed by \cite{chen2025optimal} and can be seen as a label-free version of JC. Both JBR and JC assign a score to each parameter group $g$ of the form
\begin{equation}
\begin{split}
\mathrm{score}(w_g)
&=
\mathbb{E}_{x}\!\left[
w_g^\top J_g(x)^\top \, m(x)m(x)^\top \, J_g(x)w_g
\right]\\
&=
\mathbb{E}_{x}\!\left[(m(x)^\top v_g(x))^2\right],\nonumber
\end{split}    
\end{equation}
where $v_g(x)=J_g(x)w_g$ is the logit-space direction induced by group $g$, and
the only difference between the two methods lies in the choice of the
logit–space vector $m(x)$:
\begin{equation}
\begin{split}
m_{\text{JC}}(x):=\delta_L(x)=p(x)-y(x),\\
m_{\text{JBR}}(x):=\delta_u(x)=p(x)-e_{c(x)}.\nonumber
\end{split}    
\end{equation}
To understand the relationship between JC and JBR clearly, consider the clean setting in which all the labels in the training data are correct. Now, if the trained (unpruned) model predicts all the training samples correctly, then $y(x) = e_{c(x)}$, and JBR and JC importance scores become identical. The scores differ when either: i) the labels used in JC have noise; or, ii) the model predicted class labels used in JBR are incorrect.

\subsection{Related Work on Pruning}
Pruning aims at reducing model size and improving inference speed \citep{lecun1989optimal,hassibi1993optimal}. These methods can be broadly categorized into unstructured and structured pruning. 

Unstructured pruning removes each parameter individually ~\citep{han2015learning,wang2020picking,frankle2018lottery,paul2022unmasking}. While this approach aligns with the goal of reducing model size, it often does not significantly boost inference speed. Structured pruning \citep{molchanov2019importance,liu2021group}, on the other hand, removes entire neurons, filters, or
attention heads, resulting in models that are easier to accelerate on standard hardware. 

Structured pruning can be further divided into data-dependent and data-independent pruning. Data-independent strategies typically rely on pretrained weight statistics; removing groups with small parameter norm \citep{Li2017Pruning}, BatchNorm scale \citep{liu2017learning}, or geometric and structural properties of the pretrained weights—such as norms, redundancy, clustering, or subspace contribution—to estimate filter importance without relying on labels or loss gradients \citep{he2019filter,singh2020leveraging,chen2023whc}. Data-dependent strategies on the other hand prune parameter groups based on the sensitivity of loss w.r.t. neurons or parameters. The sensitivity can be estimated in different ways-- second order approximation \citep{liu2021group}, Fisher approximation \citep{theis2018faster}, and first order approximation \citep{you2019gate,molchanov2016pruning,molchanov2019importance,chen2025optimal}. 

The proposed JBR pruning strategy is particularly close to Jacobian Criterion (JC) \citep{chen2025optimal}. The key difference is that JC measures loss sensitivity w.r.t. parameter groups using ground-truth labels under the first order approximation, while JBR measures loss sensitivity using the model predicted class labels for confident samples. Under the specific scenario where ground-truth labels are fully correct and model predictions are fully accurate on the training samples, the two become equivalent. The main motivation behind JBR is that it models pruning as a special case of parameter perturbation noise under the general TPV framework.

\subsection{Pruning Experiments}
\label{sec:pruning_main}

We evaluate whether the TPV-motivated pruning criterion (JBR) improves accuracy–compression tradeoffs relative to standard groupwise criteria.
Following the OBC pruning protocol, we perform global channel pruning with iterative removal and recomputation of importance scores.
We compare JBR against Jacobian, L1-norm, BatchNorm-scale, FPGM, WHC, Taylor, and Random.
We prune two ImageNet models (ResNet-50, MobileNet-V2) at $50\%$ global sparsity and two CIFAR models (ResNet-56 on CIFAR-10, VGG-19-BN on CIFAR-100) at $90\%$ sparsity.
Each model is pruned iteratively for 18 steps, averaging results over 5 independent runs. Results are shown in Fig. \ref{fig:cifar_jbr} and Fig. \ref{fig:imgnet_jbr} in terms of MACs. Across all architectures, the JBR criterion consistently matches or exceeds the performance of all baselines. See Appendix \ref{app:pruning_epx} for details.

\begin{figure}
    \centering

    \begin{subfigure}{0.49\linewidth}
        \centering
        \includegraphics[width=\linewidth]{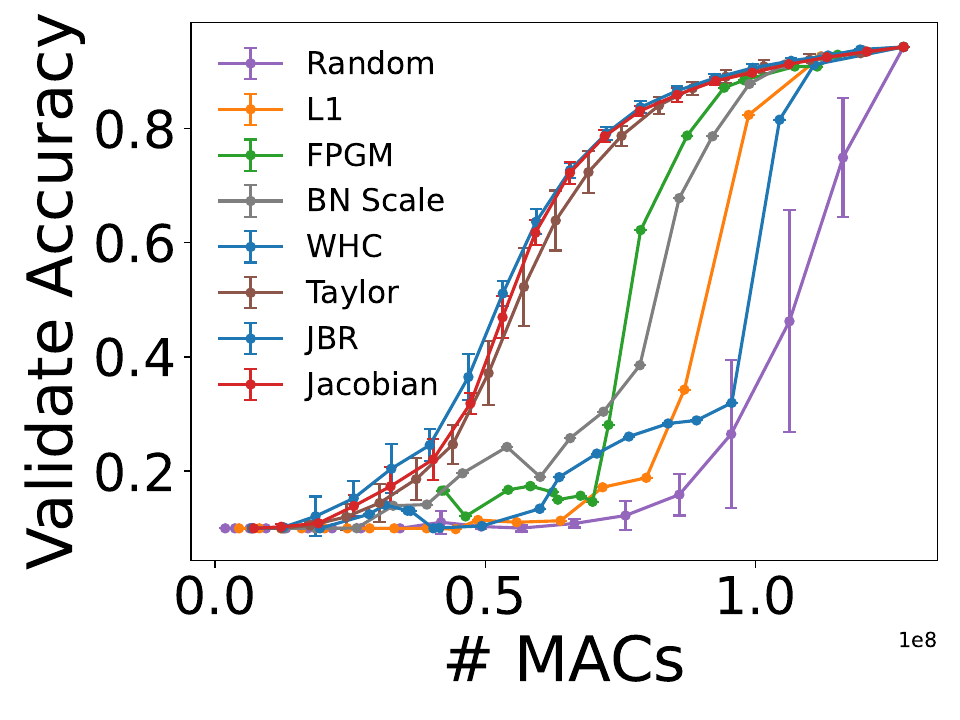}
    \end{subfigure}
    \hfill
    \begin{subfigure}{0.49\linewidth}
        \centering
        \includegraphics[width=\linewidth]{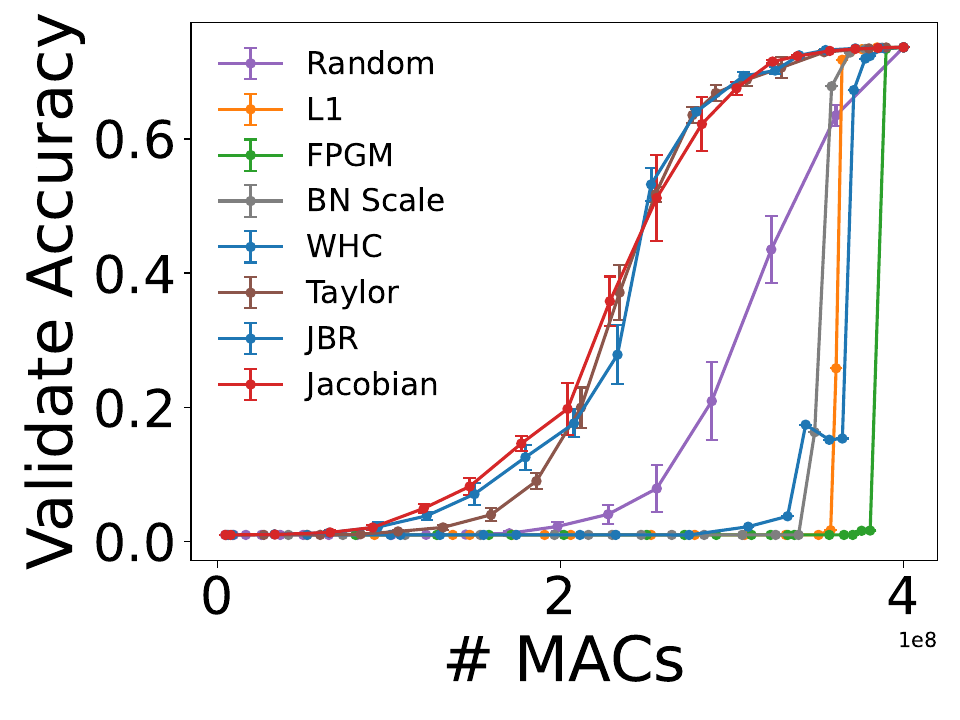}
    \end{subfigure}
    \vspace{-10pt}
    \caption{Pruning results of various criteria on Cifar-10 with ResNet-56 (left) and Cifar-100 with VGG-19 (right). JBR matches or outperforms existing methods.}
    \label{fig:cifar_jbr}
\end{figure}

\begin{figure}
    \centering

    \begin{subfigure}{0.49\linewidth}
        \centering
        \includegraphics[width=\linewidth]{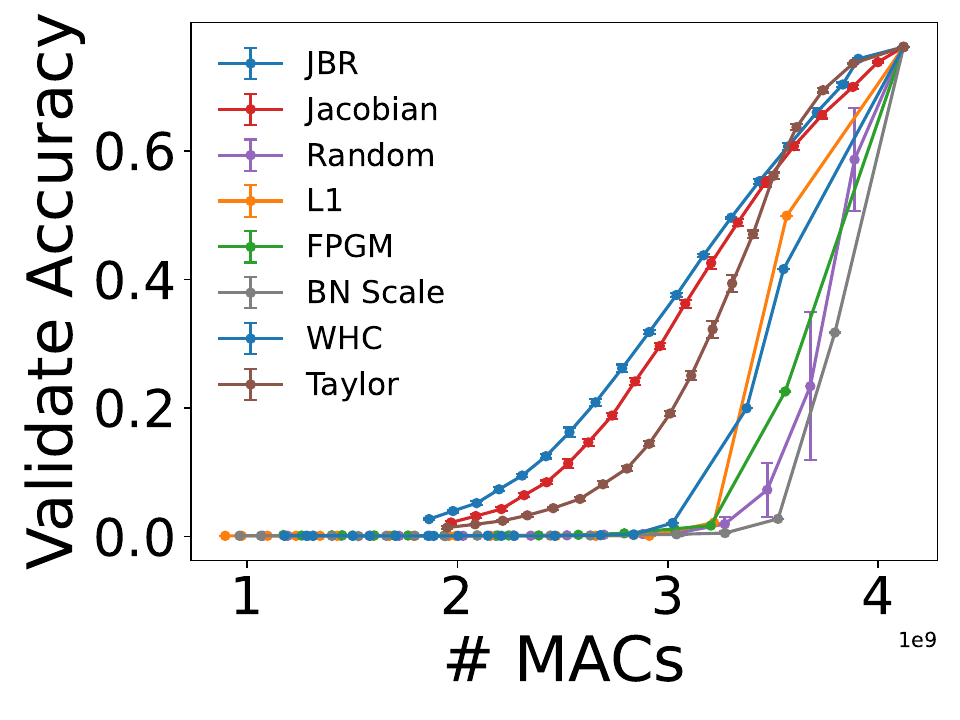}
    \end{subfigure}
    \hfill
    \begin{subfigure}{0.49\linewidth}
        \centering
        \includegraphics[width=\linewidth]{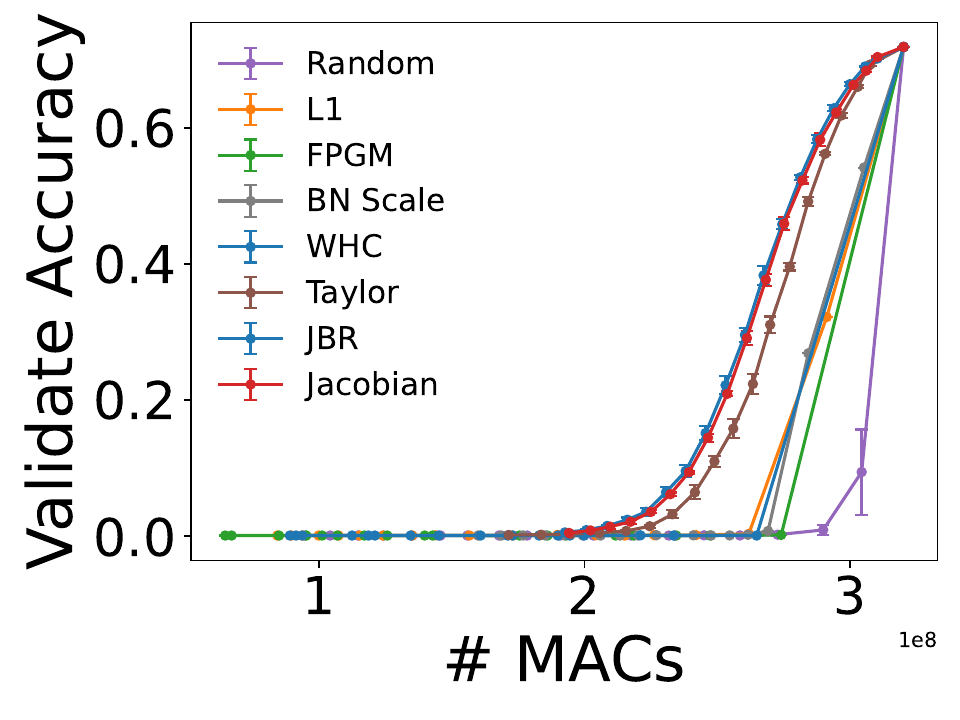}
    \end{subfigure}
    \vspace{-10pt}
    \caption{Pruning results of various criteria on ImageNet dataset using ResNet-50 (left) and MobileNet-v2 (right) without fine-tuning. JBR matches or outperforms existing methods.}
    \label{fig:imgnet_jbr}
\end{figure}

\subsection{Pruning Experiment Details}
\label{app:pruning_epx}

\begin{figure}
    \centering

    \begin{subfigure}{0.49\linewidth}
        \centering
        \includegraphics[width=\linewidth]{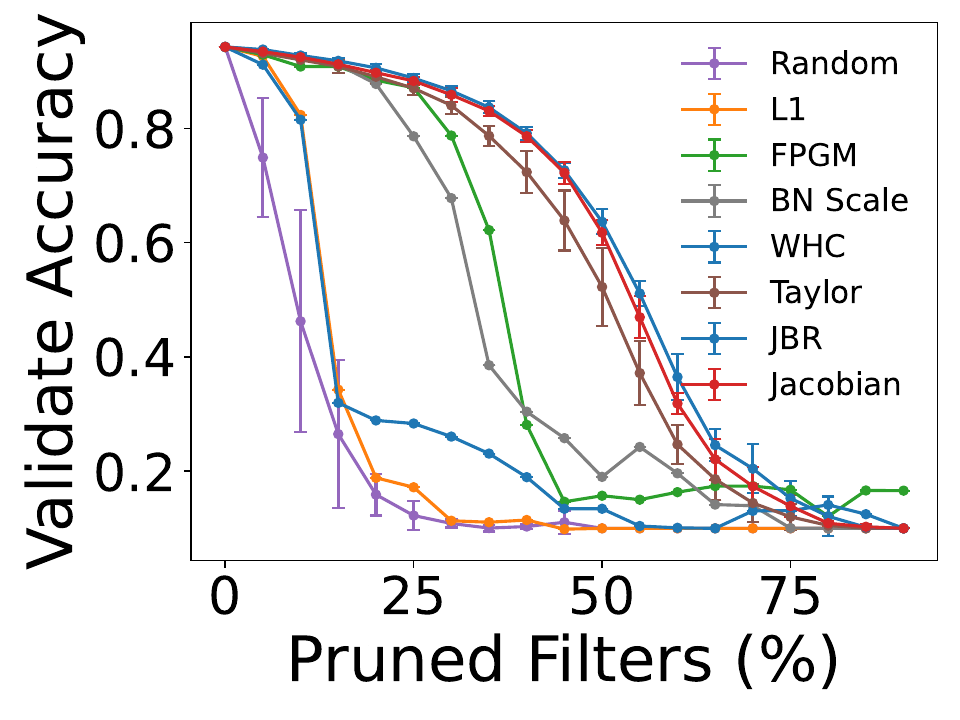}
    \end{subfigure}
    \hfill
    \begin{subfigure}{0.49\linewidth}
        \centering
        \includegraphics[width=\linewidth]{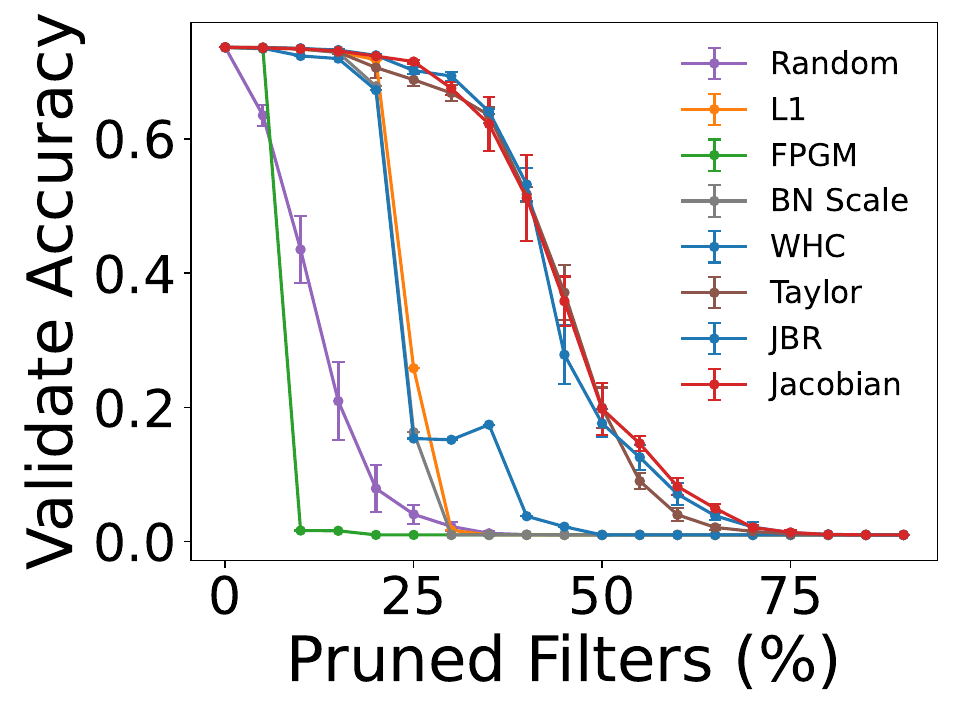}
    \end{subfigure}
    \vspace{-10pt}
    \caption{Pruning results of various criteria on Cifar-10 with ResNet-56 (left) and Cifar-100 with VGG-19 (right). JBR matches or outperforms existing methods.}
    \label{fig:cifar_jbr_num_params}
\end{figure}

\begin{figure}
    \centering

    \begin{subfigure}{0.49\linewidth}
        \centering
        \includegraphics[width=\linewidth]{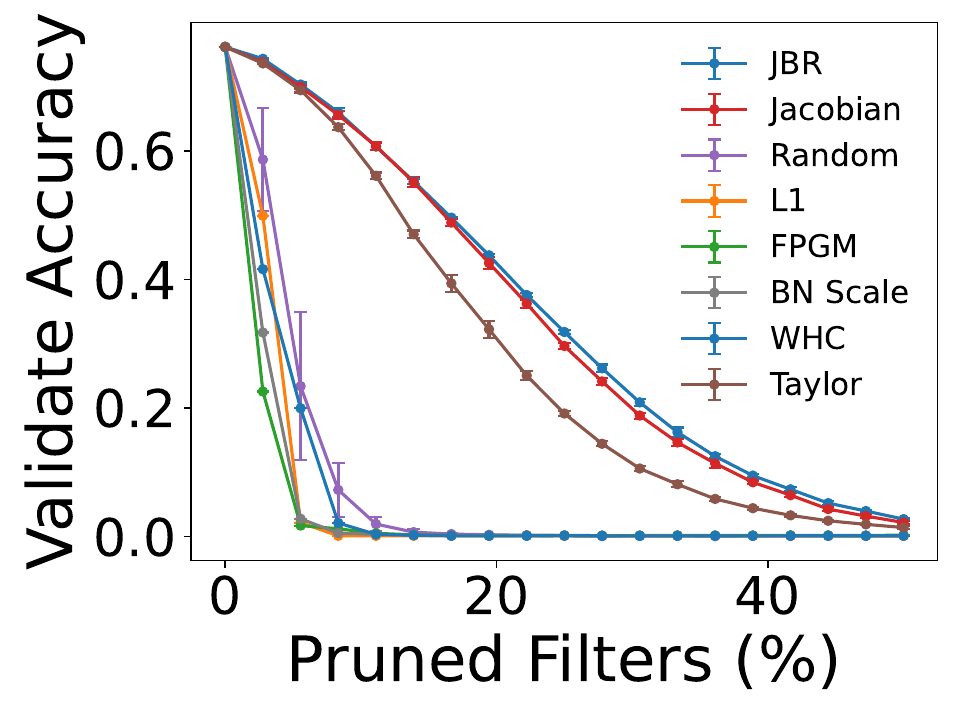}
    \end{subfigure}
    \hfill
    \begin{subfigure}{0.49\linewidth}
        \centering
        \includegraphics[width=\linewidth]{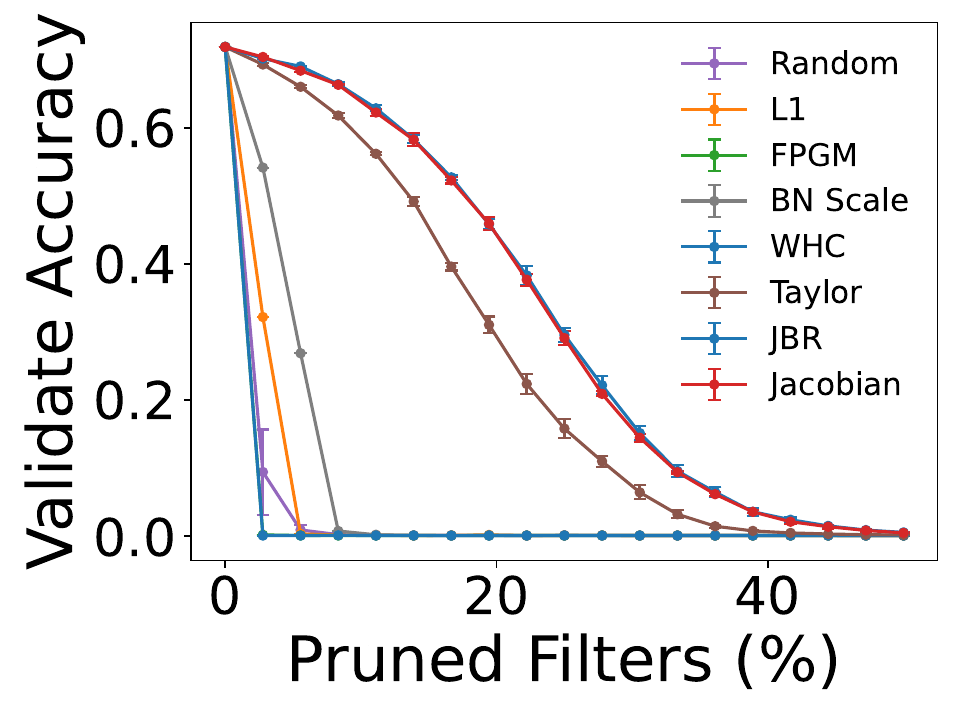}
    \end{subfigure}
    \vspace{-10pt}
    \caption{Pruning results of various criteria on ImageNet dataset using ResNet-50 (left) and MobileNet-v2 (right) without fine-tuning. JBR matches or outperforms existing methods.}
    \label{fig:imgnet_jbr_num_params}
\end{figure}

This appendix provides the experimental details for the pruning results reported in Section~\ref{sec:pruning_main}.
All pruning experiments follow the global structured pruning protocol introduced in \emph{Optimal Brain Compression (OBC)}~\citep{chen2025optimal}.
Our implementation is based on the official GitHub code of \cite{chen2025optimal} and extends the OBC framework by adding the TPV-motivated JBR importance criteria.

\subsubsection{Pruning Framework}

We adopt the global channel-pruning pipeline of \citet{chen2025optimal}.
At each pruning iteration, an importance score is computed for every pruning group (typically convolutional or linear output channels, together with the corresponding input-channel dependencies).
The least-important groups are removed globally, and the dependency graph ensures architectural consistency.
No fine-tuning is performed between pruning iterations.

All pruning criteria evaluated in this work share the same dependency framework, pruning granularity, and iteration schedule.
Thus, differences in accuracy arise solely from differences in importance scoring.

Fig. \ref{fig:cifar_jbr_num_params} and Fig. \ref{fig:imgnet_jbr_num_params} show the same experiments as the ones in the main text, except the x-axis is now the percentage of pruned parameters.

\subsubsection{Models and Datasets}

We evaluate four standard classification settings:

\begin{itemize}
\item \textbf{ImageNet-1k}:
\begin{itemize}
\item ResNet-50, pruned to \textbf{$50\%$} global channel sparsity.
\item MobileNet-V2, pruned to \textbf{$50\%$} global channel sparsity.
\end{itemize}
\item \textbf{CIFAR-10}:
\begin{itemize}
\item ResNet-56, pruned to \textbf{$90\%$} global sparsity.
\end{itemize}
\item \textbf{CIFAR-100}:
\begin{itemize}
\item VGG19-BN, pruned to \textbf{$90\%$} global sparsity.
\end{itemize}
\end{itemize}

\subsubsection{Pruning Criteria Compared}

We compare the TPV-based \textbf{JBR} criterion with several established structured pruning criteria:

\begin{itemize}
\item \textbf{Jacobian} \citep{chen2025optimal},
\item \textbf{L1} weight-norm \citep{Li2017Pruning},
\item \textbf{Random} pruning,
\item \textbf{BatchNorm scale} \citep{liu2017learning},
\item \textbf{FPGM}~\citep{he2019filter} ,
\item \textbf{WHC} \citep{chen2023whc},
\item \textbf{Taylor} first-order saliency \citep{molchanov2019importance}.
\end{itemize}

\subsubsection{Iterative Pruning Schedule}

All pruning experiments use the following schedule:

\begin{itemize}
\item \textbf{Global pruning}: channel groups ranked and removed across the whole network.
\item \textbf{Number of iterations}: 
18.
\item \textbf{Sparsity targets}:
\begin{itemize}
\item 
0.5 for ImageNet models,
\item 
0.9 for CIFAR models.
\end{itemize}
\item \textbf{Score recomputation}: importance scores recalculated at every iteration.
\item \textbf{Data usage}:
\begin{itemize}
\item 
50 minibatches with batch size 256 for ImageNet scoring,
\item 50 minibatches with batch size 128 CIFAR datasets' scoring.
\end{itemize}
\end{itemize}

All reported scores are averaged over 5 independent pruning runs.


\section{Applications}
\label{app_application}

We present the details behind the several model/recipe selection experiment in \S \ref{sec:model-selection}. We note that TPV based model selection is much more robust in the within-architecture settings compared to cross-architecture settings, and in general holds in the low training loss regime. Also importantly, training set TPV based model selection relies on TPV stability (training set TPV $\approx$ test-set TPV).

\subsection{In-distribution training recipe selection: Joint-Regularizer Configurations on CIFAR-10}
\label{app:multireg}

This section documents the experimental setup behind
Fig.~\ref{fig:multireg_cifar} in Section~\ref{sec:model-selection}.

\paragraph{Setup:}
We use a CIFAR-10 subset of 10{,}000 training and 4{,}000 test images
(drawn with a fixed RNG seed), normalized with dataset statistics.
The architecture family is a CIFAR-adapted ResNet-18 with controllable
base channel width $C$; the four width variants have
$C \in \{8, 16, 24, 32\}$, yielding parameter counts of approximately
176k, 701k, 1{,}575k, and 2{,}797k respectively.
Each residual stage uses two $3\!\times\!3$ convolutional blocks with
batch normalization; spatial resolution is halved at stages 2--4 via
strided convolutions, and a CIFAR-style stem (no max-pooling) preserves
the $32\!\times\!32$ input resolution at stage 1.

Reference models are trained for 300 epochs with SGD (Nesterov momentum
$0.9$, initial learning rate $5\!\times\!10^{-2}$, step-decay by factor
$0.1$ every 100 epochs, batch size 256).
All three regularizers---weight decay, dropout (Dropout2d inside each
residual block), and label smoothing---are applied simultaneously
according to the sampled configuration.
Test and train CE losses are always evaluated without label smoothing for
comparability across configurations.

TPV is estimated via noisy-logit fine-tuning with $R=20$ independent
noise draws, additive Gaussian noise with standard deviation $\sigma = c \, \sigma_0$, where $\sigma_0=0.1$ and $c$ is the
mean absolute logit value of the reference model (adaptive noise
scaling), and 20 fine-tuning epochs at learning rate $10^{-4}$. See Appendix \ref{sec_label_noise_algorithm} for details on the Label Noise TPV estimation algorithm.

\paragraph{Candidate grids:}
Rather than sweeping a single regularizer, we sample five \emph{joint}
configurations, each specifying a triple (weight decay, dropout,
label smoothing) drawn independently from fixed candidate grids:
\begin{itemize}
    \item Weight decay: $\{0,\;10^{-5},\;5\!\cdot\!10^{-4}\}$.
    \item Dropout: $\{0,\;0.1,\;0.2,\;0.3,\;0.5\}$.
    \item Label smoothing: $\{0,\;0.05,\;0.1,\;0.2,\;0.3\}$.
\end{itemize}
Zero is included in every grid so that ``no regularization on this
axis'' is a possible outcome of sampling.

We draw $N=5$ joint configurations using a fixed RNG seed
(seed~$0$ in the reported run).
For each configuration $c\in\{0,\ldots,4\}$, one value is independently
sampled from each of the three grids.
The same five configurations are used for every width variant, so each
(width, configuration) pair defines one reference model.
The sampled configurations used in Fig.~\ref{fig:multireg_cifar} are:
\begin{itemize}
    \item C0: $w_\mathrm{WD}=5\!\cdot\!10^{-4}$,
              $p_\mathrm{drop}=0.3$,
              $\epsilon_\mathrm{LS}=0.1$ \quad(moderate multi-axis regularization)
    \item C1: $w_\mathrm{WD}=0$,\phantom{$\cdot\!10^{-4}$}\quad
              $p_\mathrm{drop}=0.1$,
              $\epsilon_\mathrm{LS}=0$\phantom{.1} \quad(very light regularization)
    \item C2: $w_\mathrm{WD}=0$,\phantom{$\cdot\!10^{-4}$}\quad
              $p_\mathrm{drop}=0$,\phantom{.1}\;
              $\epsilon_\mathrm{LS}=0$\phantom{.1} \quad(no regularization)
    \item C3: $w_\mathrm{WD}=5\!\cdot\!10^{-4}$,
              $p_\mathrm{drop}=0.3$,
              $\epsilon_\mathrm{LS}=0.3$\quad(strong multi-axis regularization)
    \item C4: $w_\mathrm{WD}=10^{-5}$,\phantom{$\cdot10^{-4}$}
              $p_\mathrm{drop}=0.3$,
              $\epsilon_\mathrm{LS}=0.3$\quad(weak WD, strong dropout and label smoothing)
\end{itemize}
No single regularizer orders these five configurations consistently:
by weight decay the ranking is
$\text{C1}\!=\!\text{C2}\!<\!\text{C4}\!<\!\text{C0}\!=\!\text{C3}$,
whereas by label smoothing it is
$\text{C1}\!=\!\text{C2}\!<\!\text{C0}\!<\!\text{C3}\!=\!\text{C4}$.

\paragraph{Observation:}
Across all four width variants, configuration C0 achieves the lowest
test CE loss, and TPV correctly identifies it as the preferred
configuration regardless of model size.
Configurations C1 and C2 (little or no regularization) achieve low
training loss but substantially higher test loss, and they receive the
largest TPV values---consistent with overfitting increasing sensitivity
to parameter perturbations.

\subsection{In-distribution Cross-architecture model selection: ImageNet pretrained models}
\label{app:exp:imgnet-logit-noise}

This section documents the experimental setup behind
Fig.~\ref{fig:imgnet_tpv_label_noise} in Section~\ref{sec:model-selection}.

\paragraph{Setup:}
We evaluate TPV as a predictor of top-1 validation accuracy across seven
diverse pretrained ImageNet architectures: ResNet-18, ResNet-50,
Wide ResNet-50-2, ShuffleNet V2 ($\times$1.0), EfficientNet-B0,
MNASNet 1.0, and ConvNeXt-Tiny.
All models are loaded with standard \texttt{torchvision} pretrained weights
(trained on the full ImageNet-1k training set) and evaluated without any
further fine-tuning.

Data is drawn from ImageNet using a fixed RNG seed (seed~$0$): 10{,}000
training images and 10{,}000 validation images are sampled without
replacement from the respective splits.
Images are preprocessed with standard normalization only (no data
augmentation), using batch size 512 for teacher logit computation and
batch size 64 for noisy fine-tuning and evaluation.

\paragraph{TPV estimation:}
TPV is estimated via noisy-logit fine-tuning with $R=5$ independent
noise draws, additive Gaussian noise with standard deviation $\sigma = c \, \sigma_0$, where $\sigma_0=0.1$ and $c$ is the
mean absolute logit value of the reference model (adaptive noise
scaling), and using AdamW for 5 fine-tuning epochs at learning rate $10^{-8}$, no proximity regularization and no weight decay. 
The DataLoader shuffle order is held fixed across runs (using fixed seed) so that only the logit noise varies between runs. See Appendix \ref{sec_label_noise_algorithm} for details on the Label Noise TPV estimation algorithm.

Fig.~\ref{fig:imgnet_tpv_label_noise} reports train and validation set TPV against the baseline top-1 validation accuracy of the clean reference model.

\subsection{Transfer Learning with label noise: Training Recipe for Oxford Pets Dataset}
\label{app:oxford_pets}

This section documents the experimental setup behind
Fig.~\ref{fig:oxford_pets_tpv} in Section~\ref{sec:model-selection}.

\paragraph{Dataset and task:}
We use the Oxford-IIIT Pets dataset~\citep{parkhi12pets}, which contains
$3{,}680$ \texttt{training} images and $3{,}669$ \texttt{test} images
across 37 fine-grained pet categories.
All images are resized to $224{\times}224$ with standard ImageNet
normalization ($\mu{=}[0.485,\,0.456,\,0.406]$,
$\sigma{=}[0.229,\,0.224,\,0.225]$).
We use the \texttt{test} split as a held-out validation set throughout.
To simulate a noisy downstream fine-tuning scenario,
$10\%$ of the \texttt{training set} labels are flipped uniformly at random
to a different class (fixed seed); the validation set always uses
clean labels.

\paragraph{Backbones and head replacement:}
We evaluate four ImageNet-pretrained torchvision backbones:
ResNet-18, ResNet-50, EfficientNet-B0, and ConvNeXt-Tiny.
The original classification head of each backbone is replaced with a
fresh randomly-initialized Linear$(d,\,37)$ layer.

\paragraph{Two-phase fine-tuning protocol:}
Each (backbone, regularization configuration) cell is fine-tuned via a
two-phase protocol using AdamW throughout.
\emph{Phase~1} (head-only warm-up): the backbone is frozen and only the
new classification head is trained for $1$ epoch at learning rate
$10^{-3}$, preventing large random-head gradients from disrupting the
pretrained backbone features at the start of training.
\emph{Phase~2} (joint fine-tuning): all parameters are unfrozen and
trained jointly for $10$ epochs with a learning rate of $10^{-3}$, with the learning rate divided by $10$ at epoch $5$.
The cross-entropy loss uses the label-smoothing value $\varepsilon_\mathrm{LS}$
specified by the regularization configuration. Each of the final finetuned model is referred to as a reference model.

\paragraph{Regularization configurations:}
Following the CIFAR-10 joint multi-regularization experiment (\S\ref{app:multireg}),
we sample $N{=}5$ joint configurations
$(w_\mathrm{WD},\, p_\mathrm{drop},\, \varepsilon_\mathrm{LS})$
independently from the candidate grids
\begin{align*}
w_\mathrm{WD}          &\in \{0,\;10^{-6},\;5{\times}10^{-6},\;
                              10^{-5},\;5{\times}10^{-5},\;10^{-4}\}, \\
p_\mathrm{drop}        &\in \{0,\;0.1,\;0.2\}, \\
\varepsilon_\mathrm{LS}&\in \{0,\;0.05,\;0.1\},
\end{align*}
using a fixed random seed, and apply the same five configurations to
every backbone.
The sampled configurations are:
C0: $(w_\mathrm{WD}{=}10^{-4},\,p_\mathrm{drop}{=}0.1,\,\varepsilon_\mathrm{LS}{=}0.05)$;
C1: $(10^{-6},\,0,\,0)$;
C2: $(0,\,0,\,0)$;
C3: $(5{\times}10^{-5},\,0.1,\,0.1)$;
C4: $(10^{-5},\,0.1,\,0.1)$.

\paragraph{TPV estimation:}
At each $w^\star$ we estimate training-set TPV via the
noisy-logit fine-tuning procedure described in Appendix~\ref{sec_label_noise_algorithm}.
Concretely, we add i.i.d.\ Gaussian noise with standard deviation $\sigma = c \, \sigma_0$, where $\sigma_0=0.1$ and $c$ is the
mean absolute logit value of the reference model (adaptive noise
scaling),
then fine-tune a copy of the model on these noisy regression targets
using AdamW at learning rate $10^{-6}$ for $5$ epochs with no
regularization and no proximity penalty.
The empirical TPV is computed as the mean squared difference between
the perturbed and reference logits over the training set,
averaged over $R{=}5$ independent noise draws.
All fine-tuning for TPV estimation is performed in \texttt{eval} mode
(batch normalization and dropout inactive) to isolate the effect of
label noise. See Appendix \ref{sec_label_noise_algorithm} for details on the Label Noise TPV estimation algorithm.

\paragraph{Results:}
Figure~\ref{fig:oxford_pets_tpv} shows training-set TPV versus
validation accuracy for all (backbone, configuration) cells.
The faded markers on the right axis show training cross-entropy on the
noisy labels at $w^\star$; all cells achieve low train CE (${\leq}0.10$),
suggesting that the fine-tuning reaches the low training loss regime.

Within each backbone, configurations with lower TPV consistently achieve
higher validation accuracy, with the dashed trend lines sloping
downward across all four architectures.
This mirrors the within-architecture pattern observed for CIFAR-10 MLPs
in Figure~\ref{fig:multireg_cifar} and confirms that
training-set TPV aggregates the joint effect of weight decay, dropout,
and label smoothing into a single scalar ranking without access to
any test labels.
Across architectures, the ordering is less preserved.
Even though the  overall TPV-validation accuracy trend remains negatively correlated, there is variation in validation accuracy across different architectures at similar TPV values. This variation needs further investigation as it hurts cross-architecture model selection reliability using TPV and we leave this for future work.

We point out that our use of Adam in the two-phase finetuning process above to achieve the reference models ensures a small training loss at the end, which is important because TPV-test loss correlation is positive in the small training loss regime (see section \ref{sec:bias_variance}). In our preliminary experiments, we found that using SGD instead, which is known to achieve better generalization compared to Adam on vision tasks (especially in the presence to label noise), ended up at a much higher training loss in our fixed budget finetuning process, due to less memorization of the corrupted labels introduced in the training set. In this high training loss regime, we found that TPV-test loss correlation was poor as expected in this regime (see section \ref{sec:bias_variance}).

\begin{figure}[t]
  \centering
  \includegraphics[width=\linewidth]{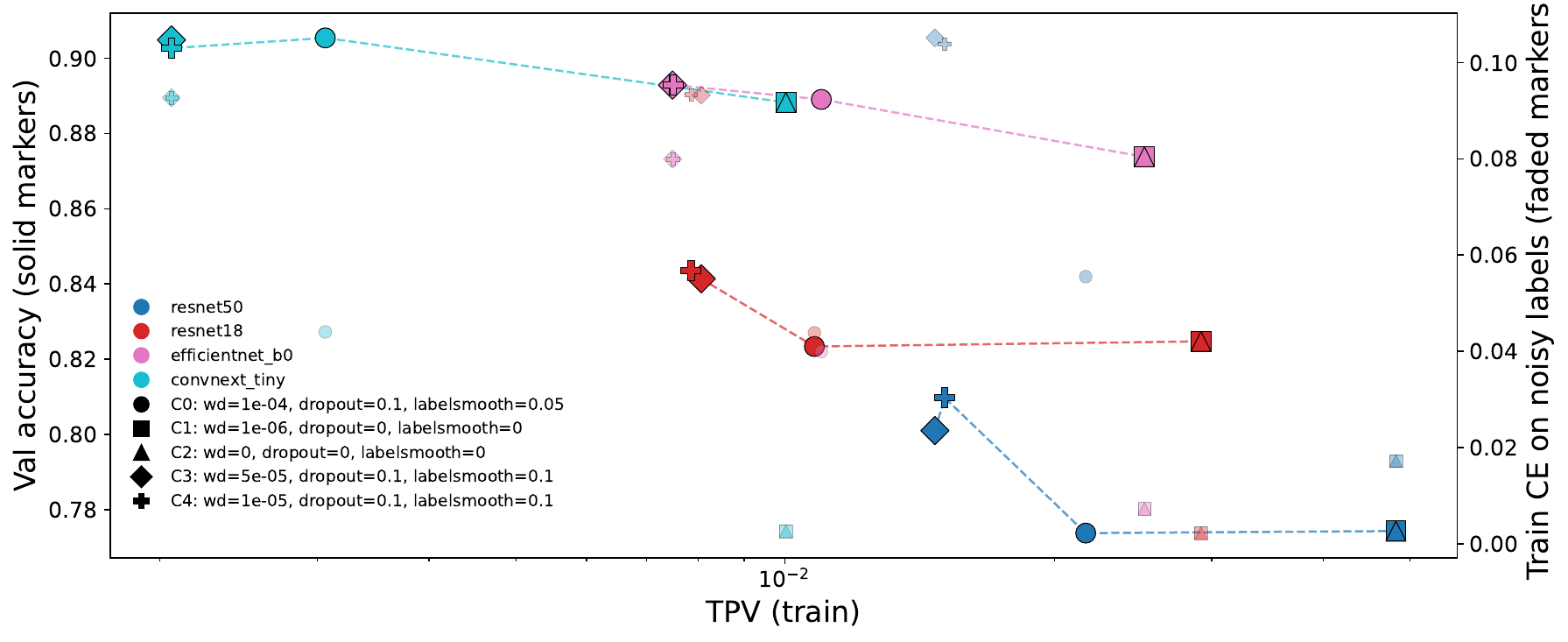}
  \caption{
    \textbf{TPV vs.\ validation accuracy on Oxford Pets
    under jointly-varied regularization.}
    Each color is one backbone; each marker shape is one of five joint
    regularization configurations
    (weight decay, dropout, label smoothing)
    sampled once and reused across all backbones.
    Solid markers (left axis): validation accuracy on clean labels.
    Faded markers (right axis): training cross-entropy on $10\%$-noisy
    label training set at $w^\star$.
    Dashed lines connect same-backbone points sorted by TPV.
    Within each backbone, lower TPV tracks higher validation accuracy,
    demonstrating that training-set TPV serves as a reliable
    label-free model-selection signal in the transfer-learning regime.
  }
  \label{fig:oxford_pets_tpv}
\end{figure}

\subsection{Sensitivity to Label Noise: Training Recipe Selection}
\label{app:wd_sweep}

This section documents the experimental setup behind
Fig.~\ref{fig:wd_sweep} in Section~\ref{sec:model-selection}.

\paragraph{Setup}
We train a four-layer ReLU MLP (input dimension 20, width 256, output 1)
on a synthetic Gaussian linear teacher ($n_{\mathrm{train}}=3000$,
$n_{\mathrm{test}}=5000$) using full-batch AdamW (lr$=3\times10^{-3}$,
100 epochs) under seven weight-decay values
$\lambda\in \{10^{-5},\,10^{-4},\,5 \times 10^{-4},\,10^{-3},\,3\times10^{-3},\,10^{-2},\,10^{-1}\}$.
The resulting models $w^\star(\lambda)$ serve as reference points.

\paragraph{Sharpness (SGD Noise TPV) estimation}
For each $w^\star(\lambda)$ we estimate
$\mathrm{Tr}(H_{\mathrm{eff}}(w^\star))$
using the doubly-stochastic Hutchinson Jacobian estimator with 100 Rademacher vectors.

We use $\mathrm{Tr}(H_{\mathrm{eff}})$ rather than the loss-Hessian
trace $\mathrm{Tr}(\nabla^2 L(w^\star))$ deliberately. The two
coincide only at squared-loss minima with small residuals
(Appendix~\ref{sec_h_eff_hessian}); off
that regime they differ by the residual term
$R(w^\star)=\frac{1}{n}\sum_i \varepsilon_i(w^\star)\,\nabla^2_w
f(x_i;w^\star)$, which is a function of the optimizer trajectory
rather than the local Jacobian geometry. Since the trained models
in this experiment are not near zero-residual minima, using the
loss-Hessian trace would conflate Jacobian curvature with
trajectory-dependent residual effects. $\mathrm{Tr}(H_{\mathrm{eff}})$
is the cleaner, more robust quantity for testing whether
``Jacobian-spectral'' sharpness predicts label-noise sensitivity.

\paragraph{Label Noise TPV estimation}
We use additive Gaussian noise with standard deviation $\sigma = c \, \sigma_0$, where $\sigma_0=0.1$ and $c$ is the
mean absolute logit value of the reference model (adaptive noise
scaling), $R=50$ replicates. For each replicate we add
i.i.d.\ Gaussian noise to
the logits of $w^{\star}$ on the training inputs, reload $w^{\star}$,
and fine-tune the model with MSE regression against the noisy target
logits for $20$ epochs using full-batch SGD on MSE loss (lr$=3\times10^{-3}$, momentum$=0.9$, $\lambda_{\mathrm{wd}}=0$, eval mode). We set proximity
regularization weight $\gamma=0$. See
Appendix~\ref{sec_label_noise_algorithm} for details on the label-noise
TPV estimation algorithm.

\paragraph{Results}
Figure~\ref{fig:wd_sweep} (main text) shows the key scatter plots.
Across the weight-decay sweep, training-set label-noise TPV tracks
test-set label-noise TPV closely (Spearman correlation near $1$), so
it correctly orders configurations by their actual label-noise
sensitivity using only training-set quantities. Sharpness, in
contrast, exhibits a strong \emph{negative} Spearman correlation with
test-set label-noise TPV in this sweep: the configuration with the
lowest sharpness is in fact the most sensitive to label noise.

\paragraph{Why sharpness can invert in sign}
This sign inversion is not an artifact; it follows directly from
Theorem~\ref{thm:tpv-label}. Both quantities are functions of
the singular values $\{s_i\}$ of the output-parameter Jacobian $J$ at
$w^\star$, but they emphasize opposite ends of the spectrum:
\begin{align*}
\mathrm{TPV}_{\mathrm{SGD}} &\approx \frac{\eta \sigma_\varepsilon^2}{2b}\,\mathrm{Tr}(H_{\mathrm{eff}}) = \frac{\eta \sigma_\varepsilon^2}{2b}\, \sum_i s_i^2,
&\text{(dominated by the largest $s_i$)} \\
\mathrm{TPV}_{\mathrm{label}} &\approx \sigma_\varepsilon^2
\sum_i \frac{B_{ii}}{s_i^2}.
&\text{(dominated by the smallest nonzero $s_i$)}
\end{align*}
Regularization choices that primarily reshape the small-$s_i$ end of
the spectrum therefore induce changes in label-noise TPV that
sharpness cannot detect, and can even drive the two quantities in
opposite directions across a sweep. We do not claim this sign
inversion is universal --- it depends on which part of the Jacobian
spectrum the regularizer most affects --- but its very possibility
demonstrates that sharpness is an \emph{unreliable} predictor of
label-noise sensitivity. Training-set label-noise TPV, by contrast,
measures the same Jacobian-spectral object as test-set label-noise
TPV and is therefore robust by construction
(Theorem~\ref{thm:tpv-trace-stability}). The practical implication is that
a practitioner can use training-set label-noise TPV to select among
candidate training recipes without access to clean test labels, with
confidence that the ranking it produces will track test-time
robustness to label noise.

\end{document}